\theoremstyle{definition}
\newtheorem{definition}{Definition}[section]
\newtheorem{theorem}{Theorem}[section]
\newtheorem{corollary}{Corollary}[theorem]
\pgfplotsset{compat=1.17}
\definecolor{class1_br}{rgb}{1, 0.349, 0.392}
\definecolor{class2_br}{rgb}{0.086, 0.858, 0.576}
\definecolor{class3_br}{rgb}{0.937, 0.917, 0.352}
\definecolor{class4_br}{rgb}{0, 0.486, 0.745}
\definecolor{codegreen}{rgb}{0,0.6,0}
\definecolor{codegray}{rgb}{0.5,0.5,0.5}
\definecolor{codepurple}{rgb}{0.58,0,0.82}
\definecolor{backcolour}{RGB}{255, 247, 237}
\lstdefinestyle{mystyle}{
    backgroundcolor=\color{backcolour},   
    commentstyle=\color{codegreen},
    keywordstyle=\color{blue},
    numberstyle=\tiny\color{codegray},
    stringstyle=\color{codepurple},
    basicstyle=\footnotesize\ttfamily,
    breakatwhitespace=false,         
    breaklines=true,                 
    captionpos=b,                    
    keepspaces=true,                 
    numbers=left,
    frame=bt,
    numbersep=5pt,                  
    showspaces=false,                
    showstringspaces=false,
    showtabs=false,                  
    tabsize=2
}
\newcommand{\matr}[1]{\bm{#1}}      % ISO complying version
\let\conjugate\overline
\definecolor{lime}{HTML}{A6CE39}
\DeclareRobustCommand{\orcidicon}{%
	\begin{tikzpicture}
	\draw[lime, fill=lime] (0,0) 
	circle [radius=0.16] 
	node[white] {{\fontfamily{qag}\selectfont \tiny ID}};
	\draw[white, fill=white] (-0.0625,0.095) 
	circle [radius=0.007];
	\end{tikzpicture}
	\hspace{-2mm}
}
\xdef\csname orcid\x\endcsname{\noexpand\href{https://orcid.org/\csname orcidauthor\x\endcsname}{\noexpand\orcidicon}}
\newcommand{\orcid}[1]{\href{https://orcid.org/#1}{\textcolor[HTML]{A6CE39}{\orcidicon}}}
\newacronym{psk}{PSK}{Phase-Shift Keying}
\newacronym{qam}{QAM}{Quadrature Amplitude Modulation}
\newacronym{autodiff}{autodiff}{automatic differentiation}
\newacronym{api}{API}{Application Programming Interface}
\newacronym{ux}{UX}{User eXperience}
\newacronym{cn}{CN}{Complex Normal}
\newacronym{oa}{OA}{Overall Accuracy}
\newacronym{aa}{AA}{Average Accuracy}
\newacronym{bn}{BN}{Batch Normalization}
\newacronym{mse}{MSE}{Mean Squared Error}
\newacronym{em}{EM}{ElectroMagnetic}
\newacronym{ai}{AI}{Artificial Intelligence}
\newacronym{ml}{ML}{Machine Learning}
\newacronym{sgd}{SGD}{Stochastic Gradient Descent}
\newacronym{rmsprop}{RMSprop}{Root Mean Square Propagation}
\newacronym{sar}{SAR}{Synthetic Aperture Radar}
\newacronym{polsar}{PolSAR}{Polarimetric Synthetic Aperture Radar}
\newacronym{polinsar}{PolInSAR}{Polarimetric and Interferometric Synthetic Aperture Radar}
\newacronym{cvnn}{CVNN}{Complex-Valued Neural Network}
\newacronym{rvnn}{RVNN}{Real-Valued Neural Network}
\newacronym{cv-mlp}{CV-MLP}{Complex-Valued MultiLayer Perceptron}
\newacronym{rv-mlp}{RV-MLP}{Real-Valued MultiLayer Perceptron}
\newacronym{mlp}{MLP}{MultiLayer Perceptron}
\newacronym{cnn}{CNN}{Convolutional Neural Network}
\newacronym{cv-cnn}{CV-CNN}{Complex-Valued Convolutional Neural Network}
\newacronym{rv-cnn}{RV-CNN}{Real-Valued Convolutional Neural Network}
\newacronym{rv-fcnn}{RV-FCNN}{Real-Valued Fully Convolutional Neural Network}
\newacronym{cv-fcnn}{CV-FCNN}{Complex-Valued Fully Convolutional Neural Network}
\newacronym{fcnn}{FCNN}{Fully Convolutional Neural Network}
\newacronym{gan}{GAN}{Generative Adversarial Network}
\newacronym{unet}{U-NET}{U-Network}
\newacronym{cv-unet}{CV-UNET}{Complex-Valued U-Network}
\newacronym{cv-cae}{CV-CAE}{Complex-Valued Convolutional AutoEncoder}
\newacronym{relu}{ReLU}{Rectified Linear Unit}
\newacronym{tanh}{tanh}{hyperbolic tangent}
\newacronym{1hl}{1HL}{one hidden layer}
\newacronym{2hl}{2HL}{two hidden layers}
\newcommand{\loss}{\mathcal{L}}
\newcommand{\activation}{\sigma}
\newcommand{\hiddensize}[1]{N_{#1}}
\newcommand{\beforeact}{V}
\newcommand{\weigth}{\omega}
\newcommand{\layerout}{X}
\renewcommand{\Im}{\mathrm{Im}}
\renewcommand{\Re}{\mathrm{Re}}
\newcommand{\im}{i}
\title{Theory and implementation of Complex-Valued Neural Networks
%%%% Cite as
%%%% Update your official citation here when published 
\thanks{\textit{\underline{Citation}}: 
\textbf{Authors. Theory and implementation of Complex-Valued Neural Networks. Pages.... DOI:000000/11111.}} 
}
\author{
    J. A. Barrachina$^{\star \dagger}$\orcid{0000-0002-2139-514X},
    C. Ren$^{\star}$\orcid{0000-0001-8438-4539},
    G. Vieillard$^{\dagger}$,
    C. Morisseau$^{\dagger}$,
    and J.-P. Ovarlez $^{\star \dagger}$\orcid{0000-0001-8056-4196}
}
\affil{$^{\dagger}$ DEMR, ONERA, Universit\'e Paris-Saclay, France}
\affil{$^{\star}$ SONDRA, CentraleSup\'elec,  Universit\'e Paris-Saclay,  91192, Gif-sur-Yvette, France}
\begin{document}
\maketitle

\begin{abstract}
This work explains in detail the theory behind \acrfull{cvnn}, including Wirtinger calculus, complex backpropagation and basic modules such as complex layers, complex activation functions or complex weight initialization. We also show the impact of not adapting the weight initialization correctly to the complex domain.
This work presents a strong focus on the implementation of such modules on Python using \lstinline{cvnn} toolbox.
We also perform simulations on real-valued data, casting to the complex domain by means of the Hilbert Transform and verify the potential interest of \acrshort{cvnn} even for non-complex data.
\end{abstract}

% keywords can be removed
\keywords{\acrshort{cvnn} \and Deep Learning \and Complex values \and Machine Learning}

\section{Introduction}

% \chapter{Implementation of the Complex-Valued Neural Network Modules} \label{chap:library}

\begin{figure}[ht]
    \includegraphics[width=\textwidth]{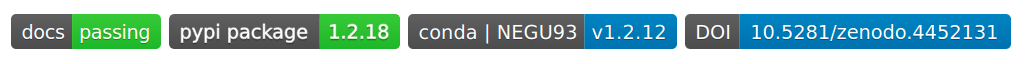}
    \label{fig:badges}
\end{figure}

% \newpage

\begin{figure}[!b]
    \centering
    \includegraphics[height=2cm]{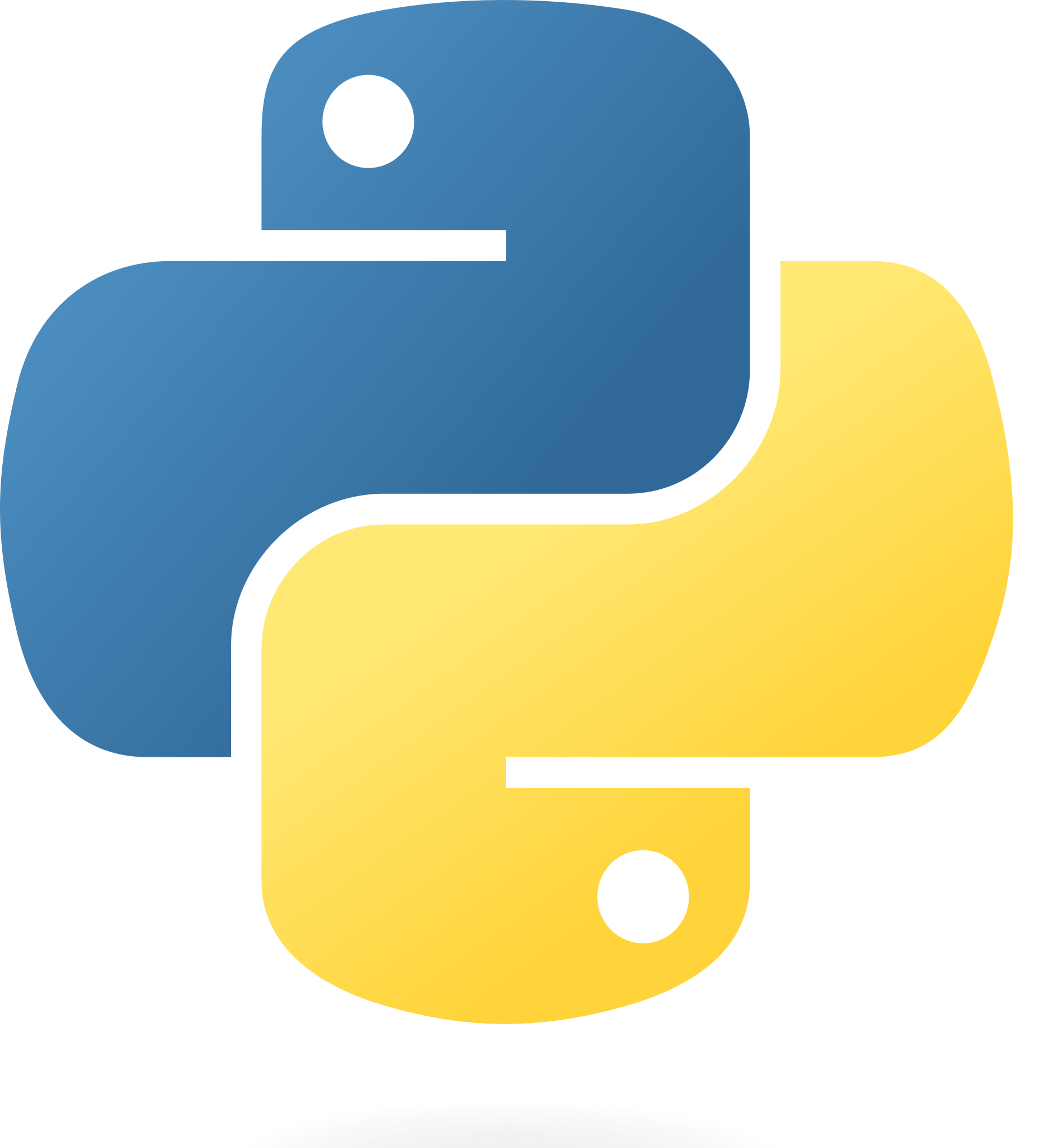}
    \includegraphics[height=2cm]{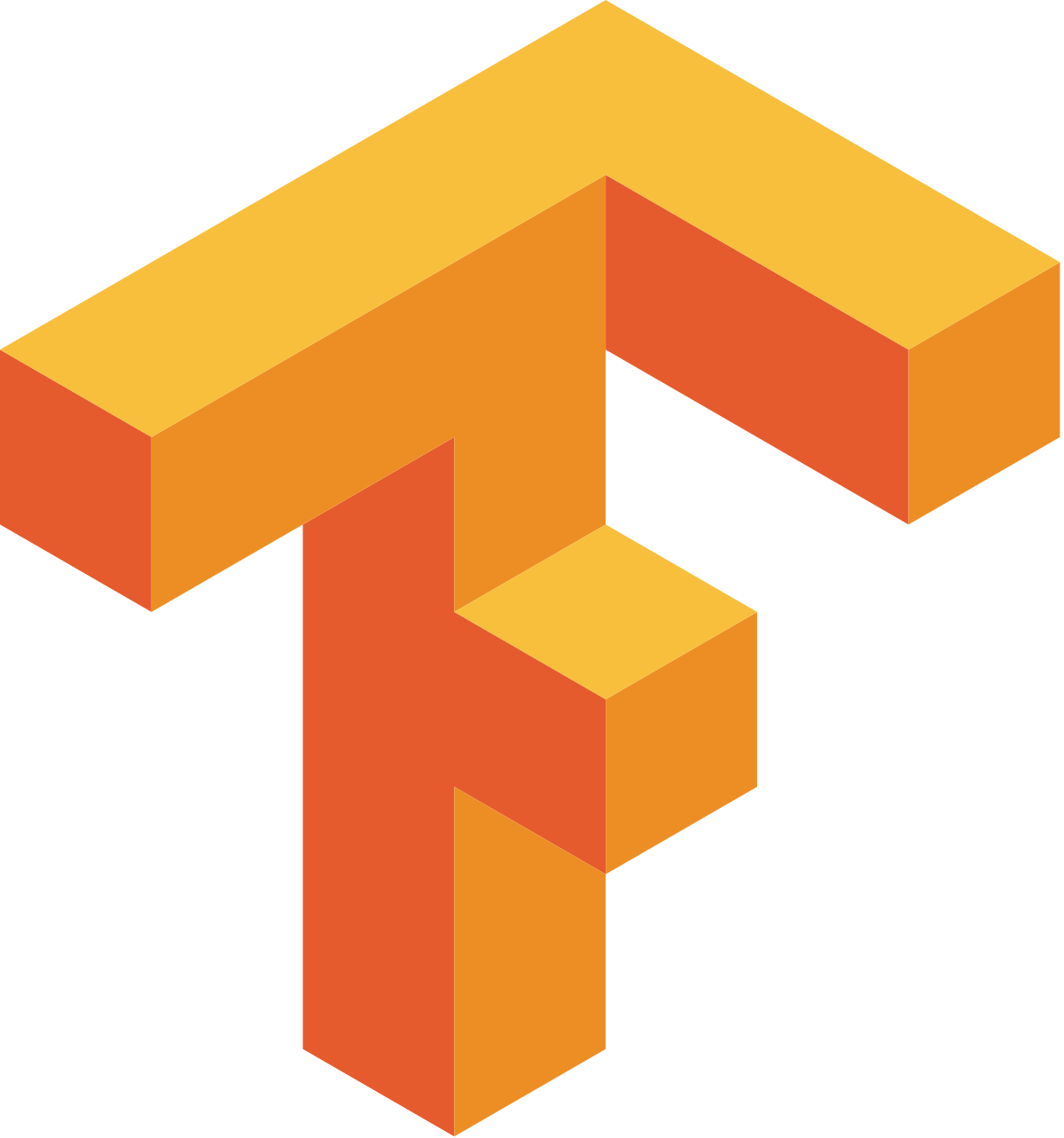}
    \includegraphics[height=2cm]{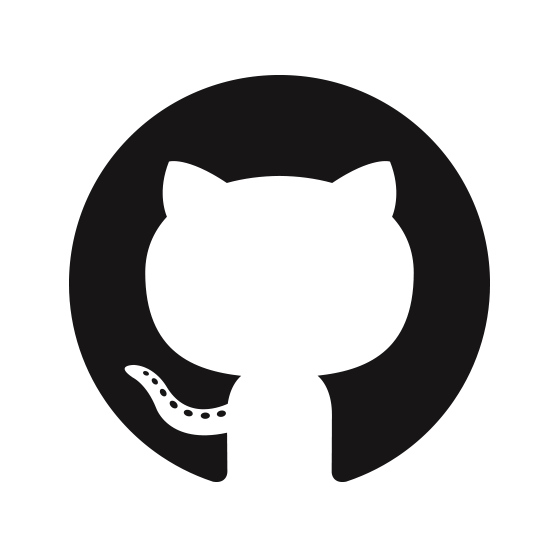}
    \includegraphics[height=2cm]{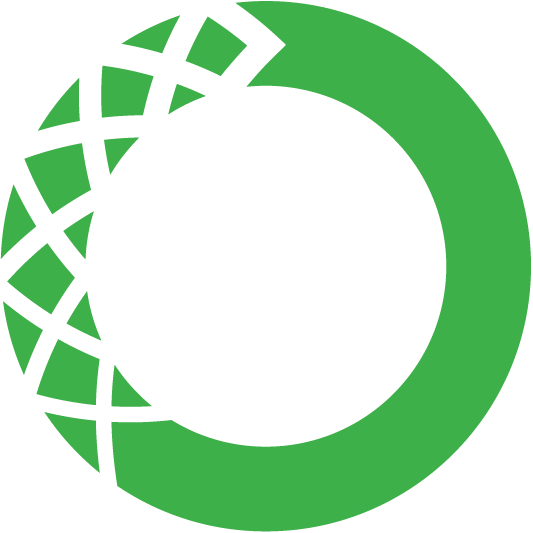}
    \includegraphics[height=2cm]{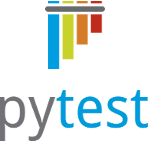}
    \includegraphics[height=2cm]{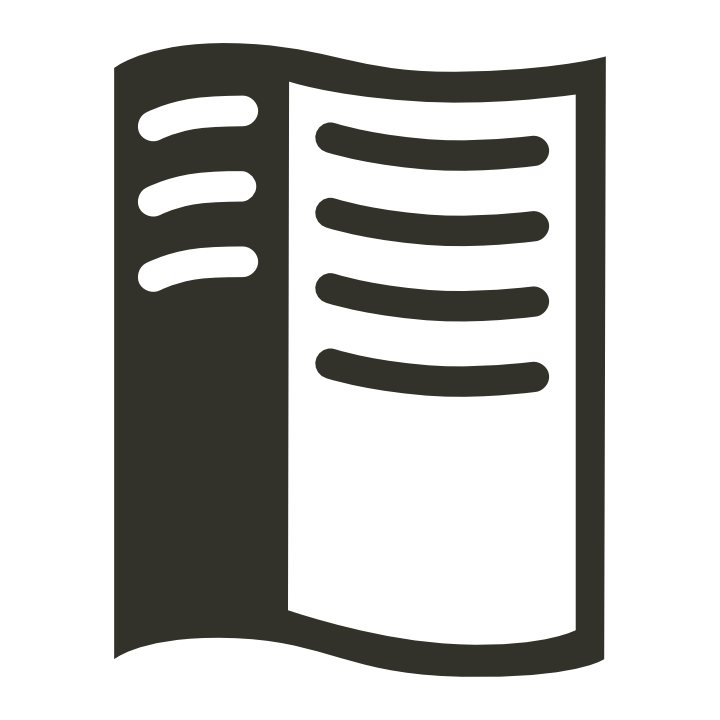}
\end{figure}

Although \acrshort{cvnn} has been investigated for particular structures of complex-valued data \cite{hirose2012generalization, hansch2009classification, hirose2012complex, hirose2009complex}, the difficulties in implementing \acrshort{cvnn} models in practice have slowed down the field from growing further \cite{monning2018evaluation}. 
% Although it has been proven several times that \acrshort{cvnn} achieves better generalization than \acrshort{rvnn} \cite{hirose2012generalization}, the latter has been often favored over the former due to difficulties in the implementation \cite{monning2018evaluation}. 
Indeed, the two most popular 
\href{https://www.python.org/}{Python} 
libraries for developing deep neural networks, which are 
\href{https://pytorch.org/}{Pytorch}, from Meta (formerly named Facebook) 
and 
\href{https://www.tensorflow.org/}{Tensorflow} from Google
 do not fully support the creation of complex-valued models.

Even though \textit{Tensorflow} does not fully support the implementation of a \acrshort{cvnn}, it has one significant benefit: 
It enables the use of complex-valued data types for the \acrfull{autodiff} algorithm \cite{hoffmann2016hitchhiker} to calculate the complex gradients as defined in Appendix \ref{chap:autodif}. 
Since July 2020, \textit{PyTorch} also added this functionality as BETA with the version 1.6 release. later on, on June 2022, after the release of version 1.12, \textit{PyTorch} extended its complex functionality by adding complex convolutions (also as BETA). Although this indicates a clear intention to develop towards \acrshort{cvnn} support, there is still a lot of development to be done.
% \textit{Tensorflow}'s \acrshort{autodiff} implementation calculates the gradient for $f:\mathbb{C} \to \mathbb{C}$ as:
% \begin{equation}
% 	\nabla_z f = \conjugate{\left. \frac{\partial f}{\partial z} +  \frac{\partial \conjugate{f}}{\partial z}\right.} \; .
% 	\label{eq:tf-complex-gradient-def}
% \end{equation}

% It is straightforward to prove that \eqref{eq:complex-gradient-def}  and \eqref{eq:tf-complex-gradient-def} are equivalent for a real-valued loss function $f:\mathbb{C} \to \mathbb{R}$ \cite{kreutz2009complex}. %one ref is sufficient here and don't cite tech report, no reliable.

Libraries to develop \acrshort{cvnn}s do exist, the most important one of them being probably the \href{https://github.com/ChihebTrabelsi/deep_complex_networks}{code} published in \cite{trabelsi2017deep}. However, we have decided not to use this library since the latter uses \textit{Theano} as a back-end, which is no longer \href{https://groups.google.com/g/theano-users/c/7Poq8BZutbY/m/rNCIfvAEAwAJ}{maintained}. Another \href{https://github.com/JesperDramsch/keras-complex}{code} was published on \href{https://github.com/}{GitHub} that uses \textit{Tensorflow} and \textit{Keras} to implement \acrshort{cvnn} \cite{dramsch2019complex}. However, as \textit{Keras} does not support complex-valued numbers, the published code simulates complex operations using real-valued datatypes. Therefore, the user has to transform its complex-valued data into a real-valued equivalent before using this library. The same happened with \href{https://github.com/wavefrontshaping/complexPyTorch}{ComplexPyTorch} \cite{complexpytorch} until it was updated in January 2021 to support complex tensors.

\begin{figure}
\begin{subfigure}[b]{0.45\textwidth}
    \centering
    \includegraphics[width=\textwidth]{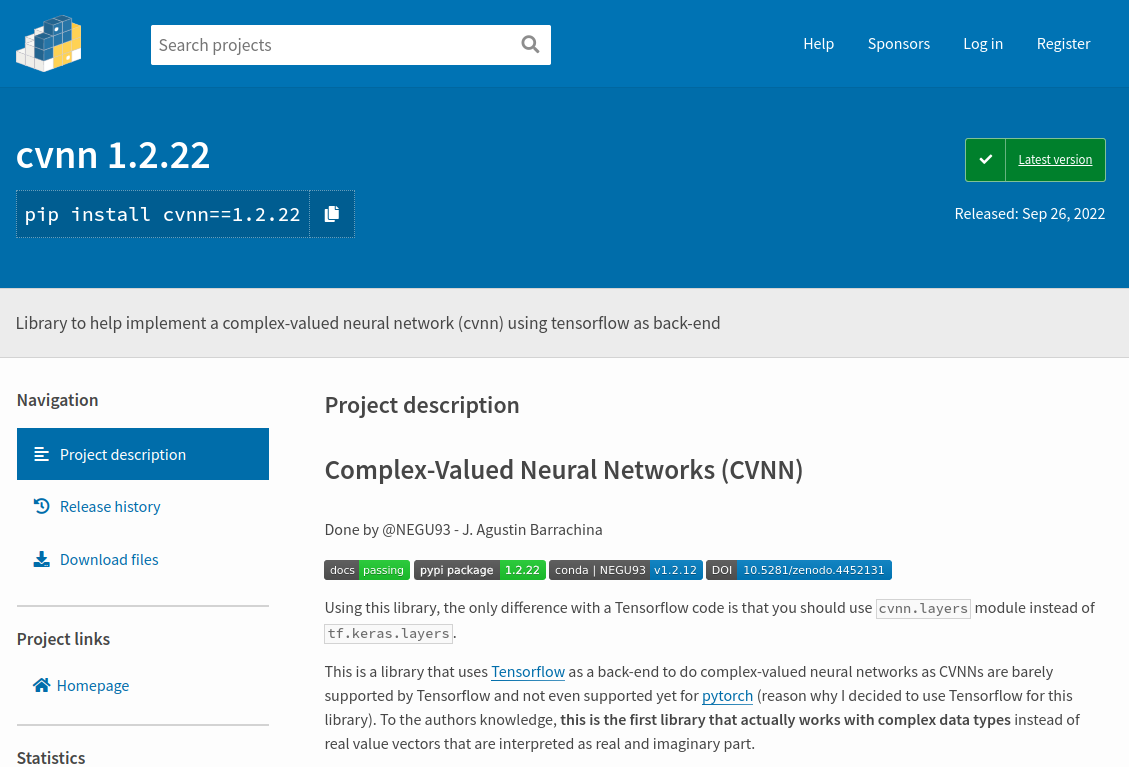}
    \caption{PIP cvnn presentation page.}
    \label{fig:pip}
\end{subfigure}
\begin{subfigure}[b]{0.54\textwidth}
    \centering
    \includegraphics[width=\textwidth]{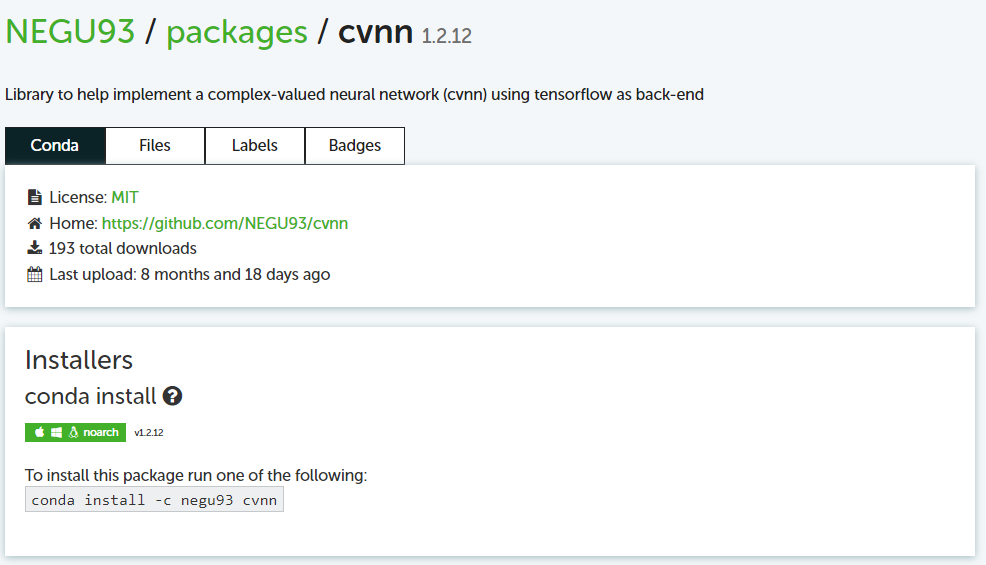}
    \caption{Anaconda cvnn presentation page.}
    \label{fig:conda}
\end{subfigure}
\end{figure}

% Publication

\begin{wrapfigure}{r}{0.35\textwidth}
    \vspace{-0.5cm}
    \centering
    \includegraphics[width=0.35\textwidth]{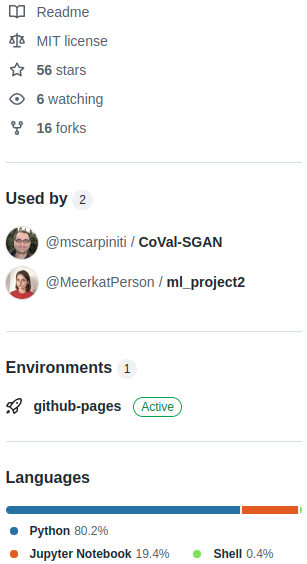}
\end{wrapfigure}
During this thesis, we created a \textit{Python} tool to deal with the implementation of \acrshort{cvnn} models using \textit{Tensorflow} as back-end. Note that the development of this library started in 2019, whereas \textit{PyTorch} support for complex numbers started in mid-2020, which is the reason why the decision to use \textit{Tensorflow} instead of \textit{Pytorch} was made. To the author's knowledge, this was the first library that natively supported complex-number data types.
The library is called CVNN and was published \cite{j_agustin_barrachina_2021_4452131} using CERN's \href{https://zenodo.org/record/4452131#.YgZ1E4yZNhE}{Zenodo} platform which received already 18 downloads.
It can be installed using both \href{https://pypi.org/project/cvnn/}{Python Index Package (PIP)} (Figure \ref{fig:pip}) and \href{https://anaconda.org/negu93/cvnn}{Anaconda}. The latter already has 193 downloads as of the $8^{\text{th}}$ October 2022, as shown in Figure \ref{fig:conda}, from which none of those downloads were ourselves.

% Reception

\begin{figure}
    \centering
    \includegraphics[width=\textwidth]{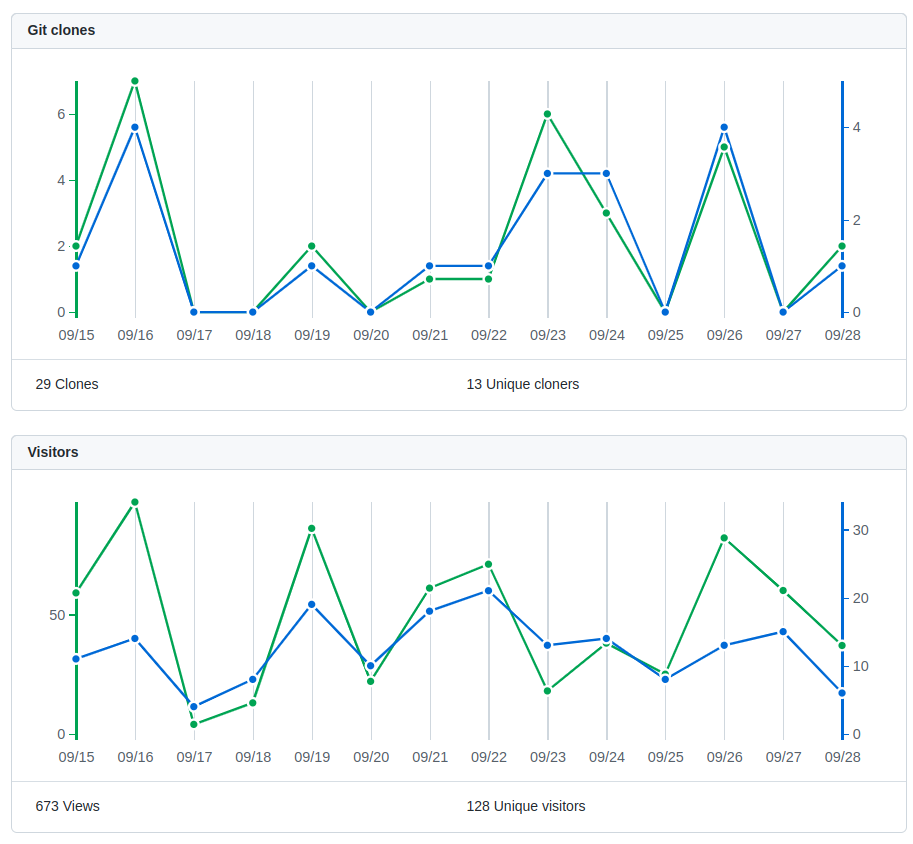}
    \caption{GitHub cvnn toolbox traffic.}
    \label{fig:gh-traffic}
\end{figure}

The library was open-sourced for the community on \href{https://github.com/NEGU93/cvnn/}{GitHub} \cite{negu2019cvnn} and has received a very positive reception from the community. 
As can be seen from Figure \ref{fig:gh-traffic}, the GitHub repository received an average of 2 clones and almost 50 visits per day in the last two weeks.
With 62 stars at the beginning of October 2022. It also has a total of 16 forks and one pull request for a new feature that has already been reviewed and accepted. Six users are actively watching every update on the code as they have activated the notifications. Finally, two users have codes in GitHub importing the library.
Thirty issues have been reported, and it was also subject to 31 private emails.
All these metrics are evidence of the impact and interest of the community in the published code.

The library was documented using \textit{reStructuredText} and uploaded for worldwide availability. The link for the full documentation (displayed in Figure \ref{fig:docs}) can be found in the following link:
\href{https://complex-valued-neural-networks.readthedocs.io/en/latest/index.html}{\path{complex-valued-neural-networks.rtfd.io}}. The documentation received a daily view which varied from a minimum of 18 views on one day to a maximum of 173 views as shown in Figure \ref{fig:docs-views}

\begin{figure}
    \centering
    \begin{subfigure}[b]{0.9\textwidth}
        \centering
        \includegraphics[width=\textwidth]{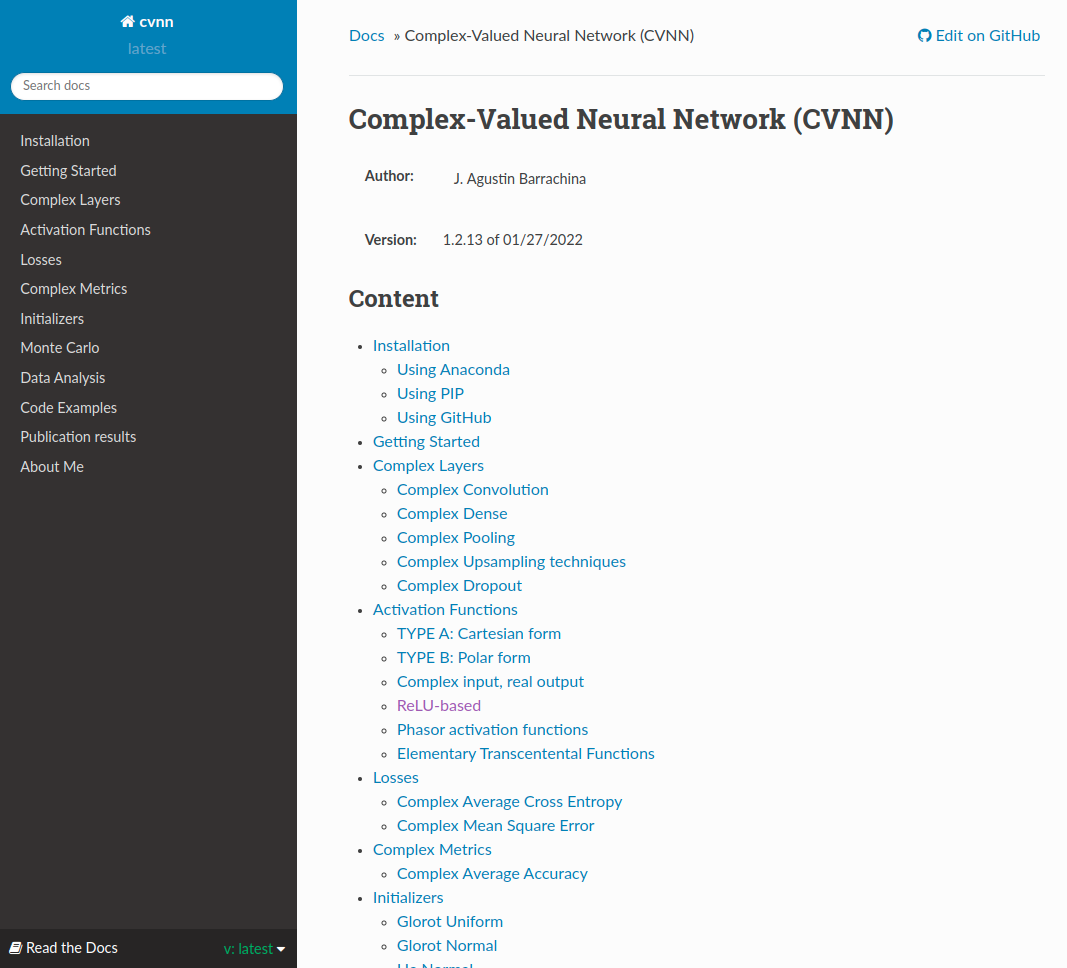}
        \caption{\textit{Read the Docs} documentation presentation page.}
        \label{fig:docs}
    \end{subfigure}
    \begin{subfigure}[b]{0.9\textwidth}
        \includegraphics[width=\textwidth]{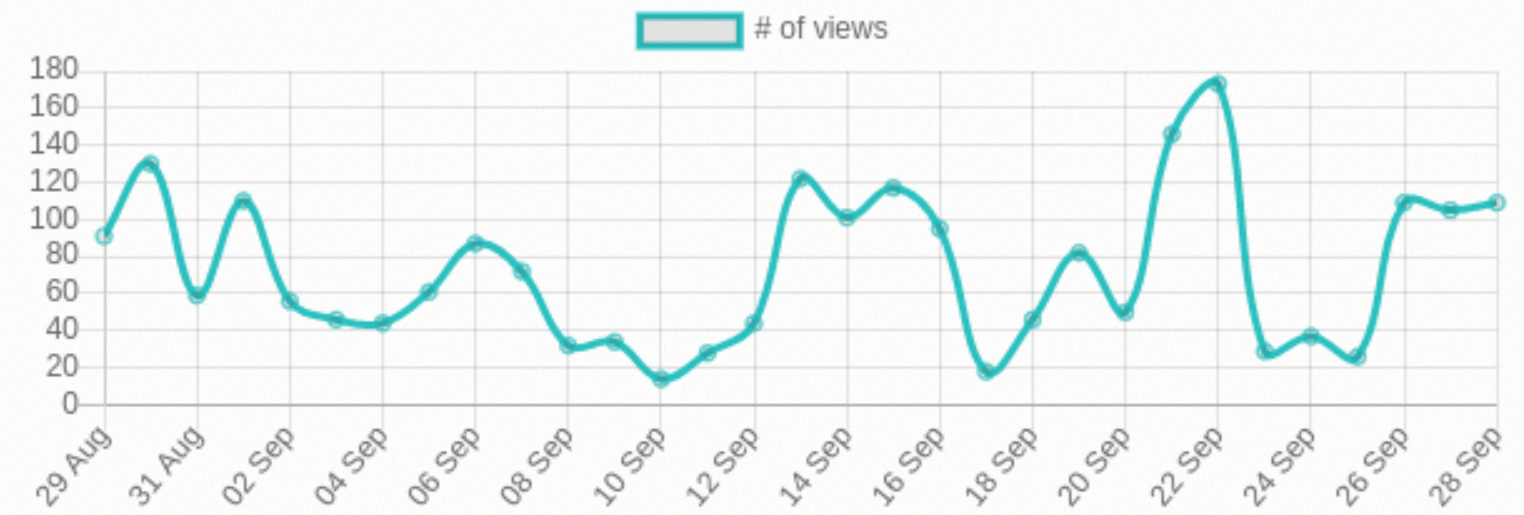}
        \caption{cvnn documentation daily total views in \textit{Read The Docs}.}
        \label{fig:docs-views}
    \end{subfigure}
    \caption{Documentation hosted by \textit{Read the Docs}.}
\end{figure}

% Test modules
% \begin{wrapfigure}{l}{0.2\textwidth}
%     \includegraphics[width=0.2\textwidth]{img/pytest.png}
% \end{wrapfigure}
The \textit{Python} testing framework \href{https://docs.pytest.org/en/7.0.x/}{Pytest} was used to maintain a maximum degree of quality and keep it as bug-free as possible. 
Before each new feature implementation, a test module was created to assert the expected behavior of the feature and reduce bugs to a minimum. This methodology also guaranteed feature compatibility as all designed test modules must pass in order to deploy the code.
The library allows the implementation of a \acrfull{rvnn} as well with the intention of changing as little as possible the code when using complex data types. This made it possible to perform a straight comparison against \textit{Tensorflow}'s models, which helped in the debugging. Indeed, some \textit{Pytest} modules achieved the same result that \textit{Tensorflow} when initializing the models with the same seed.

% Small usage example
Special effort was made on the \acrfull{ux} by keeping the \acrfull{api} as similar as possible to that of \textit{Tensorflow}. The following code extract would work both for a \textit{Tensorflow} or cvnn application code:

\begin{lstlisting}[language=Python]
import numpy as np
import tensorflow as tf

# Gets the dataset, when using cvnn you normally want this to be complex 
# for example numpy arrays of dtype np.complex64
# to be done by each user
(train_images, train_labels), (test_images, test_labels) = get_dataset()        

# This function returns a tf.Model object
model = get_model()  

# Compile as any TensorFlow model
model.compile(optimizer='adam', metrics=['accuracy'], loss=tf.keras.losses.SparseCategoricalCrossentropy(from_logits=True))
model.summary()

# Train and evaluate
history = model.fit(train_images, train_labels, epochs=epochs, validation_data=(test_images, test_labels))
test_loss, test_acc = model.evaluate(test_images,  test_labels, verbose=2)
\end{lstlisting}

For creating the model, two \acrshort{api}s are available, the first one known as the sequential \acrshort{api} whose usage is something like the following code extract:

\begin{lstlisting}[language=Python]
import cvnn.layers as layers

def get_model():
    model = tf.keras.models.Sequential()
    model.add(layers.ComplexInput(input_shape=(32, 32, 3))) 
    model.add(layers.ComplexConv2D(32, (3, 3), activation='cart_relu'))
    model.add(layers.ComplexAvgPooling2D((2, 2)))
    model.add(layers.ComplexConv2D(64, (3, 3), activation='cart_relu'))
    model.add(layers.ComplexMaxPooling2D((2, 2)))
    model.add(layers.ComplexConv2D(64, (3, 3), activation='cart_relu'))
    model.add(layers.ComplexFlatten())
    model.add(layers.ComplexDense(64, activation='cart_relu'))
    model.add(layers.ComplexDense(10, activation='convert_to_real_with_abs'))   
    # An activation that casts to real must be used at the last layer. 
    # The loss function cannot minimize a complex number
    return model
\end{lstlisting}

However, some models are simply impossible to create with the sequential \acrshort{api} like a \acrshort{unet} architecture. For that, the functional \acrshort{api} must be used like this:

\begin{lstlisting}[language=Python]
import cvnn.layers as layers

def get_model():
    inputs = layers.complex_input(shape=(128, 128, 3))
    c0 = layers.ComplexConv2D(32, activation='cart_relu', kernel_size=3)(inputs)
    c1 = layers.ComplexConv2D(32, activation='cart_relu', kernel_size=3)(c0)
    c2 = layers.ComplexMaxPooling2D(pool_size=(2, 2), strides=(2, 2), padding='valid')(c1)
    t01 = layers.ComplexConv2DTranspose(5, kernel_size=2, strides=(2, 2), activation='cart_relu')(c2)
    concat01 = tf.keras.layers.concatenate([t01, c1], axis=-1)

    c3 = layers.ComplexConv2D(4, activation='cart_relu', kernel_size=3)(concat01)
    out = layers.ComplexConv2D(4, activation='cart_relu', kernel_size=3)(c3)
    return tf.keras.Model(inputs, out)
\end{lstlisting}

When using Keras loss functions, the output of the model must be real-valued as Tensorflow will not work with complex values. There are activation functions defined that receive complex values as input but output real value results to deal with this problem. 
% (documentation in \href{https://complex-valued-neural-networks.readthedocs.io/en/latest/activations/real_output.html}{complex\-valued\-neural\-networks.readthedocs.io/en/latest/activations/real\_output.html})
Another solution is to use the cvnn library-defined loss functions instead.
% , which can be found in \href{https://complex-valued-neural-networks.readthedocs.io/en/latest/losses.html}{\path{complex-valued-neural-networks.rtfd.io/en/latest/losses.html}}.

\textit{Tensorflow} blocks the use of a complex-valued loss as the optimizer input but allows the update over complex-valued trainable parameters by means of the \textit{Wirtinger calculus} (Appendix \ref{chap:wirtinger-calculus}).
As the loss is real-valued, the optimizer can be the same as the one used for real-valued networks, so no implementation was needed in this regard. However, the other modules must be implemented. Their detail will be described in the following Sections.

\section{Mathematical Background}

The beginning of \acrshort{cvnn} was deferred mainly because of Liouville's theorem, which stated that a bounded and differentiable throughout all the complex domain is then a constant function, forcing the loss function to be non-holomorphic.
Liouville's theorem implications were considered to be a big problem around 1990 as some researchers believed indifferentiability should lead to the impossibility to obtain and/or analyze the dynamics of the \acrshort{cvnn}s \cite{hirose2013complex}.
Later Wirtinger calculus proposed a gradient definition for non-holomorphic functions and is now widely used for optimizing and training complex models.

\subsection{Liouville Theorem}\label{chap:liouville-theorem}

Liouville graduated from the École Polytechnique in 1827. After some years as an assistant at various institutions including the École Centrale Paris, he was appointed as a professor at the École Polytechnique in 1838.

\begin{definition}{}
    In complex analysis, an \textit{entire function}, also called an \textit{integral function}, is a complex-valued function that is \textit{holomorphic} at all finite points over the whole complex plane. 
\end{definition}
\begin{definition}{}
    Given a function $f$, the function is \textit{bounded} if $\exists M \in \mathbb{R^+} : |f(z)| < M$.
\end{definition}

\begin{theorem}[Cauchy integral theorem]
    Given $f$ analytic through region \textbf{D}, then the contour integral of $f(z)$ along any close path $C$ inside region \textbf{D} is zero:
    \begin{equation}
        \oint_{C} f(z) dz = 0 \, .
    \end{equation}
\end{theorem}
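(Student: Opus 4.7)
The plan is to prove Cauchy's integral theorem via Green's theorem applied to the real and imaginary parts of the integrand, under the classical (Cauchy) hypothesis that $f$ has continuous partial derivatives on $\textbf{D}$; Goursat's refinement later removes this extra regularity, but for the pedagogical purposes of this section the simpler route is adequate. First I would write $z = x + iy$ and $f(z) = u(x,y) + i v(x,y)$ and expand $dz = dx + i\,dy$, so that
$$\oint_{C} f(z)\, dz = \oint_{C} (u\,dx - v\,dy) \; + \; i \oint_{C} (v\,dx + u\,dy).$$

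Next, assuming first that $C$ is a positively oriented simple closed curve bounding a region $R \subset \textbf{D}$, I would apply Green's theorem to each of these two real line integrals. This converts the real part into $\iint_{R} \bigl(-\partial v/\partial x - \partial u/\partial y\bigr)\, dA$ and the imaginary part into $\iint_{R} \bigl(\partial u/\partial x - \partial v/\partial y\bigr)\, dA$. The conclusion follows immediately by invoking the Cauchy--Riemann equations, which hold throughout $\textbf{D}$ because $f$ is analytic: $\partial u/\partial x = \partial v/\partial y$ and $\partial u/\partial y = -\partial v/\partial x$. Both integrands vanish identically on $R$, so each double integral is zero, giving $\oint_C f(z)\, dz = 0$.

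The general case of an arbitrary closed path (possibly self-intersecting, or traced multiple times) I would handle by decomposing it into a finite union of simple closed subcontours and applying the previous step to each piece, noting that the orientations cancel along any shared segments. The main obstacle is a regularity issue hidden in the first step: Green's theorem as usually stated demands that $u, v$ have continuous first partials up to the boundary of $R$, which is slightly stronger than mere complex differentiability of $f$. Circumventing this was precisely the content of Goursat's triangle-subdivision argument (halving a triangular contour into four congruent sub-triangles and iterating to extract a shrinking nested sequence), but since Cauchy's theorem is invoked here only as a lemma en route to Liouville's theorem and the motivation of Wirtinger calculus, the stronger hypothesis is harmless.
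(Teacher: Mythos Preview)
Your argument is the classical Cauchy--via--Green proof and is correct as written, including your honest flagging of the continuity-of-partials assumption and the remark that Goursat's subdivision argument removes it. There is nothing to fix mathematically.

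The comparison you were asked for is moot, however: the paper does not actually prove the Cauchy integral theorem. It is merely \emph{stated} (alongside the Cauchy integral formula and its derivative corollary) as background on the way to Liouville's theorem, which is the only result in that subsection carrying a \texttt{proof} environment. So your proposal is not an alternative to the paper's proof but a genuine addition. If anything, your write-up is more careful than what the surrounding text requires: since the paper only ever applies Cauchy's formula to entire functions integrated over circles, the extra regularity needed for Green's theorem is automatically available, and your caveat about Goursat, while correct, could be compressed to a single sentence without loss.
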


\begin{theorem}[Cauchy integral formula]
     Given $f$ analytic on the boundary $C$ with $z_0$ any point inside $C$, then:
     \begin{equation}
         f(z_0) = \frac{1}{2 \pi \im} \oint_{C} \frac{f(z)}{z-z_0}\,dz \, ,
     \end{equation}
     where the contour integration is taken in the counterclockwise direction.
\end{theorem}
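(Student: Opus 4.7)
The plan is to reduce the stated integral to a small circle around $z_0$ and then evaluate the limit by continuity of $f$. Define $g(z) = f(z)/(z-z_0)$, which is analytic everywhere inside $C$ except at the isolated point $z_0$. Pick a small radius $\varepsilon > 0$ so that the closed disk $\bar{B}(z_0,\varepsilon)$ lies strictly inside the region enclosed by $C$, and let $C_\varepsilon$ denote its positively oriented boundary circle.

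First I would justify that $\oint_C g(z)\,dz = \oint_{C_\varepsilon} g(z)\,dz$. The standard way is to introduce a keyhole-style cut from $C$ to $C_\varepsilon$: traversing $C$ counterclockwise, the cut inward, $C_\varepsilon$ clockwise, and the cut outward forms a closed contour in a region where $g$ is analytic. Applying the Cauchy integral theorem stated above, the combined contour integral vanishes; the contributions along the two sides of the cut cancel (they are traversed in opposite directions along the same path), leaving the claimed equality of the two closed-contour integrals.

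Next I would parameterize $C_\varepsilon$ by $z(\theta) = z_0 + \varepsilon e^{i\theta}$ with $\theta \in [0,2\pi]$, so that $dz = i\varepsilon e^{i\theta}\,d\theta$ and $z - z_0 = \varepsilon e^{i\theta}$. Substituting gives
\begin{equation}
\oint_{C_\varepsilon} \frac{f(z)}{z-z_0}\,dz = i\int_0^{2\pi} f\bigl(z_0 + \varepsilon e^{i\theta}\bigr)\,d\theta.
\end{equation}
Since the left-hand side is independent of $\varepsilon$ (by the previous step), I can take $\varepsilon \to 0$. Because $f$ is continuous at $z_0$, the integrand converges uniformly in $\theta$ to the constant $f(z_0)$, so the limit of the right-hand side is $2\pi i\, f(z_0)$. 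Dividing through by $2\pi i$ yields the formula.

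The main obstacle I expect is the rigorous justification of the contour-deformation step: one must exhibit an explicit closed curve in the region of analyticity of $g$ to which the Cauchy integral theorem applies, and argue that the auxiliary segments contribute nothing. The rest of the argument is routine: parameterizing a circle and invoking continuity of $f$ to pass the limit inside the integral (uniform convergence on the compact interval $[0,2\pi]$ makes this step elementary).
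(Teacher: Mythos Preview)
Your argument is the standard and correct proof of the Cauchy integral formula: deform the contour to a small circle using the Cauchy integral theorem, parameterize, and pass to the limit using continuity of $f$. There is nothing to compare against here, because the paper does not actually prove this theorem; it is stated (together with the Cauchy integral theorem and the derivative corollary) as a classical background result and then used in the proof of Liouville's theorem. So your proposal supplies a proof where the paper gives none, and the approach you take is exactly the textbook one.
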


\begin{corollary}
\begin{equation}
    f^{(n)}(z_0) = \frac{n!}{2 \pi \im} \oint_{C} \frac{f(z)}{(z-z_0)^{n+1}}\, dz\, .
\end{equation}
\label{cor:cauchy-integral-formula}
\end{corollary}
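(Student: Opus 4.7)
The plan is to prove the formula by induction on $n$, using the Cauchy integral formula as the base case and differentiating under the integral sign for the inductive step. For $n=0$ the statement reduces exactly to the Cauchy integral formula stated just above, so that case requires nothing.

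For the inductive step, I would assume the identity holds for some $n\geq 0$ and compute the difference quotient
\begin{equation*}
\frac{f^{(n)}(z_0+h)-f^{(n)}(z_0)}{h} = \frac{n!}{2\pi \im}\oint_{C}\frac{f(z)}{h}\left[\frac{1}{(z-z_0-h)^{n+1}}-\frac{1}{(z-z_0)^{n+1}}\right]dz.
\end{equation*}
The natural target is to show that, as $h\to 0$, the bracketed difference quotient converges to $\frac{n+1}{(z-z_0)^{n+2}}$ uniformly in $z\in C$, so that the limit can be pulled inside the integral. Multiplying by $n!/(2\pi\im)$ then yields $(n+1)!/(2\pi\im)$ times the integral of $f(z)/(z-z_0)^{n+2}$, which is the desired formula for $n+1$ and also shows that $f^{(n+1)}(z_0)$ exists.

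The main obstacle is exactly this uniform convergence claim: one must control $\frac{1}{(z-z_0-h)^{n+1}}-\frac{1}{(z-z_0)^{n+1}}$ by an expression of order $h$ with a constant independent of $z\in C$. The key ingredient is that $C$ is compact and $z_0$ lies in the open region bounded by (or interior to) $C$, so $\delta:=\mathrm{dist}(z_0,C)>0$. Restricting $h$ to $|h|<\delta/2$ ensures $|z-z_0-h|\geq \delta/2$ uniformly in $z\in C$, and an algebraic identity of the form
\begin{equation*}
\frac{1}{a^{n+1}}-\frac{1}{b^{n+1}} = (b-a)\sum_{k=0}^{n}\frac{1}{a^{k+1}\,b^{n+1-k}}
\end{equation*}
with $a=z-z_0-h$ and $b=z-z_0$ yields the required uniform bound, together with the fact that $|f|$ is bounded on $C$ by continuity.

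Once uniform convergence is secured, interchanging limit and contour integral is justified by the standard estimate $\bigl|\oint_C g(z)\,dz\bigr|\leq L(C)\sup_{z\in C}|g(z)|$, and the induction closes. I would finally note that, as a byproduct, the argument shows that $f$ analytic in a region implies $f$ is infinitely differentiable there, with each derivative represented by the stated contour integral.
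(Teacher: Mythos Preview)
Your argument is the standard induction-by-difference-quotient proof and is correct: the base case is precisely the Cauchy integral formula, the algebraic telescoping identity you wrote is valid (up to the harmless relabelling $k\mapsto n-k$), and the compactness of $C$ together with $\mathrm{dist}(z_0,C)>0$ gives exactly the uniform bound needed to pass the limit through the contour integral via the $ML$-estimate.

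As for comparison: the paper does not actually supply a proof of this corollary. It is stated immediately after the Cauchy integral formula and then invoked in the proof of Liouville's theorem, but no argument is given. Your proposal therefore fills a genuine gap rather than duplicating anything in the text, and the route you take is the classical one found in most complex-analysis references.
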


In complex analysis, Liouville's theorem states that every \textit{bounded entire function} must be constant. That is:
\begin{theorem}[Liouville's Theorem]
\begin{equation}
    f: \mathbb{C} \longrightarrow \mathbb{C} \text{ holomorphic } / \exists M \in \mathbb{R^+}: \left|f(z)\right|  < M, \forall z \in \mathbb{C} \Rightarrow f = c, c \in \mathbb{C} \, .
\end{equation}
\end{theorem}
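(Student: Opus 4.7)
The plan is to apply Corollary \ref{cor:cauchy-integral-formula} with $n=1$ to show that the derivative of $f$ vanishes identically, and then to conclude that $f$ must be constant on the connected domain $\mathbb{C}$. The fact that $f$ is entire is essential because it will allow the contour in the integral representation to have arbitrarily large radius around any fixed point, and the boundedness hypothesis will then drive the derivative estimate to zero.

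First I would fix an arbitrary point $z_0 \in \mathbb{C}$ and let $C_R$ denote the positively oriented circle of radius $R>0$ centered at $z_0$. Since $f$ is holomorphic everywhere, Corollary \ref{cor:cauchy-integral-formula} applies on this contour with $n=1$, giving
\begin{equation*}
f'(z_0) \;=\; \frac{1}{2\pi \im} \oint_{C_R} \frac{f(z)}{(z-z_0)^2}\, dz.
\end{equation*}
Next I would estimate the right-hand side by the standard length-times-supremum bound: on $C_R$ one has $|z-z_0| = R$, and by hypothesis $|f(z)| < M$, while the arc length of $C_R$ is $2\pi R$. Combining these ingredients yields
\begin{equation*}
|f'(z_0)| \;\leq\; \frac{1}{2\pi}\cdot\frac{M}{R^2}\cdot 2\pi R \;=\; \frac{M}{R}.
\end{equation*}

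Because $f$ is entire, $R$ may be chosen as large as we wish; letting $R \to \infty$ forces $|f'(z_0)| = 0$, hence $f'(z_0)=0$. Since $z_0$ was arbitrary, $f' \equiv 0$ on $\mathbb{C}$. To finish, I would invoke the elementary fact that a holomorphic function on a connected open set whose derivative vanishes identically is constant; one quick justification is to integrate $f'$ along a straight segment from $z_0$ to an arbitrary target point $z$, which gives $f(z)-f(z_0)=0$, so $f \equiv f(z_0)$.

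The proof requires no truly delicate step, so the main obstacle is more conceptual than technical: one has to recognize that the $R^{-2}$ decay inside the integrand beats the linear growth $2\pi R$ of the contour length precisely when $|f|$ admits a uniform bound, and that this mismatch is what converts a purely local derivative formula into a global rigidity statement. Once this is spotted, the argument reduces to assembling the Cauchy derivative formula, the trivial $ML$ estimate, and connectedness of $\mathbb{C}$.
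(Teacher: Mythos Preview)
Your proof is correct. It differs from the paper's in a minor but genuine way: the paper expands $f$ in its Taylor series about the origin, applies Corollary~\ref{cor:cauchy-integral-formula} for \emph{every} $k\geq 1$ to obtain Cauchy's inequality $|a_k|\leq M/r^k$, and lets $r\to\infty$ to kill all nonconstant coefficients at once. You instead apply the corollary only for $n=1$, but at an \emph{arbitrary} center $z_0$, conclude $f'\equiv 0$, and then use connectedness of $\mathbb{C}$ to deduce constancy. Your route is slightly more economical in that it avoids the Taylor expansion and needs only the $n=1$ case of the derivative formula; the paper's route is a bit more self-contained at the end since it reads off $f=a_0$ directly without the extra step of integrating $f'$ along a segment. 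Both are standard and equally valid.
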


Equivalently, non-constant \textit{holomorphic} functions on $ \mathbb{C} $ have unbounded images.

\begin{proof}
    Because every \textit{holomorphic} function is analytic (Class $C^\infty$), that is, it can be represented as a Taylor series, we can therefore write it as:
    \begin{equation}
        f(z) = \sum_{k = 0}^{\infty} a_k \,z^k\, .
        \label{eq:taylor}
    \end{equation}

    As $f$ is \textit{holomorphic} in the open region enclosed by the path and continuous on its closure. Because of \ref{cor:cauchy-integral-formula} we have:
    \begin{equation}
        a_k = \frac{f^{(k)}(0)}{k!} = \frac{1}{2 \pi i} \oint_{C_r} \frac{f(z)}{z^{k+1}dz} \, ,
    \end{equation}
    where $C_r$ is a circle of radius $r > 0$. Because $f$ is bounded:
    \begin{align}
        \left| a_k \right| \leqslant \frac{1}{2 \pi} \oint_{C_r} \frac{\left| f(z) \right| }{ \left|z\right|^{k+1}} \left|dz\right| \leqslant \frac{1}{2 \pi} \oint_{C_r} \frac{M}{ r^{k+1}} \left|dz\right| 
        &= \frac{M}{2 \pi r^{k+1}} \oint_{C_r} \left|dz\right|\, , \\  
        &= \frac{M}{2 \pi r^{k+1}} 2\pi \, ,\\ 
        &= \frac{M}{r^k}\, .
    \end{align}
    % Note: see that \oint_{C_r} \left|dz\right| is not the cauchy integral theorem because of the modulus.
    The latest derivation is also known as \textit{Cauchy's inequality}.
    
    As $r$ is any positive real number, by taking the $r \to \infty$ then $a_k \to 0$ for all $k \neq 0$. Therefore from Equation \ref{eq:taylor} we have that $f(z) = a_0$.
\end{proof}

%------------------------------------------------------------------------
\subsection{Wirtinger Calculus}\label{chap:wirtinger-calculus}

Wirtinger calculus, named after Wilhelm Wirtinger (1927) \cite{WIRTINGER_CALCULUS}, generalizes the notion of complex derivative, and the \textit{holomorphic} functions become a special case only. Further reading on Wirtinger calculus can be found in \cite{KREUTZ_DELGADO_WIRTINGER_CALCULUS, WILEY_APPENDIX_A}.

\begin{theorem}[Wirtinger Calculus]
    Given a complex function $f(z)$ of a complex variable $z = x + \im \, y \in \mathbb{C}, x, y \in \mathbb{R}$. 
    
    The partial derivatives with respect to $z$ and $\conjugate{z}$ respectively are defined as:
    \begin{equation}
        \begin{aligned}
            \frac{\partial f}{\partial z} \triangleq \frac{1}{2}\left( \frac{\partial f}{\partial x} - \im\,  \frac{\partial f}{\partial y} \right) \, ,\\
            \frac{\partial f}{\partial \conjugate{z}} \triangleq \frac{1}{2}\left( \frac{\partial f}{\partial x} + \im\, \frac{\partial f}{\partial y} \right) \, .
        \end{aligned}
        \label{eq:wirtinger-calculus}
    \end{equation}
\end{theorem}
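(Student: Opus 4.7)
The plan is to derive the two operators in equation (\ref{eq:wirtinger-calculus}) by a formal change of coordinates from the real variables $(x, y) \in \mathbb{R}^2$ to the pair $(z, \conjugate{z})$, treating the latter as if they were independent, and then applying the multivariate chain rule. First I would invert the defining relations $z = x + \im\, y$ and $\conjugate{z} = x - \im\, y$ to obtain $x = (z + \conjugate{z})/2$ and $y = (z - \conjugate{z})/(2\im)$. Reading off the formal partials then yields $\partial x / \partial z = \partial x / \partial \conjugate{z} = 1/2$, together with $\partial y / \partial z = 1/(2\im) = -\im/2$ and $\partial y / \partial \conjugate{z} = \im/2$. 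Plugging into the chain rule applied to $f(x(z,\conjugate{z}), y(z,\conjugate{z}))$ immediately produces
\begin{equation*}
    \frac{\partial f}{\partial z} = \frac{\partial f}{\partial x}\frac{\partial x}{\partial z} + \frac{\partial f}{\partial y}\frac{\partial y}{\partial z} = \frac{1}{2}\left( \frac{\partial f}{\partial x} - \im\, \frac{\partial f}{\partial y} \right),
\end{equation*}
and an identical calculation, with the opposite sign on the imaginary part, produces the $\partial / \partial \conjugate{z}$ identity.

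The main obstacle is conceptual rather than computational. The coordinates $z$ and $\conjugate{z}$ are plainly not independent in $\mathbb{C}$: fixing $z$ pins down $\conjugate{z}$, so the chain-rule step above is, at face value, ill-posed. I would resolve this by viewing any smooth $f : \mathbb{R}^2 \longrightarrow \mathbb{C}$ as the restriction to the diagonal slice $w = \conjugate{z}$ of a function $\tilde f(z, w)$ of two genuinely independent complex variables on $\mathbb{C}^2$; the Wirtinger operators are then honest partial derivatives of $\tilde f$, and the chain-rule manipulation above is rigorous at the level of $\tilde f$. Under this reading equation (\ref{eq:wirtinger-calculus}) is a definition whose well-posedness follows from the ordinary chain rule on $\mathbb{C}^2$ and which captures the intuition that $z$ and $\conjugate{z}$ may be differentiated independently.

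As sanity checks I would verify two structural properties expected of bona fide coordinate derivatives. First, substituting $f(z,\conjugate{z}) = z$ and $f(z,\conjugate{z}) = \conjugate{z}$ into (\ref{eq:wirtinger-calculus}) should yield $\partial z/\partial z = \partial \conjugate{z}/\partial \conjugate{z} = 1$ and $\partial z/\partial \conjugate{z} = \partial \conjugate{z}/\partial z = 0$. Second, writing $f = u + \im\, v$ with $u, v : \mathbb{R}^2 \longrightarrow \mathbb{R}$, the condition $\partial f / \partial \conjugate{z} = 0$ expands to $(u_x - v_y) + \im (u_y + v_x) = 0$, which is exactly the pair of Cauchy--Riemann equations. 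This recovers the class of holomorphic functions precisely as $\{ f : \partial f / \partial \conjugate{z} = 0 \}$, confirming that Wirtinger calculus properly generalizes ordinary complex differentiation and motivating its use as the gradient notion for training a \acrshort{cvnn} with a non-holomorphic loss.
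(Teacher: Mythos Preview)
Your proposal is correct and follows essentially the same route as the paper: invert $z=x+\im y$, $\conjugate{z}=x-\im y$ to express $x,y$ in terms of $(z,\conjugate{z})$, read off the formal partials $\partial x/\partial z$, $\partial y/\partial z$, and apply the multivariate chain rule to obtain the Wirtinger operators. The paper stops after the chain-rule substitution, whereas you additionally justify the independence-of-$z$-and-$\conjugate{z}$ manoeuvre via an extension to $\mathbb{C}^2$ and append the Cauchy--Riemann sanity check; the latter is exactly what the paper proves separately as its next theorem ($\partial f/\partial\conjugate{z}=0$ for holomorphic $f$), so your extra material is consonant with, but goes slightly beyond, the paper's own argument.
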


These derivatives are called $\mathbb{R}$-derivative and conjugate $\mathbb{R}$-derivative, respectively. As said before, the \textit{holomorphic} case is only a special case where the function can be considered as $f(z,\conjugate{z}), \conjugate{z} = 0$.
Wirtinger calculus enables to work with \textit{non-holomorphic} functions, providing an alternative method for computing the gradient that also improves the stability of the training process.

\begin{proof}
    Defining $z = x + \im \,y$ one can also define $x(z, \conjugate{z}), y(z, \conjugate{z}) : \mathbb{C} \longrightarrow \mathbb{R}$ as follows:
    \begin{equation}
        \begin{aligned}
            x &= \frac{1}{2} \left( z + \conjugate{z} \right) \, ,\\
            y &= \frac{1}{2\, \im} \left( z - \conjugate{z} \right)\, .
        \end{aligned}
    \end{equation}
    
    Using \eqref{the:complex-chain-rule-real-im-part}:
    \begin{equation}
    \begin{aligned}
        \frac{\partial f}{\partial z} 
            & = \frac{\partial f}{\partial x} \frac{\partial x}{\partial z} +  \frac{\partial f}{\partial y} \frac{\partial y}{\partial z} \, ,\\
            & = \frac{\partial f}{\partial x} \frac{1}{2} +  \frac{\partial f}{\partial y} \left(\frac{-\im}{2}\right) \, ,\\
            & = \frac{1}{2}\left( \frac{\partial f}{\partial x} - \im \, \frac{\partial f}{\partial y} \right)\, .
    \end{aligned}
    \end{equation}
\end{proof}

\begin{corollary} \label{cor:partial-f-g-real}
    \begin{equation}
    \begin{aligned}
        \frac{\partial f}{\partial g} + \frac{\partial f}{\partial \conjugate{g}} & = 
        \frac{1}{2} \left( \frac{\partial f}{\partial g_{\Re{}}} - \im \,\frac{\partial f}{\partial g_{\Im{}}}  \right) 
        + \frac{1}{2} \left( \frac{\partial f}{\partial g_{\Re{}}} + \im \,\frac{\partial f}{\partial g_{\Im{}}}  \right) \, ,\\
        & = \frac{\partial f}{\partial g_{\Re{}}} \\
        i \left( \frac{\partial f}{\partial g} - \frac{\partial f}{\partial \conjugate{g}} \right) 
        & = 
        \frac{i}{2} \left( \frac{\partial f}{\partial g_{\Re{}}} - \im \,\frac{\partial f}{\partial g_{\Im{}}}  \right) 
        - \frac{i}{2} \left( \frac{\partial f}{\partial g_{\Re{}}} + \im \,\frac{\partial f}{\partial g_{\Im{}}}  \right) \, ,\\
        & = \frac{i}{2} \left( - 2 \im \, \frac{\partial f}{\partial g_{\Im{}}} \right) \, ,\\
        & = \frac{\partial f}{\partial g_{\Im{}}} \, ,
    \end{aligned}
    \end{equation}
    where $g_\Re{}$ and $g_\Im{}$ are the real and imaginary part of $g$ respectively.
\end{corollary}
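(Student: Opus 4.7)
The plan is to derive both identities as a direct algebraic consequence of the Wirtinger definitions from the preceding theorem, applied with $g$ playing the role of the complex variable. Specializing \eqref{eq:wirtinger-calculus} to $g = g_\Re + \im\, g_\Im$ gives
\[
\frac{\partial f}{\partial g} = \frac{1}{2}\left(\frac{\partial f}{\partial g_\Re} - \im\, \frac{\partial f}{\partial g_\Im}\right), \qquad \frac{\partial f}{\partial \conjugate{g}} = \frac{1}{2}\left(\frac{\partial f}{\partial g_\Re} + \im\, \frac{\partial f}{\partial g_\Im}\right).
\]
Once these are in hand, both identities amount to inverting the $2\times 2$ linear change of basis between $(\partial_g, \partial_{\conjugate{g}})$ and $(\partial_{g_\Re}, \partial_{g_\Im})$, so the strategy is just to read off the inverse.

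For the first identity I would simply add the two expressions: the $\mp \im\, \partial f/\partial g_\Im$ contributions cancel, and the two copies of $\tfrac{1}{2}\partial f/\partial g_\Re$ combine to reproduce $\partial f/\partial g_\Re$.

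For the second identity I would subtract $\partial f/\partial \conjugate{g}$ from $\partial f/\partial g$; now it is the real parts that cancel, leaving $-\im\, \partial f/\partial g_\Im$. Multiplying through by $\im$ and using $\im \cdot (-\im) = 1$ then yields exactly $\partial f/\partial g_\Im$.

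There is essentially no real obstacle here; the whole argument is a one-line verification. The only point I would double-check carefully is the sign and factor-of-two bookkeeping: making sure that the $\tfrac{1}{2}$ prefactors from the Wirtinger definitions cancel cleanly with the factor of $2$ produced by summing or differencing, and that the external prefactor $\im$ in the second line (rather than $-\im$) is the correct choice so that no stray sign survives in $\partial f/\partial g_\Im$.
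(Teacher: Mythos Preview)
Your proposal is correct and matches the paper's approach exactly: the corollary is stated as a direct computation that substitutes the Wirtinger definitions \eqref{eq:wirtinger-calculus} for $\partial f/\partial g$ and $\partial f/\partial \conjugate{g}$ and then adds (respectively, subtracts and multiplies by $\im$) to recover $\partial f/\partial g_\Re$ and $\partial f/\partial g_\Im$. There is nothing further to add.
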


\begin{theorem}
     Given $f : \mathbb{C} \longrightarrow \mathbb{C}$ \textit{holomorphic} with $f(x+\im y) = u(x,y)+iv(x,y)$ where $u,v: \mathbb{R} \longrightarrow \mathbb{R} $ real-differentiable functions. Then 
     $$\displaystyle\frac{\partial f}{\partial \conjugate{z}} = 0\, .$$
     \label{the:holo-df-dzc-0}
\end{theorem}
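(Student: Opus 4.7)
The plan is to unpack the Wirtinger definition of $\partial f / \partial \conjugate{z}$ in terms of the real and imaginary components $u$ and $v$, and then invoke the Cauchy--Riemann equations, which are the algebraic content of the word ``holomorphic.'' Concretely, I would start from
\begin{equation*}
    \frac{\partial f}{\partial \conjugate{z}} = \frac{1}{2}\left(\frac{\partial f}{\partial x} + \im\,\frac{\partial f}{\partial y}\right)
\end{equation*}
as given by the Wirtinger theorem above, substitute $f = u + \im v$, and distribute the partial derivatives to obtain
\begin{equation*}
    \frac{\partial f}{\partial \conjugate{z}} = \frac{1}{2}\left[\left(\frac{\partial u}{\partial x} - \frac{\partial v}{\partial y}\right) + \im\left(\frac{\partial v}{\partial x} + \frac{\partial u}{\partial y}\right)\right].
\end{equation*}

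Next, I would argue that both of the real parentheses vanish because $f$ is holomorphic. Since the paper has not stated the Cauchy--Riemann equations as a prior lemma, I would derive them briefly on the spot: by holomorphy the complex limit $f'(z) = \lim_{h \to 0} (f(z+h) - f(z))/h$ exists and must be the same when $h$ approaches $0$ along the real axis ($h \in \mathbb{R}$) and along the imaginary axis ($h = \im\, t$, $t \in \mathbb{R}$). Equating the two limits gives $\partial u/\partial x + \im\, \partial v/\partial x = -\im\,\partial u/\partial y + \partial v/\partial y$, whose real and imaginary parts are exactly $\partial u/\partial x = \partial v/\partial y$ and $\partial v/\partial x = -\partial u/\partial y$. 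Substituting these into the displayed expression above makes both parentheses zero, concluding the proof.

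There is essentially no hard step: the only subtlety is making explicit that ``holomorphic'' is being used in its strong sense (complex-differentiable) and that the real-differentiability of $u$ and $v$ stated in the hypothesis is what allows the partial derivatives in the Wirtinger formula to be taken in the ordinary real-analysis sense. If the authors would rather avoid re-deriving Cauchy--Riemann inside this proof, an alternative would be to treat it as a one-line citation to standard complex analysis and go directly from the expanded form to the conclusion. Either way, the whole argument is a three-line computation once the Cauchy--Riemann identities are in hand.
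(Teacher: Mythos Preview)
Your proposal is correct and matches the paper's proof essentially line for line: the paper starts from the Wirtinger definition of $\partial f/\partial \conjugate{z}$, expands $f = u + \im v$ to obtain exactly your displayed expression, and then invokes the Cauchy--Riemann equations to conclude both parentheses vanish. The only difference is that the paper does have the Cauchy--Riemann equations stated and proved separately (as a theorem in the appendix), so it simply cites that result rather than re-deriving it inline---which is precisely the alternative you suggested.
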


%TODO: ver bien ésto

\begin{proof}
    Using Wirtinger calculus (Section \ref{chap:wirtinger-calculus}).
    \begin{equation}
        \frac{\partial f}{\partial \conjugate{z}} = 
        \frac{1}{2} \left( \frac{\partial f}{\partial x} + \im \, \frac{\partial f}{\partial y} \right) \, .
    \end{equation}
    By definition, then:
    \begin{equation}
    \begin{aligned}
        \frac{\partial f}{\partial \conjugate{.}} 
        & = \frac{1}{2} \left( \frac{\partial f}{\partial x} + \im \, \frac{\partial f}{\partial y} \right) \, ,\\
        & = \frac{1}{2} \left( 
            \left(
                \frac{\partial u}{\partial x} + \im \,  \frac{\partial v}{\partial x}
            \right) + \im \,
            \left(
                \frac{\partial u}{\partial y} + \im \,  \frac{\partial v}{\partial y}
            \right)
            \right) \, ,\\
        & = \frac{1}{2} \left( 
            \left(
                \frac{\partial u}{\partial x} -  \frac{\partial v}{\partial y}
            \right) + \im \,
            \left(
                \frac{\partial v}{\partial x} +  \frac{\partial u}{\partial y}
            \right)
            \right) \, .
    \end{aligned}
    \label{eq:partial-f-respect-conjugate-z}
    \end{equation}
    
    Because $f$ is \textit{holomorphic} then the Cauchy-Riemann equations (Theorem \ref{the:cauchy-riemann}) applies making Equation \ref{eq:partial-f-respect-conjugate-z} equal to zero.
\end{proof}

% TODO: Glosair saying about the notation g_Re{}

% --------------------------------------------------------------
Even though so far we have always talked about general chain rule definitions. Here we will demonstrate a particularly interesting chain rule used when working with neural networks. For this optimization technique, the cost function to optimize is always real, even if its variables are not. Therefore, the following chain rule will have an application interest.

\begin{theorem}[complex chain rule with real output]
Given $f:\mathbb{C} \to \mathbb{R}$, $g:\mathbb{C} \to \mathbb{C}$ with $g(z)=r(z)+\im \,s(z)$, $z = x + \im \,y \in \mathbb{C}$:
\begin{equation}
    \frac{\partial f}{\partial z} = \frac{\partial f}{\partial r}\frac{\partial r}{\partial z} + \frac{\partial f}{\partial s}\frac{\partial s}{\partial z} \, .
\end{equation}

\begin{proof}
    For this proof, we will assume we are already working with Wirtinger Calculus for the partial derivative definition.
    \begin{equation}
    \begin{aligned}
        \frac{\partial f}{\partial z} 
            &= \frac{\partial f}{\partial g}\frac{\partial g}{\partial z} + \frac{\partial f}{\partial \conjugate{g}}\frac{\partial \conjugate{g}}{\partial z} \, ,\\
            &= \frac{1}{4} \left(\frac{\partial f}{\partial r} - \im \,\frac{\partial f}{\partial s}\right) \left(\frac{\partial g}{\partial x} - \im \,\frac{\partial g}{\partial y}\right) 
            + \frac{1}{4} \left(\frac{\partial f}{\partial r} + \im \,\frac{\partial f}{\partial s}\right) \conjugate{\left(\frac{\partial g}{\partial x} + \im \,\frac{\partial g}{\partial y}\right)} \, ,\\
            &= \frac{1}{4} \left(\frac{\partial f}{\partial r} - \im \,\frac{\partial f}{\partial s}\right)
            \left[ 
                \left(\frac{\partial r}{\partial x} + \im \,\frac{\partial s}{\partial x}\right) - \im \, \left(\frac{\partial r}{\partial y} + \im \,\frac{\partial s}{\partial y}\right)
            \right] + \cdots \\
            & \cdots + \frac{1}{4} \left(\frac{\partial f}{\partial r} + \im \,\frac{\partial f}{\partial s}\right) \conjugate{\left[ \left(\frac{\partial r}{\partial x} + \im \,\frac{\partial s}{\partial x}\right) + \im \,\left(\frac{\partial r}{\partial y} + \im \,\frac{\partial s}{\partial y}\right)\right] } \, . 
            % TODO
    \end{aligned}
    \end{equation}
\end{proof}

\end{theorem}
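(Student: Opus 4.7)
The plan is to begin from the standard two-term Wirtinger chain rule for the composition of complex functions, namely
\begin{equation*}
\frac{\partial f}{\partial z} = \frac{\partial f}{\partial g}\frac{\partial g}{\partial z} + \frac{\partial f}{\partial \conjugate{g}}\frac{\partial \conjugate{g}}{\partial z}\,,
\end{equation*}
and then unpack both the outer and inner derivatives via the Wirtinger definitions in \eqref{eq:wirtinger-calculus}. Since $g$ is not assumed holomorphic, the $\partial f/\partial \conjugate{g}$ contribution cannot be dropped and is precisely what makes this statement nontrivial.

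First, I would treat $g$ as the complex variable with real part $r$ and imaginary part $s$, so that Wirtinger calculus applied to the outer function yields
\begin{equation*}
\frac{\partial f}{\partial g} = \frac{1}{2}\left(\frac{\partial f}{\partial r} - \im\,\frac{\partial f}{\partial s}\right),\qquad
\frac{\partial f}{\partial \conjugate{g}} = \frac{1}{2}\left(\frac{\partial f}{\partial r} + \im\,\frac{\partial f}{\partial s}\right).
\end{equation*}
Next, since the Wirtinger $z$-derivative is $\mathbb{R}$-linear and $\conjugate{g} = r - \im\, s$, the inner derivatives split cleanly as
\begin{equation*}
\frac{\partial g}{\partial z} = \frac{\partial r}{\partial z} + \im\,\frac{\partial s}{\partial z},\qquad
\frac{\partial \conjugate{g}}{\partial z} = \frac{\partial r}{\partial z} - \im\,\frac{\partial s}{\partial z}\,.
\end{equation*}

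Substituting these four expressions into the two-term chain rule and distributing the products, each of the two halves produces four terms. When the two halves are summed, the cross terms $\pm\im\,(\partial f/\partial r)(\partial s/\partial z)$ and $\mp\im\,(\partial f/\partial s)(\partial r/\partial z)$ cancel pairwise, while the diagonal terms combine to give exactly
\begin{equation*}
\frac{\partial f}{\partial r}\frac{\partial r}{\partial z} + \frac{\partial f}{\partial s}\frac{\partial s}{\partial z}\,,
\end{equation*}
which is the desired identity. Note that the hypothesis $f:\mathbb{C}\to\mathbb{R}$ is not actually used in this cancellation; the formula holds for any complex-valued outer $f$, and the real-output case is the one of primary interest for loss functions.

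The main conceptual point is that one must apply the full two-term Wirtinger chain rule at the outer level; the brute-force route of pushing everything down to the underlying real variables $x,y$ via Corollary \ref{cor:partial-f-g-real} is also valid but considerably messier. There is no genuine obstacle — the work is purely algebraic — but the one place to be careful is tracking signs in the $\conjugate{g}$-derivative so that the imaginary cross terms cancel exactly rather than accidentally doubling.
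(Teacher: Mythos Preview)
Your proposal is correct and follows the same overall strategy as the paper: start from the two-term Wirtinger chain rule $\partial f/\partial z = (\partial f/\partial g)(\partial g/\partial z) + (\partial f/\partial \conjugate{g})(\partial \conjugate{g}/\partial z)$, expand the outer derivatives via Wirtinger's definitions, and simplify.

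The one difference worth flagging is in how you handle the \emph{inner} derivatives. The paper pushes $\partial g/\partial z$ and $\partial \conjugate{g}/\partial z$ all the way down to the real coordinates $x,y$ (picking up the factor $1/4$ and the expressions in $\partial r/\partial x,\partial s/\partial y$, etc.), and in fact stops at that expanded form without displaying the final cancellation. You instead use the $\mathbb{R}$-linearity of $\partial/\partial z$ to write $\partial g/\partial z = \partial r/\partial z + \im\,\partial s/\partial z$ and $\partial \conjugate{g}/\partial z = \partial r/\partial z - \im\,\partial s/\partial z$ directly, which keeps everything at the $z$-level and makes the pairwise cancellation of the imaginary cross terms immediate. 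This is a genuine streamlining: your route is shorter, avoids the descent to $x,y$, and actually carries the computation through to the stated identity. Your closing remark that the real-valuedness of $f$ is not essential to the algebra is also correct and a nice observation.
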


\subsection{Notation} \label{sec:notation}

A \acrshort{mlp} can be represented generically by Figure \ref{fig:mlp-net-diagram}. For that given multi-layered neural network, we define the following variables, which will be used in the following subsections and throughout our work:

\begin{itemize}
    \item{\makebox[2cm][l]{$0 \leq l \leq L$} corresponds to the layer index where $L$ is the output layer index and 0 is the input layer index.}
    \item{\makebox[2cm][l]{$1 \leq n \leq N_l$} the neuron index, where $N_l$ denotes the number of neurons of layer $l$.}
    \item{\makebox[2cm][l]{$\omega^{(l)}_{nm}$} weight of the $n^{th}$ neuron of layer $l-1$ with the $m^{th}$ neuron of layer $l$.}
    \item{\makebox[2cm][l]{$\activation$} activation function.}
    \item{\makebox[2cm][l]{$X_n^{(l)} = \activation \left( V_n^{(l)} \right)$} \qquad considered the output of layer $l$ and input of layer $l+1$, in particular, $X_n^{(L)} = y_n$. With $V_n^{(l)}$ being}
    \item $V_n^{(l)} = \displaystyle \sum_{m=1}^{N_{l-1}} \omega^{(l)}_{nm} X_m^{(l-1)}$
    \item{\makebox[2cm][l]{$e_n(d_n, y_n)$} error function. $d_n$ is the desired output for neuron $n$ of the output layer.}
    \item{\makebox[2cm][l]{$ \loss = \displaystyle \sum_n^{N_L} e_n$} cost or loss function.}
    \item{\makebox[2cm][l]{$ E = \displaystyle \frac{1}{P} \sum_p^P \loss_p$} \qquad minimum error function, with $P$ the total number of training cases or the size of the desired batch size.}
\end{itemize}

\begin{figure}[ht]
\centering
\includegraphics[width=\linewidth]{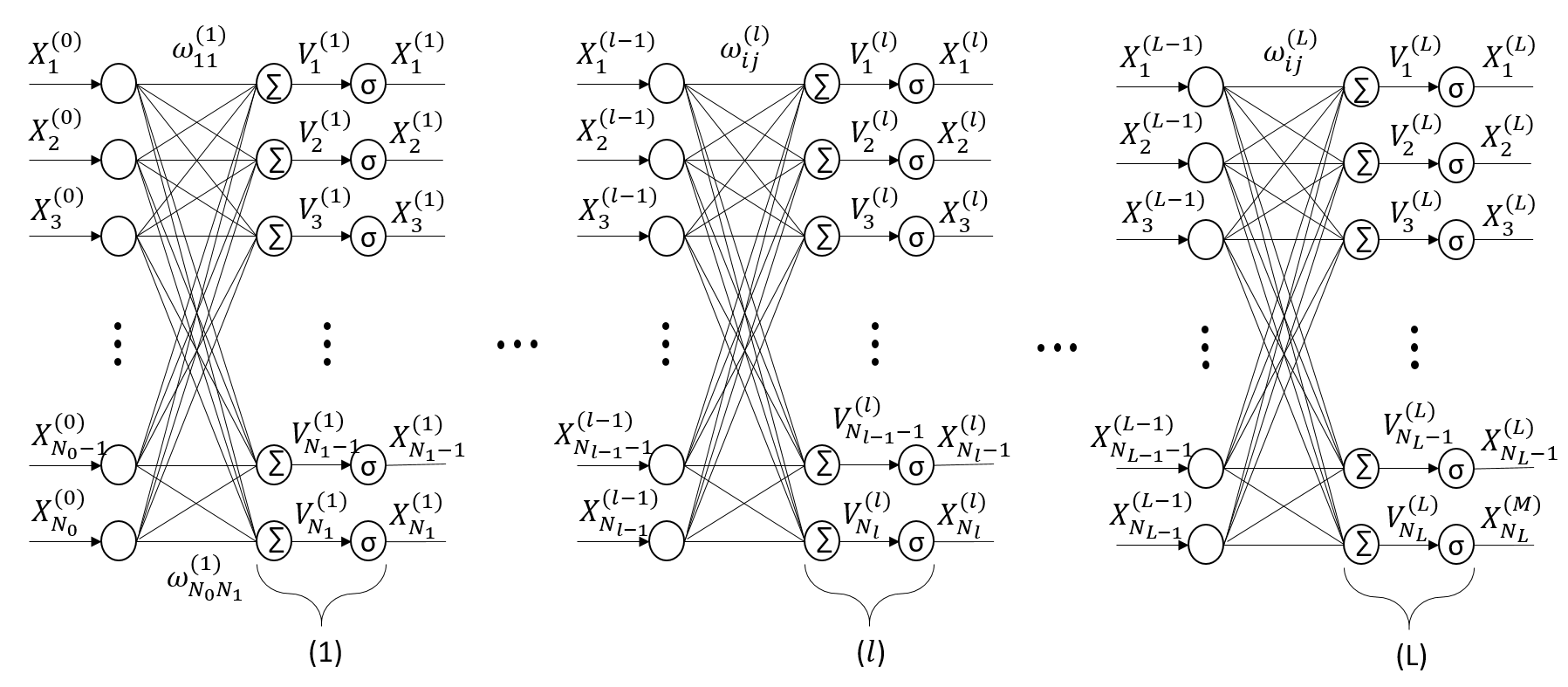}
\caption{Feed-forward Neural Network Diagram.}
\label{fig:mlp-net-diagram}
\end{figure}

\subsection{Complex-Valued Backpropagation}

The loss function remains real-valued to minimize an empirical risk during the learning process. Despite the architectural change for handling complex-valued inputs, the main challenge of \acrshort{cvnn} is the way to train such neural networks.

A problem arises when implementing the learning algorithm (commonly known as backpropagation). The parameters of the network must be optimized using the gradient or any partial-derivative-based algorithm. However, standard complex derivatives only exist for the so-called \textit{holomorphic} or \textit{analytic} functions.

Because of Liouville's theorem (discussed in Section \ref{chap:liouville-theorem}), \acrshort{cvnn}s are bound to use \textit{non-holomorphic} functions and therefore can not be derived using standard complex derivative definition. \acrshort{cvnn}s bring in \textit{non-holomorphic} functions in at least two ways \cite{amin2013learning}:
\begin{itemize}
    \item with the loss function being minimize over complex parameters
    \item with the \textit{non-holomorphic} complex-valued activation functions
\end{itemize}

Liouville's theorem implications were considered to be a big problem around 1990 as some researchers believed it led to the impossibility of obtaining and/or analyzing the dynamics of the \acrshort{cvnn}s \cite{hirose2013complex}.

However, Wirtinger calculus (discussed in Section \ref{chap:wirtinger-calculus}) generalizes the notion of complex derivative, making the \textit{holomorphic} function a special case only, allowing researchers to successfully implement \acrshort{cvnn}s. Under Wirtinger calculus, the gradient is defined as \cite{amin2011, LI2008GRADIENT}: 
\begin{equation}
    \nabla_z f = 2 \, \frac{\partial f}{\partial \conjugate{z}} \, .
    \label{eq:complex-gradient-def}
\end{equation}

When applying reverse-mode \acrshort{autodiff} on the complex domain, some good technical reports can be found, such as \cite{boeddeker2017computation} or \cite{hunger2007introduction}. However, it is left to be verified if \textit{Tensorflow} correctly applies the equations mentioned in these reports.
Indeed, no official documentation could be found that asserts the implementation of these equations and \textit{Wirtinger Calculus} when using \textit{Tensorflow} gradient on complex variables. This is not the case with \textit{PyTorch}, where they explicitly say that Wirtinger calculus is used when computing the derivative in the following link: \href{https://pytorch.org/docs/stable/notes/autograd.html#autograd-for-complex-numbers}{\path{pytorch.org/docs/stable/notes/autograd.html}}. In said link, they indirectly say also that "This convention matches TensorFlow’s convention for complex differentiation [...]" referencing the implementation of Equation \ref{eq:complex-gradient-def}.

However, we do know reverse-mode \acrshort{autodiff} is the method used by \textit{Tensorflow} \cite{geron2019hands}.
When reverse engineering the gradient definition of \textit{Tensorflow}, the conclusion discussed on the official \textit{Tensorflow}'s GitHub repository issue report \href{https://github.com/tensorflow/tensorflow/issues/3348}{3348} is that the gradient for $f: \mathbb{C} \to \mathbb{C}$ is computed as:
\begin{equation}
    \nabla_z f = \conjugate{\left( \frac{\partial f}{\partial z} + \frac{\partial \conjugate{f}}{\partial z} \right)} = 2 \frac{\partial \Re(f)}{\partial \conjugate{z}}\, .
\end{equation}

For application purposes, as the loss function is real-valued, we are only interested in cases where $f: \mathbb{C}^n \longrightarrow \mathbb{R} $ for what the above equation can be simplified as:
\begin{equation}
    \nabla_z f = 2 \frac{\partial f}{\partial \conjugate{z}} = \left( \frac{\partial f}{\partial x} + \im \frac{\partial f}{\partial y} \right) \, ,
\end{equation}

which indeed coincides with Wirtinger calculus definition. For this reason, it was not necessary to implement \acrshort{autodiff} from scratch, and \textit{Tensorflow}'s algorithm was used instead.

The mathematical equations on how to compute this figure are explained on Appendix \ref{sec:real-valued-backprop} and it's implementation for automatic calculation on a CPU or GPU is described in Appendix \ref{sec:forward-mode-autodiff} and \ref{sec:reverse-autodiff}.

The analysis in the complex case is analogous to that made in the real-valued backpropagation on Appendix \ref{sec:real-valued-backprop}. With the difference that now $\activation : \mathbb{C} \longrightarrow \mathbb{C}$, $e_n : \mathbb{C} \longrightarrow \mathbb{R}$ and $\omega_{ij}^{(l)}, X_n^{(l)}, V_n^{(l)}, e_n^{(l)} \in \mathbb{C}$.\\

Now, the chain rule is changed using \eqref{the:complex-chain-rule} so Equation \eqref{eq:partial-loss-function-chain-rule-definition} changes to:
\begin{equation}
    \frac{\partial e}{\partial \omega}
    = \frac{\partial e}{\partial X} \frac{\partial X}{\partial V} \frac{\partial V}{\partial \omega}
    + \frac{\partial e}{\partial X} \frac{\partial X}{\partial \conjugate{V}} \frac{\partial \conjugate{V}}{\partial \omega}
    + \frac{\partial e}{\partial \conjugate{X}} \frac{\partial \conjugate{X}}{\partial V} \frac{\partial V}{\partial \omega}
    + \frac{\partial e}{\partial \conjugate{X}} \frac{\conjugate{X}}{\partial \conjugate{V}} \frac{\partial \conjugate{V}}{\partial \omega} \, .
    \label{eq:complex-partial-loss-function-chain-rule}
\end{equation}

Note that we have used the upper line to denote the conjugate for clarity. All subindexes have been removed for clarity but they stand the same as in Equation \ref{eq:partial-loss-function-chain-rule-definition}.

As $e_n : \mathbb{C} \longrightarrow \mathbb{R}$ then using the conjugation rule (Definition \ref{def:conjugation-rule}):
\begin{equation}
    \begin{aligned}
        \frac{\partial e}{\partial \conjugate{X}} & = \conjugate{\left( \frac{\partial e}{\partial X} \right)} \, ,\\ 
        \frac{\partial \conjugate{X}}{\partial \conjugate{V}} & = \conjugate{\left( \frac{\partial X}{\partial V} \right)} \, ,\\
        \frac{\partial X}{\partial \conjugate{V}} & = \conjugate{\left( \frac{\partial \conjugate{X}}{\partial V} \right)} \, ,\\
    \end{aligned}
\end{equation}
so that not all the partial derivatives must be calculated.

Focusing our attention on the derivative $\partial V / \partial \omega$, a differentiation between layer difference of $V$ and $\omega$ will be made in this approach. The simplest is the one where the layer from $V_n^{(l)}$ is the same as the weight. Regardless of the complex domain, $V_n^{(l)}$ is still equal to $\displaystyle \sum_i^{N_{l-1}} \omega^{(l)}_{ni} X_i^{(l-1)}$ for what the value of the derivative remains unchanged. For the wights ($\omega$) of the previous layer, the derivative is as follows:
\begin{equation}
\begin{aligned}
    \frac{\partial V_n^{(l)}}{\partial \omega_{jk}^{(l-1)}} & = \frac{\partial \left( \displaystyle \sum_i^{N_{l-1}} \omega^{(l)}_{n\mathrm{j}} X_\mathrm{j}^{(l-1)} \right)}{\partial \omega_{jk}^{(l-1)}} \, ,\\
    & = \omega_{nj}^{(l)} \frac{\partial X_j^{(l-1)}}{\partial \omega_{jk}^{(l-1)}} \, ,\\
    & = \omega_{nj}^{(l)} \left[ \frac{\partial X_j^{(l-1)}}{\partial V_j^{(l-1)}} \frac{\partial V_j^{(l-1)}}{\partial \omega_{jk}^{(l-1)}} + \frac{\partial X_j^{(l-1)}}{\partial \conjugate{V}_j^{(l-1)}} \frac{\partial \conjugate{V}_j^{(l-1)}}{\partial \omega_{jk}^{(l-1)}} \right] \, .
\end{aligned}
\end{equation}

Now by definition, $V_n^{(l)}$ is analytic because of being a polynomial series and therefore is holomorphic as well. Using Theorem \ref{the:holo-df-dzc-0}, $\partial \conjugate{V}_j^{(l-1)}/ \partial\omega_{jk}^{(l-1)} = 0$. The second term could be removed. Therefore the equation is simplified to the following:
\begin{equation}
    \frac{\partial V_n^{(l)}}{\partial \omega_{jk}^{(l-1)}} = \omega_{nj}^{(l)} \frac{\partial X_j^{(l-1)}}{\partial V_j^{(l-1)}} \frac{\partial V_j^{(l-1)}}{\partial \omega_{jk}^{(l-1)}} 
    = \omega_{nj}^{(l)} \frac{\partial X_j^{(l-1)}}{\partial V_j^{(l-1)}} X_k^{(l-2)} \, .
    \label{eq:complex-partial-vn-w-l-h-1}
\end{equation}

For the rest of the cases where the layers are farther apart, the equation is as follows:
\begin{equation}
\begin{aligned}
    \frac{\partial V_n^{(l)}}{\partial \omega_{jk}^{(h)}} 
        & = \frac{\displaystyle\partial \left( \sum_i^{N_{l-1}} \omega^{(l)}_{ni} X_i^{(l-1)} \right)}{\partial \omega_{jk}^{(l-1)}} \, ,\\
        & = \sum_i^{N_{l-1}} \omega_{nj}^{(l)} \left[ \frac{\partial X_i^{(l-1)}}{\partial V_i^{(l-1)}} \frac{\partial V_i^{(l-1)}}{\partial \omega_{jk}^{(h)}} + \frac{\partial X_i^{(l-1)}}{\partial \conjugate{V}_i^{(l-1)}} \frac{\partial \conjugate{V}_i^{(l-1)}}{\partial \omega_{jk}^{(h)}} \right]\, ,
\end{aligned}
\label{eq:complex-partial-vn-w-l-h-2}
\end{equation}
where $h \leq l - 2$.
Therefore, based on Equations \ref{eq:complex-partial-vn-w-l-h-2} and \ref{eq:complex-partial-vn-w-l-h-1}, the final equation remains as follows:
\begin{equation}
    \frac{\partial V_n^{(l)}}{\partial \omega_{jk}^{(h)}} =
    \begin{cases}
        X_j^{(l-1)} & h = l \, ,\\
        \omega_{nj}^{(l)} \frac{\partial X_j^{(l-1)}}{\partial V_j^{(l-1)}} \frac{\partial V_j^{(l-1)}}{\partial \omega_{jk}^{(l-1)}} & h = l - 1 \, ,\\
        \displaystyle\sum_i^{N_{l-1}} \omega_{nj}^{(l)} \left[ \frac{\partial X_i^{(l-1)}}{\partial V_i^{(l-1)}} \frac{\partial V_i^{(l-1)}}{\partial \omega_{jk}^{(h)}} + \frac{\partial X_i^{(l-1)}}{\partial \conjugate{V}_i^{(l-1)}} \frac{\partial \conjugate{V}_i^{(l-1)}}{\partial \omega_{jk}^{(h)}} \right] & h \leq l - 2\, .
    \end{cases}
    \label{eq:complex-partial-vn-w-generic}
\end{equation}

Using the property that $\partial \conjugate{V} / \partial \omega = \conjugate{(\partial V / \partial \conjugate{\omega})}$ and the distributed properties of the conjugate, the following equation can be derived:
\begin{equation}
    \frac{\partial \conjugate{V}_n^{(l)}}{\partial \omega_{jk}^{(h)}} =
    \begin{cases}
        0 & h = l \, ,\\
        \displaystyle\conjugate{\omega}_{nj}^{(l)} \frac{\partial \conjugate{X}_j^{(l-1)}}{\partial V_j^{(l-1)}} \frac{\partial V_j^{(l-1)}}{\partial \omega_{jk}^{(l-1)}} & h = l - 1 \, ,\\
        \displaystyle\sum_i^{N_{l-1}} \conjugate{\omega}_{nj}^{(l)} \left[ \frac{\partial \conjugate{X}_i^{(l-1)}}{\partial V_i^{(l-1)}} \frac{\partial V_i^{(l-1)}}{\partial \omega_{jk}^{(h)}} + \frac{\partial \conjugate{X}_i^{(l-1)}}{\partial \conjugate{V}_i^{(l-1)}} \frac{\partial \conjugate{V}_i^{(l-1)}}{\partial \omega_{jk}^{(h)}} \right] & h \leq l - 2\, .
    \end{cases}
    \label{eq:conjugate-complex-partial-vn-w-generic}
\end{equation}

Using Equations \ref{eq:complex-partial-vn-w-generic} and \ref{eq:conjugate-complex-partial-vn-w-generic}, we can calculate all possible values of $\partial V / \partial \omega$ and $\partial \conjugate{V} / \partial \omega$. Once defined the loss and the activation function, using Equation \ref{eq:complex-partial-loss-function-chain-rule}, backpropagation can be made also in the complex plane.

\subsubsection{H\"ansch and Hellwich definition}

Ronny H\"ansch and Olaf Helllwich \cite{HANSCH_HELLWICH} made a similar approach for the general equations of complex neural networks by using the complex chain rule. Using \eqref{eq:complex-partial-loss-function-chain-rule}, they define $X$ and $V$ from the same layer as the weight instead of $e$.

By doing so, and using the fact that $\partial \conjugate{V}_j^{(l-1)}/ \partial\omega_{jk}^{(l-1)} = 0$, two terms are deleted. In conjunction with the complex equivalent of Equation \ref{eq:partial-Vn-w-with-h-l-0}, Equation \ref{eq:complex-partial-loss-function-chain-rule} is simplified to:
\begin{equation}
\begin{aligned}
    \frac{\partial e_n}{\partial \omega_{ji}^{(l)}}  
    & = \frac{\partial e_n}{\partial X_i^{(l)}} \frac{\partial X_i^{(l)}}{\partial V_i^{(l)}} \frac{\partial V_i^{(l)}}{\partial \omega_{ji}^{(l)}} + \frac{\partial e_n}{\partial \conjugate{X}_i^{(l)}} \frac{\partial \conjugate{X}_i^{(l)}}{\partial V_i^{(l)}} \frac{\partial V_i^{(l)}}{\partial \omega_{ji}^{(l)}} \, ,\\
    & = \frac{\partial e_n}{\partial X_i^{(l)}} \frac{\partial X_i^{(l)}}{\partial V_i^{(l)}} X_j^{(l-1)} + \frac{\partial e_n}{\partial \conjugate{X}_i^{(l)}} \frac{\partial \conjugate{X}_i^{(l)}}{\partial V_i^{(l)}} X_j^{(l-1)}\, .
\end{aligned}
\label{eq:complex-hansch-loss-function-chain-rule}
\end{equation}

Now instead of making the analysis for $\partial V / \partial \omega$, the equivalent analysis will be made for $\partial e / \partial X$. The case with $l = L$ is the trivial one when its value depends on the chosen error function. For $l = L-1$, the following equality applies:
\begin{equation}
    \frac{\partial e_n}{\partial X_i^{(L - 1)}} 
     = \frac{\partial e_n}{\partial V_n^{L}} \frac{\partial V_n^{L}}{\partial X_i^{(L - 1)}} + \frac{\partial e_n}{\partial \conjugate{V}_n^{L}} \frac{\partial \conjugate{V}_n^{L}}{\partial X_i^{(L - 1)}} \, .
\label{eq:complex-partial-en-x-h-l-1}
\end{equation}

However, as $V_n^{(L)} = \displaystyle\sum_i \omega_{ni}^{(L)} X_i^{(L - 1)}$, its derivatives are:
\begin{equation}
\begin{aligned}
    \frac{\partial V_n^{(l+1)}}{\partial X_i^{(l)}} &= \omega_{ni}^{(l+1)} \, ,\\
    \frac{\partial \conjugate{V}_n^{(l+1)}}{\partial X_i^{(l)}} &= 0 \,.
\end{aligned}
\label{eq:wirtinger_derivative_wrt_neurons}
\end{equation}

For that reason, the second term of Equation \ref{eq:complex-partial-en-x-h-l-1} is deleted, and using the chain rule again we have:
\begin{equation}
\begin{aligned}
    \frac{\partial e_n}{\partial X_i^{(L - 1)}} 
    & = \frac{\partial e_n}{\partial V_n^{(L)}} \frac{\partial V_n^{(L)}}{\partial X_i^{(L - 1)}} \, ,\\
    &= \frac{\partial e_n}{\partial X_n^{(L)}} \frac{\partial X_n^{(L)}}{\partial V_n^{(L)}} \frac{\partial V_n^{(L)}}{\partial X_i^{(L - 1)}} + \frac{\partial e_n}{\partial \conjugate{X}_n^{L}} \frac{\partial \conjugate{X}_n^{(L)}}{\partial V_n^{(L)}} \frac{\partial V_n^{(L)}}{\partial X_i^{(L - 1)}} \, , \\
    &= \frac{\partial e_n}{\partial X_n^{(L)}} \frac{\partial X_n^{(L)}}{\partial V_n^{(L)}} \omega_{in}^{(L)} + \frac{\partial e_n}{\partial \conjugate{X}_n^{(L)}} \frac{\partial \conjugate{X}_n^{(L)}}{\partial V_n^{(L)}} \omega_{in}^{(L)} \, ,
\end{aligned}
\end{equation}
where the partial derivatives depend on the definition of the loss and activation function.

For the general case of $l \leq L - 2$ the derivation is similar to the previous one:
\begin{equation}
\begin{aligned}
    \frac{\partial e_n}{\partial X_i^{(l)}} 
    & = \sum_k \frac{\partial e_n}{\partial V_k^{l+1}} \frac{\partial V_k^{l+1}}{\partial X_i^{(l)}} + \frac{\partial e_n}{\partial \conjugate{V}_k^{l+1}} \frac{\partial \conjugate{V}_k^{l+1}}{\partial X_i^{(l)}} \, ,\\
    & = \sum_k \frac{\partial e_n}{\partial V_k^{l+1}} \frac{\partial V_k^{l+1}}{\partial X_i^{(l)}} \, ,\\
    & = \sum_k \frac{\partial e_n}{\partial V_k^{l+1}} \omega_{ik}^{(l+1)} \, , \\
    &= \sum_k \frac{\partial e_n}{\partial X_k^{(l+1)}} \frac{\partial X_k^{(l+1)}}{\partial V_k^{(l+1)}} \omega_{ik}^{(l+1)} + \frac{\partial e_n}{\partial \conjugate{X}_k^{(l+1)}} \frac{\partial \conjugate{X}_k^{(l+1)}}{\partial V_k^{(l+1)}} \omega_{ik}^{(l+1)}\, .
\end{aligned}
\end{equation}
Therefore, $\partial e / \partial X$ can be defined as:
\begin{equation}
    \frac{\partial e_n}{\partial X_i^{(l)}} =
    \begin{cases}
        \frac{\partial e_n}{\partial X_n^{(L)}} \, ,\qquad\qquad\qquad\qquad\qquad\qquad\qquad\qquad l = L \, ,\\
        \frac{\partial e_n}{\partial X_n^{(L)}} \frac{\partial X_n^{(L)}}{\partial V_n^{(L)}} \omega_{in}^{(L)} + \frac{\partial e_n}{\partial \conjugate{X}_n^{(L)}} \frac{\partial \conjugate{X}_n^{(L)}}{\partial V_n^{(L)}} \omega_{in}^{(L)} \, , \qquad\qquad l = L - 1 \, ,\\
        \begin{aligned}
            & \displaystyle\sum_k \left( \frac{\partial e_n}{\partial X_k^{(l+1)}} \frac{\partial X_k^{(l+1)}}{\partial V_k^{(l+1)}} \omega_{ik}^{(l+1)} + \frac{\partial e_n}{\partial \conjugate{X}_k^{(l+1)}} \frac{\partial \conjugate{X}_k^{(l+1)}}{\partial V_k^{(l+1)}} \omega_{ik}^{(l+1)} \right) \, , \\ 
            & \qquad\qquad\qquad\qquad\qquad\qquad\qquad\qquad\qquad\quad l \leq L - 2 \, .
        \end{aligned}
        
    \end{cases}
\label{eq:complex-partial-en-xi-generic}
\end{equation}
As $e_n : \mathbb{C} \longrightarrow \mathbb{R}$, then applying \eqref{def:conjugation-rule}:
\begin{equation}
    \frac{\partial e_n}{\partial \conjugate{X}_i^{(l)}} = \conjugate{\left( \frac{\partial e_n}{\partial X_i^{(l)}} \right)} \, .
\end{equation}
Using this latest equality with Equations \ref{eq:complex-hansch-loss-function-chain-rule} and \ref{eq:complex-partial-en-xi-generic}, the backpropagation algorithms are fully defined.

\section{Complex-Valued layers}

\acrshort{cvnn}, as opposed to conventional \acrshort{rvnn}, possesses complex-valued input, which allows working with imaginary data without any pre-processing needed to cast its values to the real-valued domain. Each layer of the complex network operates analogously to a real-valued layer with the difference that its operations are on the complex domain (addition, multiplication, convolution, etc.) with trainable parameters being complex-valued (weights, bias, kernels, etc.). Activation functions are also defined on the complex domain so that $f: \mathbb{C} \to \mathbb{C}$ and will be described on Section \ref{sec:activation-functions}.

A wide variety of complex layers is supported by the library, and the full list can be found in \href{https://complex-valued-neural-networks.readthedocs.io/en/latest/layers.html}{\path{complex-valued-neural-networks.rtfd.io/en/latest/layers.html}}.

Some layers, such as dense layers, can be extended naturally as addition and multiplication are defined in the complex domain. Therefore, just by making the neurons complex-valued, their behavior is evident. The same analogy can be made for convolutional layers as the transformation from the complex to the real plane does not change the resolution of the image to justify increasing the kernel size or changing the stride.

Special care must be taken when implementing \lstinline{ComplexDropout} as applying it to both real and imaginary parts separately will result in unexpected behavior as the ignoring weights will not be coincident, e.g., one might mask the real part while using the imaginary part for the computation. This, however, was taken into account for the layer implementation. The usage of this layer is analogous to \textit{Tensorflow} \href{https://www.tensorflow.org/api_docs/python/tf/keras/layers/Dropout}{Dropout} layer, which also uses the boolean \lstinline{training} parameter indicating whether the layer should behave in training mode (adding dropout) or in inference mode (doing nothing). 

Other layers, such as \lstinline{ComplexFlatten} or \lstinline{ComplexInput}, needed to be implemented as \textit{Tensorflow} equivalent \href{https://www.tensorflow.org/api_docs/python/tf/keras/layers/Flatten}{\lstinline{Flatten}} and \href{https://www.tensorflow.org/api_docs/python/tf/keras/Input}{\lstinline{Input}} cast the output to float.

\subsection{Complex Pooling Layers} \label{sec:pooling-layers}

Pooling layers are not so straightforward. In the complex domain, their values are not ordered as the real values, meaning there is no sense of a maximum value, rendering it impossible to implement a Max Pooling layer directly on the input. Reference \cite{zhang2017complex} proposes to use the norm of the complex figure to make this comparison, and this method is used for the toolbox implementation of the \lstinline{ComplexMaxPooling} layer. Average Pooling opens the possibility to other interpretations as well. Even if, for computing the average, we could add the complex numbers and divide by the total number of terms as one would do with real numbers, another option arises known as circular mean. The circular or angular mean is designed for angles and similar cyclic quantities. 

\begin{figure}[ht]
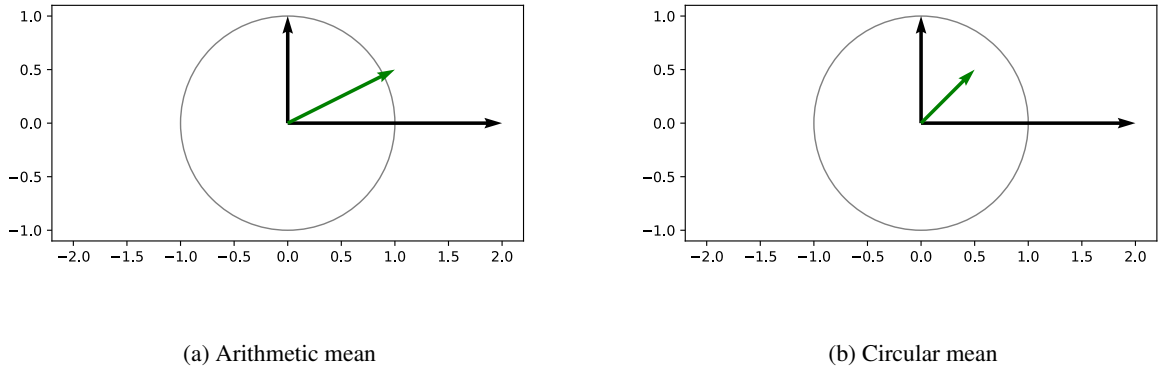

    \centering
    \begin{subfigure}[b]{0.49\textwidth}
         \centering
         \includesvg[width=\textwidth]{avg.svg}
         \caption{Arithmetic mean}
         \label{fig:conventional-mean}
    \end{subfigure}
    \hfill
    \begin{subfigure}[b]{0.49\textwidth}
         \centering
         \includesvg[width=\textwidth]{circ.svg}
         \caption{Circular mean}
         \label{fig:circular-mean}
    \end{subfigure}
    \caption{Mean example for two complex values.}
    \label{fig:mean-examples}
\end{figure}

When computing the average of $2 + 0 \,\im$ and $0 + 1 \,\im$, the conventional complex mean will yield $1 + 0.5 \,\im$ for what the angle will be $\pi / 6$ although the vectors had angles of $\pi / 2$ and $0$ (Figure \ref{fig:conventional-mean}).  The circular mean consists of normalizing the values before computing the mean, which yields $0.5 + 0.5\, \im$, having an angle of $\pi / 4$ (Figure \ref{fig:circular-mean}). The circular mean will have a norm inside the unit circle. It will be at the unit circle if all angles are equal, and it will be null if the angles are equally distributed.
Another option for computing the mean is to use the circular mean definition for the angle and then compute the arithmetic mean of the norm separately, as represented in Figure \ref{fig:mine-avg}.

\begin{figure}
    \centering
    \includesvg[width=0.5\textwidth]{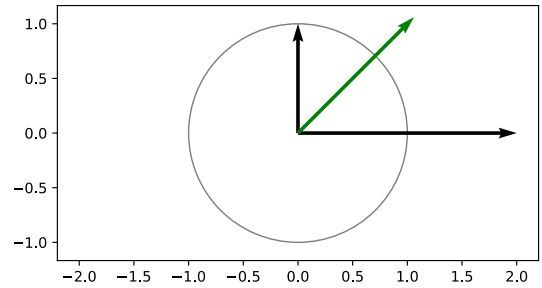}
    \caption{Circular mean with norm average.}
    \label{fig:mine-avg}
\end{figure}

All these options were used for computing the average pooling so the user could choose the case that fits their data best.

\subsection{Complex Upsampling layers} \label{sec:upsampling}

Upsampling techniques, which enable the enlargement of 2D images, were implemented. In particular, 3 techniques were applied, all of them documented in \href{https://complex-valued-neural-networks.readthedocs.io/en/latest/layers/complex_upsampling.html}{\path{complex-valued-neural-networks.readthedocs.io/en/latest/layers/complex_upsampling.html}}.

\begin{figure}
    \centering
    \includegraphics[width=0.7\textwidth]{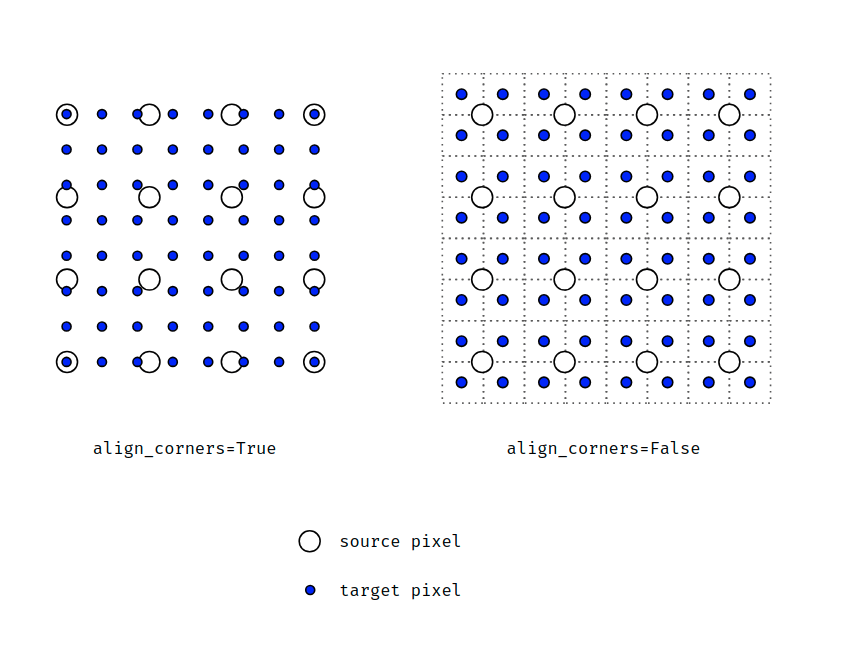}
    \caption{Upsampling alignments options extracted from \cite{forumPytorch}.}
    \label{fig:align-corners}
\end{figure}

\begin{itemize}
    \item \textbf{Complex Upsampling} Upsampling layer for 2D inputs. The upsampling can be done using the nearest neighbor or bilinear interpolation. There are at least two possible ways to implement the upsampling method depending if the corners are aligned or not (see Figure \ref{fig:align-corners}). Our implementation does not align corners.
    \item \textbf{Complex Transposed Convolution} Sometimes called \textbf{Deconvolution} although it does not compute the inverse of a convolution \cite{zeiler2010deconvolutional}.
    \item \textbf{Complex Un-Pooling} Inspired on the functioning of Max Un-pooling explained in Reference \cite{zafar2018hands}. Max un-pooling technique receives the maxed locations of a previous Max Pooling layer and then expands an image by placing the input values on those locations and filling the rest with zeros as shown in Figure \ref{fig:max-unpooling}.
\end{itemize}

\begin{figure}
    \centering
    \includegraphics[width=0.8\textwidth]{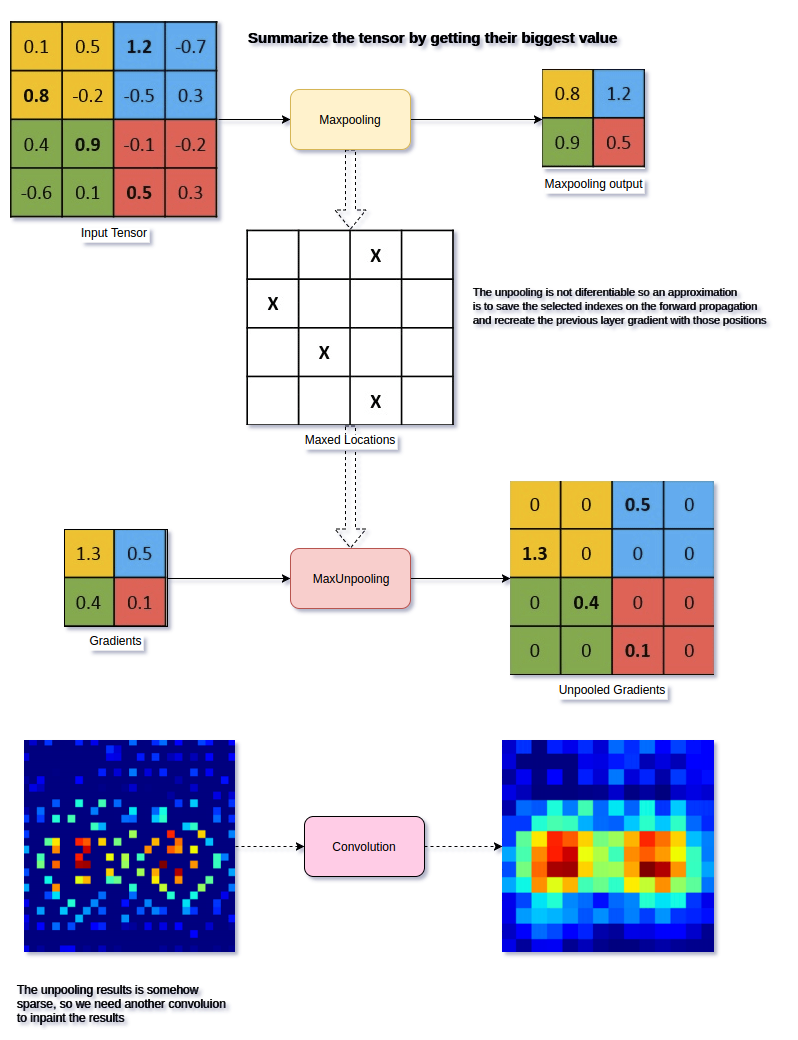}
    \caption{Max Unpooling graphical explanation extracted from \cite{zafar2018hands}.}
    \label{fig:max-unpooling}
\end{figure}

Complex un-pooling locations are not forced to be the output of a max pooling layer. However, in order to use it, we implemented as well a layer class named \lstinline{ComplexMaxPooling2DWithArgmax} which returns a tuple of tensors, the max pooling output and the maxed locations to be used as input of the un-pooling layer. 

There are two main ways to use the unpooling layer, either by using the expected output shape or using the \lstinline{upsampling_factor} parameter. 

\begin{lstlisting}[language=python]
from cvnn.layers import ComplexUnPooling2D, complex_input, ComplexMaxPooling2DWithArgmax
import tensorflow as tf
import numpy as np

x = get_img()       # Gets an image just for the example
# Removes the batch size shape
inputs = complex_input(shape=x.shape[1:])       
# Apply max-pooling and also get argmax
max_pool_o, max_arg = ComplexMaxPooling2DWithArgmax(strides=1, data_format="channels_last", name="argmax")(inputs)
# Applies the Unpooling
outputs = ComplexUnPooling2D(x.shape[1:])([max_pool_o, max_arg])

model = tf.keras.Model(inputs=inputs, outputs=outputs, name="pooling_model")
model.summary()
model(x)
\end{lstlisting}

It is possible to work with variable size images using a partially defined tensor, for example, \lstinline{shape=(None, None, 3)}. In this case, the second option (using \lstinline{upsampling_factor}) is the only way to deal with them in the following manner.

\begin{lstlisting}[language=python]
# Input is an unknown size RGB image
inputs = complex_input(shape=(None, None, 3)) 
max_pool_o, pool_argmax = ComplexMaxPooling2DWithArgmax(strides=1, data_format="channels_last", name="argmax")(inputs)
unpool = ComplexUnPooling2D(upsampling_factor=2)([max_pool_o, pool_argmax])

model = tf.keras.Model(inputs=inputs, outputs=outputs, name="pooling_model")
model.summary()
model(x)
\end{lstlisting}

All the discussed layers in this Section have a \lstinline{dtype} parameter which defaults to \lstinline{tf.complex64}, however, if \lstinline{tf.float32} or \lstinline{tf.float64} is used, the layer behaviour should be arithmetically equivalent to the corresponding \textit{Tensorflow} layer, allowing for fast test and comparison. In some cases, for example, \lstinline{ComplexFlatten}, this parameter has no effect as the layer can already deal with both complex- and real-valued input. 
Also, a method \lstinline{get_real_equivalent} is implemented which returns a new layer object with a real-valued \lstinline{dtype} and allows a \lstinline{output_multiplier} parameter in order to re-dimension the real network if needed. This is used to obtain an equivalent real-valued network as described \cite{barrachina2021about}. 

\section{Complex Activation functions} \label{sec:activation-functions}

One of the essential characteristics of \acrshort{cvnn} is its activation functions, which should be non-linear and complex-valued. 
An activation function is usually chosen to be piece-wise smooth to facilitate the computation of the gradient. 
The complex domain widens the possibilities to design an activation function, but the probable more natural way would be to extend a real-valued activation function to the complex domain. 

Our toolbox currently supports a wide range of complex-activation functions listed on \href{https://github.com/NEGU93/cvnn/blob/master/cvnn/activations.py#L534}{\lstinline{act_dispatcher}} dictionary.
% and documented in \href{https://complex-valued-neural-networks.readthedocs.io/en/latest/act_fun.html}{\path{complex-valued-neural-networks.rtfd.io/en/latest/act_fun.html}}

\begin{lstlisting}[language=Python]
act_dispatcher = {
    'linear': linear,
    # Complex input, real output
    'cast_to_real': cast_to_real,
    'convert_to_real_with_abs': convert_to_real_with_abs,
    'sigmoid_real': sigmoid_real,
    'softmax_real_with_abs': softmax_real_with_abs,
    'softmax_real_with_avg': softmax_real_with_avg,
    'softmax_real_with_mult': softmax_real_with_mult,
    'softmax_of_softmax_real_with_mult': softmax_of_softmax_real_with_mult,
    'softmax_of_softmax_real_with_avg': softmax_of_softmax_real_with_avg,
    'softmax_real_with_polar': softmax_real_with_polar,
    # Phasor networks
    'georgiou_cdbp': georgiou_cdbp,
    'mvn_activation': mvn_activation,
    'complex_signum': complex_signum,
    # Type A (cartesian)
    'cart_sigmoid': cart_sigmoid,
    'cart_elu': cart_elu,
    'cart_exponential': cart_exponential,
    'cart_hard_sigmoid': cart_hard_sigmoid,
    'cart_relu': cart_relu,
    'cart_leaky_relu': cart_leaky_relu,
    'cart_selu': cart_selu,
    'cart_softplus': cart_softplus,
    'cart_softsign': cart_softsign,
    'cart_tanh': cart_tanh,
    'cart_softmax': cart_softmax,
    # Type B (polar)
    'pol_tanh': pol_tanh,
    'pol_sigmoid': pol_sigmoid,
    'pol_selu': pol_selu,
    # Elementary Transcendental Functions (ETF)
    'etf_circular_tan': etf_circular_tan,
    'etf_circular_sin': etf_circular_sin,
    'etf_inv_circular_atan': etf_inv_circular_atan,
    'etf_inv_circular_asin': etf_inv_circular_asin,
    'etf_inv_circular_acos': etf_inv_circular_acos,
    'etf_circular_tanh': etf_circular_tanh,
    'etf_circular_sinh': etf_circular_sinh,
    'etf_inv_circular_atanh': etf_inv_circular_atanh,
    'etf_inv_circular_asinh': etf_inv_circular_asinh,
    # ReLU
    'modrelu': modrelu,
    'crelu': crelu,
    'zrelu': zrelu,
    'complex_cardioid': complex_cardioid
}
\end{lstlisting}

Indeed, to implement an activation function, it will suffice to add it to the \lstinline{act_dispatcher} dictionary to have full functionality. 

In our published toolbox, there are two ways of using an activation function, either by using a string listed on \lstinline{act_dispatcher} like

\begin{lstlisting}[language=Python]
ComplexDense(units=x, activation='cart_sigmoid')
\end{lstlisting}

or by using the function directly

\begin{lstlisting}[language=Python]
from cvnn.activations import cart_sigmoid

ComplexDense(units=x, activation=cart_sigmoid)
\end{lstlisting}

This usage also support using \href{https://www.tensorflow.org/api_docs/python/tf/keras/layers/Activation}{\lstinline{tf.keras.layers.Activation}} to implement an activation function directly as an independent layer.

\begin{lstlisting}[language=Python]
    from cvnn.activations import cart_relu

    layer = tf.keras.layers.Activation('cart_relu')
    layer = tf.keras.layers.Activation(cart_relu)
\end{lstlisting}

Although complex activation functions used on \acrshort{cvnn} are numerous \cite{scardapane2018complex, bassey2021survey, lee2022survey}, we will mainly focus on two types of activation functions that are an extension of the real-valued functions \cite{kuroe2003activation}:
\begin{itemize}
	\item Type-A: $\activation_A(z) = \activation_{\Re}\left(\Re(z)\right) +       \im \, \activation_{\Im}\left(\Im(z) \right)$,
	\item Type-B: $\activation_B(z) = \activation_r(|z|) \, \exp{\left( \im \, \activation_\phi(\arg(z))\right)}$,
\end{itemize}
where $\activation_{\Re},\activation_{\Im},\activation_r,\activation_{\phi}$ are all real-valued  functions\footnote{Although not with the same notation, these two types of complex-valued activation functions are also discussed in Section 3.3 of \cite{hirose2012complex}}. $\Re$ and $\Im$ operators are the real and imaginary parts of the input, respectively, and the $\arg$ operator gives the phase of the input.
Note that in Type-A, the real and imaginary parts of an input go through nonlinear functions separately, and in Type-B, the magnitude and phase go through nonlinear functions separately. 

The most popular activation functions, sigmoid, \acrfull{tanh} and \acrfull{relu}, are extensible using Type-A or Type-B approach. Although \acrshort{tanh} is already defined on the complex domain for what, its transformation is probably less interesting.

Other complex-activation functions are supported by our toolbox including elementary transcentental functions (\href{https://complex-valued-neural-networks.readthedocs.io/en/latest/activations/etf.html}{\path{complex-valued-neural-networks.rtfd.io/en/latest/activations/etf.html}}) \cite{kim2001approximation, kim2001complex} or phasor activation function (\href{https://complex-valued-neural-networks.readthedocs.io/en/latest/activations/mvn_activation.html}{\path{complex-valued-neural-networks.rtfd.io/en/latest/activations/mvn_activation.html}}) such as multi-valued neuron (MVN) activation function \cite{naum1972generalization, aizenberg_multivalued_1973} or Georgiou CDBP \cite{georgiou1992complex}.

\subsection{Complex Rectified Linear Unit (ReLU)}

Normally, $\activation_\phi$ is left as a linear mapping \cite{kuroe2003activation, hirose2012complex}. Under this condition, using \acrfull{relu} activation function for $\activation_r$ has a limited interest since the latter makes $\activation_B$ converge to a linear function, limiting Type-B \acrshort{relu} usage.
Nevertheless, \acrshort{relu} has increased in popularity over the others as it has proved to learn several times faster than equivalents with saturating neurons \cite{krizhevsky2012imagenet}. 
Consequently, probably the most common complex-valued activation function is Type-A \acrshort{relu} activation function more often defined as Complex-\acrshort{relu} or $\mathbb{C}$\acrshort{relu} \cite{trabelsi2017deep, cao2019pixel}. 

However, several other \acrshort{relu} adaptations to the complex domain were defined throughout the bibliography as z\acrshort{relu} \cite{guberman2016complex}, defined as
\begin{equation}
    \textrm{zReLU}(z) = 
            \begin{cases}
                z & \textrm{if } 0 < \arg(z) < \pi / 2 \\
                0 & \textrm{if otherwise}
             \end{cases} \, ,
\end{equation}
letting the output as the input only if both real and imaginary parts are positive. 
Another popular adaptation is mod\acrshort{relu} \cite{arjovsky2016unitary}, defined as
\begin{equation}
    \textrm{modReLU}(z) = 
        \begin{cases}
                \textrm{ReLU}\left( |z| + b \right) \displaystyle\frac{z}{ |z|} 
                & \textrm{if } | z |  \geq b \\
                0 & \textrm{if otherwise}
            \end{cases} \, ,
\end{equation}
where $b$ is an adaptable parameter defining a radius along which the output of the function is 0. This function provides a point-wise non-linearity that affects only the absolute value of a complex number. 
Another extension of \acrshort{relu}, is the complex cardioid proposed by \cite{virtue2017better} 
\begin{equation}
    \activation(z) = \frac{\left(1 + \cos{(\arg(z))}\right) z}{2} \, .
\end{equation}
This function maintains the phase information while attenuating the magnitude based on the phase itself.

These last three activation functions (cardioid, z\acrshort{relu} and mod\acrshort{relu}) were analyzed and compared against each other in \cite{scardapane2018complex}. 

The discussed variants were implemented in the toolbox documented as usual in \href{https://complex-valued-neural-networks.readthedocs.io/en/latest/activations/relu.html}{\path{complex-valued-neural-networks.rtfd.io/en/latest/activations/relu.html}}.
% we will adopt, in Section \ref{sec:experim-results}, the Type-A \acrshort{relu}, which is a non-linear function, as \acrshort{cvnn} hidden layers activation functions.

\subsection{Output layer activation function} \label{sec:output-func}

The image domain of the output layer depends on the set of data labels. For classification tasks, real-valued integers or binary numbers are frequently used to label each class. 
For these cases, one option would be to cast the labels to the complex domain as done in \cite{zhang2017complex}, where a transformation is done to a label $c \in \mathbb{R}$ like $T: c \to c + \im \, c$. 

The second option is to use an activation function $\activation: \mathbb{C} \to \mathbb{R}$ as the output layer. 
A popular real-valued activation used for classification tasks is the \textit{softmax} function \cite{Goodfellow-et-al-2016} (normalized exponential), which maps the magnitude to $[0, 1]$, so the image domain is homogeneous to a probability.
There are several options on how to transform this function to accept complex input and still have its image $\in [0;1]$. These options include either performing an average of the magnitudes $\activation_{\Re},\activation_{\Im}$ or $\activation_r,\activation_{\phi}$, using only one of the magnitudes like $\activation_r$ or apply the real-valued \textit{softmax} to either the addition or multiplication of $\activation_{\Re},\activation_{\Im}$ or $\activation_r,\activation_{\phi}$, between other options.
Most of these variants are implemented in the library detailed in this Chapter and documented in \href{https://complex-valued-neural-networks.readthedocs.io/en/latest/activations/real_output.html}{\path{complex-valued-neural-networks.rtfd.io/en/latest/activations/real_output.html}}.

\section{Complex-compatible Loss functions} \label{sec:loss-function}

For \acrshort{cvnn}s, the loss or cost function to minimize will have a real-valued output as one can not look for the minimum of two complex numbers. 
If the application is that of classification or semantic segmentation (as it is in all the cases of study of this work), there are a few options on what to do.

Some loss functions support this naturally. Reference \cite{hansch2010complexb} compares the performance of different type of complex input compatible loss functions.
If the loss function to be used does not support complex-valued input, a popular option is to manage this through the output activation function as explained in the previous Section \ref{sec:output-func}. 
However, a second option for non-complex-compatible loss functions such as \textit{categorical cross-entropy} is to compare both the real and imaginary parts of the prediction independently with the labels and compute the loss function as an average of both results. Reference \cite{cao2019pixel}, for example, defines a loss function as the complex average cross-entropy as:
\begin{equation}
    \loss^{ACE} = \frac{1}{2} \left[ \loss^{CCE}\left( \Re{(y)}, d \right) + \loss^{CCE}\left( \Im{(y)}, d \right) \right] \, ,
\end{equation}

where $\loss^{ACE}$ is the complex average cross-entropy, $\loss^{CCE}$ is the well-known categorical cross-entropy. $y$ is the network predicted output, and $d$ is the corresponding ground truth or desired output. For real-valued output $\loss^{ACE} = \loss^{CCE}$.
This function was implemented in the published code alongside other variants, such as multiplying each class by weight for imbalanced classes or ignoring unlabeled data. All these versions are documented in \href{https://complex-valued-neural-networks.readthedocs.io/en/latest/losses.html}{\path{complex-valued-neural-networks.rtfd.io/en/latest/losses.htm}}.

When the desired output is already complex-valued (regression tasks), more natural definitions can be used, such as proposed by \cite{bassey2021survey}, where the loss is defined as 
\begin{equation}
    \loss = \frac{1}{2} \sum_k e_k \conjugate{e_k} \, ,
\end{equation}
where $e_k(y_k, d_k)$ is a complex error computation of $y_k$ and $d_k$ such as a subtraction.

\section{Complex Batch Normalization} \label{sec:bn}

The complex \acrfull{bn} was adapted from the real-valued \acrshort{bn} technique by Reference \cite{trabelsi2017deep}.
For normalizing a complex vector, we will treat the problem as a 2D vector instead of working on the complex domain so that $z = a + \im \, b \in \mathbb{C} \longrightarrow \matr{x} = (a, b) \in \mathbb{R}^{2}$. 

To normalize a complex variable, we need to compute
\begin{equation}
    \label{eq:complex-normalization}
    \matr{o} = \hat{\matr{\Sigma}}^{-\frac{1}{2}}(\matr{x} - \hat{\matr{\mu}}) \, ,
\end{equation}
where $\matr{o}$ is the normalized output, $\hat{\matr{\mu}}$ is the mean estimate of $\mathbb{E}[\matr{x}]$, and $\hat{\matr{\Sigma}} \in \mathbb{R}^{2\times2}$ is the estimated covariance matrix of $\matr{x}$ so that
\begin{equation}
    \hat{\matr{\Sigma}} = 
    \left[ 
        {\begin{array}{cc}
            \Sigma_{rr} & \Sigma_{ri} \\
            \Sigma_{ir} & \Sigma_{ii} \\
        \end{array} }
    \right] = 
    \left[ 
        {\begin{array}{cc}
            \mathrm{Cov}(\Re(x)\Re(x)) & \mathrm{Cov}(\Re(x)\Im(x)) \\
            \mathrm{Cov}(\Im(x)\Re(x)) & \mathrm{Cov}(\Im(x)\Im(x)) \\
        \end{array} }
    \right] \, .
\end{equation}

During the batch normalization layer initialization, two variables $\matr{\Sigma'} \in \mathbb{R}^{2\times2}$ (moving variance) and $\matr{\mu'} \in \mathbb{R}^{2}$ (moving mean) are initialized. 
% The superscript $...$ represents any input dimension given to the batch normalization layer. 
By default, $\matr{\Sigma'} = \mathbf{I} / \sqrt{2}$ and $\matr{\mu}'$ is initialized to zero.

During the training phase, $\hat{\matr{\Sigma}}$ and $\hat{\matr{\mu}}$ are computed on the innermost dimension of the training input batch (for multi-dimensional inputs where $z \in \mathbb{C}^N \to x \in \mathbb{R}^{N \times 2}$). The output of the layer is then computed as in Equation \ref{eq:complex-normalization}. The moving variance and moving mean are iteratively updated using the following rule:
\begin{align}
    \label{eq:bn-trianing-rule}
    \matr{\mu'}_{k+1} &=  m \, \matr{\mu'}_{k} + (1 - m) \, \hat{\matr{\mu}}_{k} \\
    \matr{\Sigma'}_{k+1} &=  m \, \matr{\Sigma'}_{k} + (1 - m) \, \hat{\matr{\Sigma}}_{k} \, ,
\end{align}
where $m$ is the momentum, a constant parameter set to $0.99$ by default.

During the inference phase, that is, for example, when performing a prediction, no variance nor average is computed. The output is directly calculated using the moving variance and moving average as
\begin{equation}
    \label{eq:preditcion-out}
    \hat{\matr{x}} = \matr{\Sigma'}^{-\frac{1}{2}}(\matr{x} - \matr{\mu'}) \, .
\end{equation}

Analogously to the real-valued batch normalization, it is possible to shift and scale the output by using the trainable parameters $\matr{\beta}$ and $\boldsymbol{\Gamma}$. In this case, the output $\matr{o}$ for both the training and prediction phase will be changed to $\hat{\matr{o}} = \boldsymbol{\Gamma}\, \matr{o} + \matr{\beta}$. By default, $\matr{\beta}$ is initialized to $(0, 0)^T \in \mathbb{R}^2$ and 
\begin{equation}
    \boldsymbol{\Gamma} = \left(
        {\begin{array}{cc}
            1 / \sqrt{2} & 0 \\
            0 & 1 / \sqrt{2} \\
        \end{array} }
    \right) \, .
\end{equation}

\section{Complex Random Initialization} \label{sec:complex-init}

If we are to blindly apply any well-known random initialization algorithm to both real and imaginary parts of each trainable parameter independently, we might lose the special properties of the used initialization. This is the case, for example, for Glorot, also known as Xavier, initializer \cite{glorot2010understanding}. 
%This fact was not contemplated in our first pre-print version \cite{barrachina2020complex}, which indicated that 

%Chen version
Assuming that:

\begin{itemize}
    \item The input features have the same variance $\mathrm{Var}\left[\layerout_{i}^{(0)}\right] \triangleq \mathrm{Var}\left[\layerout^{(0)}\right], \forall i \in [|1;N_0|]$ and have zero mean (can be adjusted by the bias input).
    \item All the weights are statistically centered, and there is no correlation between real and imaginary parts. 
    \item The weights at layer $l$ share the same variance $\mathrm{Var}\left[\weigth_{i,j}^{(l)}\right] \triangleq \mathrm{Var}\left[\weigth^{(l)}\right], \forall (i, j) \in [|1;N_{l+1}|] \times [|1;N_{l}|]$ and are statistically independent of the others layer weights and of inputs $\layerout^{(0)}$. 
    \item We are working on the linear part of the activation function. Therefore, $\activation(z) \approx z$, which is the same as saying that $\activation(z, \conjugate{z}) \approx z$. The partial derivatives will then be
\end{itemize}

\begin{equation}
\begin{cases}
    \displaystyle \frac{\partial \activation}{\partial z} \approx 1 \\
    \displaystyle \frac{\partial \activation}{\partial \conjugate{z}} \approx 0
\end{cases}
\end{equation}

so that $\displaystyle \frac{\partial \activation \left( \beforeact_n^{(l)} \right)}{\partial \beforeact_n^{(l)}} \approx 1$ and  $\displaystyle \frac{\partial \conjugate{\activation \left( \beforeact_n^{(l)} \right)}}{\partial \beforeact_n^{(l)}} \approx 0$, with $ \beforeact_n^{(l)}$ defined in Section \ref{sec:notation}. This is an acceptable assumption when working with logistic sigmoid or \acrshort{tanh} activation functions.

Using the notation of Section \ref{sec:notation}, for a dense feed-forward neural network with a bias initialized to zero (as is often the case), each neuron at hidden layer $l$ is expressed as
\begin{equation}
    X_n^{(l)} \triangleq \activation \left( \beforeact_n^{(l)} \right) =  \activation \left( \sum_{m=1}^{N_{l-1}} \weigth^{(l)}_{nm} X_m^{(l-1)}\right), \forall n \in [|1;N_{l}|]. \label{eq:forward-nobias}
\end{equation}

Since $\activation$ is working on the linear part, from \eqref{eq:forward-nobias} we get that
\begin{equation}
    \mathrm{Var}\left[X_n^{(l)}\right] = \mathrm{Var}\left[\sum_{m=1}^{N_{l-1}}\weigth^{(l)}_{nm} X_m^{(l-1)}\right] \, ,
\end{equation}
where $X_m^{(l-1)}$ is a combination of $\weigth^{(k)}, 1\leq k \leq l-1$ and $x^{(0)}$, so it is independent of $\weigth^{(l)}$ which leads to
\begin{equation}
    \mathrm{Var}\left[X_n^{(l)}\right] = \sum_{m=1}^{N_{l-1}}\mathrm{Var}\left[\weigth^{(l)}_{nm}\right] \mathrm{Var}\left[X_m^{(l-1)}\right] \, , 
\end{equation}
As the weights share the same variance at each layer,
\begin{align}
     \mathrm{Var}\left[X_n^{(l)}\right] & = \mathrm{Var}\left[\weigth^{(l)}\right] \sum_{m=1}^{N_{l-1}} \mathrm{Var}\left[X_m^{(l-1)}\right] \, , \nonumber \\ 
      & = \mathrm{Var}\left[\weigth^{(l)}\right] \mathrm{Var}\left[\weigth^{(l-1)}\right] \sum_{m=1}^{N_{l-1}} \sum_{p=1}^{N_{l-2}} \mathrm{Var}\left[X_p^{(l-2)}\right] \nonumber \, ,\\
      & = N_{l-1} \mathrm{Var}\left[\weigth^{(l)}\right] \mathrm{Var}\left[\weigth^{(l-1)}\right]  \sum_{p=1}^{N_{l-2}} \mathrm{Var}\left[X_p^{(l-2)}\right]\,. \label{eq:variancerecursion}
\end{align}
We can now obtain the variance of $\layerout_n^{(l)}$ as a function of $x^{(0)}$ by applying Equation \eqref{eq:variancerecursion} recursively and assuming $\layerout_{n}^{(0)}, n=1,\ldots,N_0$ sharing the same variance,
\begin{equation}
    \mathrm{Var}\left[X_n^{(l)}\right] = \mathrm{Var}\left[x^{(0)}\right]  \prod_{m = 1}^{l} N_{m-1} \mathrm{Var}\left[\weigth^{(m)}\right]\,. \label{eq:var_neurons}
\end{equation}
From a forward-propagation point of view, to keep a constant flow of information, then 
\begin{equation}
     \mathrm{Var}\left[X_n^{(l)}\right] = \mathrm{Var}\left[X_n^{(l')}\right]\,, \forall 1 \leq l < l' \leq N \label{eq:forwardflow}
\end{equation}
which implies that $\displaystyle N_{m-1} \mathrm{Var}\left[\weigth^{(m)}\right] = 1 \,, \forall 1 \leq m \leq N$.

% Additionally, by applying the chain rule derivative on the loss, we have from Equation \eqref{eq:forward-nobias}
% \begin{equation}
%     \frac{\partial \loss}{\partial \omega_{n,m}^{(l)}} =  \frac{\partial \loss}{\partial \beforeact_n^{(l)}} \frac{\partial \beforeact_n^{(l)}}{\partial \omega_{n,m}^{(l)}} +  \frac{\partial \loss}{\partial \conjugate{\beforeact_n^{(l)}}} \frac{\conjugate{\beforeact_n^{(l)}}}{\partial \omega_{n,m}^{(l)}} = X_m^{(l-1)}\frac{\partial \loss}{\partial \beforeact_n^{(l)}} \, , \label{eq:lossderivative_wrtweight}
% \end{equation}
% see \eqref{eq:complex-partial-vn-w-generic} and \eqref{eq:conjugate-complex-partial-vn-w-generic} in Appendix for more details. 
On the other hand,
\begin{align}
    \frac{\partial \loss}{\partial \beforeact_n^{(l)}} &= \sum_{k=1}^{N_{l+1}}  \frac{\partial \loss}{\partial \beforeact_k^{(l+1)}} \frac{\partial \beforeact_k^{(l+1)}}{\partial X_n^{(l)}} \frac{\partial X_n^{(l)}}{\partial \beforeact_n^{(l)}} +  \frac{\partial \loss}{\partial \conjugate{\beforeact_k^{(l+1)}}} \frac{\partial \conjugate{\beforeact_k^{(l+1)}}}{\partial X_n^{(l)}} \frac{\partial X_n^{(l)}}{\partial \beforeact_n^{(l)}} + R\, , \nonumber \\
    &= \sum_{k=1}^{N_{l+1}}  \frac{\partial \loss}{\partial \beforeact_n^{(l+1)}}  \omega_{k,n}^{(l+1)}\,. \label{eq:lossderivative_wrtneurons}
\end{align}
where R is the remaining terms depending on $\displaystyle \frac{\partial \conjugate{X_n^{(l)}}}{\partial \beforeact_n^{(l)}}$, which is assumed to be $0$ because the activation function is working in the linear regime at the initialization, i.e.  $\displaystyle \frac{\partial X_n^{(l)}}{\partial \beforeact_n^{(l)}} \approx 1$ and  $\displaystyle \frac{\partial \conjugate{X_n^{(l)}}}{\partial \beforeact_n^{(l)}} \approx 0$. The last equality is held since $\displaystyle \frac{\partial \beforeact_k^{(l+1)}}{\partial X_n^{(l)}} = \omega_{k,n}^{(l+1)}$ and $\frac{\partial \beforeact_k^{(l+1)}}{\partial \conjugate{X_n^{(l)}}} = 0$ (see  \eqref{eq:complex-partial-en-xi-generic} in Appendix for the general case).

Assuming the loss variation w.r.t. the output neuron is statistically independent of any weights at any layers, then we can deduce recursively from \eqref{eq:lossderivative_wrtneurons},
\begin{equation}
    \mathrm{Var}\left[\frac{\partial \loss}{\partial \beforeact_n^{(l)}} \right] =  \mathrm{Var}\left[\frac{\partial \loss}{\partial \beforeact_n^{(L)}} \right] \prod_{m=l+1}^{L} N_{m} \mathrm{Var}\left[\weigth^{(m)} \right]\,.  \label{eq:lossderivative_wrtlastneurons}
\end{equation}
From a back-propagation point of view, we want to keep a constant learning flow:
\begin{equation}
    \mathrm{Var}\left[\frac{\partial \loss}{\partial \beforeact_n^{(l)}}\right] = \mathrm{Var}\left[\frac{\partial \loss}{\partial \beforeact_n^{(l')}}\right],\, \forall 1 \leq l < l' \leq N, \label{eq:backfowardflow}
\end{equation}
which implies $N_{m} \mathrm{Var}\left[\weigth^{(m)}\right] = 1, \, \forall 1 \leq m \leq N$.

Conditions \eqref{eq:forwardflow} and \eqref{eq:backfowardflow} are not possible to be satisfied at the same time (unless $N_l = N_{l+1}$, $\forall 1 \leq l <N$ , meaning all layers should have the same width) for what Reference \cite{glorot2010understanding} proposes the following trade-of
\begin{equation}
    \label{eq:glorot-compromise}
    \mathrm{Var}\left[\weigth^{(l)}\right] = \frac{2}{\hiddensize{l} + \hiddensize{l+1}}\, , \forall 1 \leq l < N\,.
\end{equation}

% %%%%%%%%%%  à voir si on met cette partie 

% Then combining \eqref{eq:lossderivative_wrtweight} with \eqref{eq:var_neurons} and \eqref{eq:lossderivative_wrtlastneurons} leads to:
% \begin{align}
%     \mathrm{Var}\left[\frac{\partial \loss}{\partial \omega_{n,m}^{(l)}} \right] & =  \mathrm{Var}\left[X_m^{(l-1)} \right] \mathrm{Var} \left[ \frac{\partial \loss}{\partial \beforeact_n^{(l)}} \right] \\
%     & = \mathrm{Var}\left[\frac{\partial \loss}{\partial \beforeact_n^{(L)}} \right] \mathrm{Var}\left[x^{(0)}\right] \prod_{m = 1}^{l} N_{m-1} \mathrm{Var}\left[\weigth^{(m)}\right] \prod_{m=l+1}^{L} N_{m} \mathrm{Var}\left[\weigth^{(m)} \right]  \, .
% \end{align}

% %%%%%%%%%%%%%%%%%%%%%%%%%%%%%%%%%%

If the weight initialization is a uniform distribution $\sim U$, for the real-valued case, the initialization that has the variance stated on Equation \ref{eq:glorot-compromise} is:
\begin{equation}
    \label{eq:glorot-real-init}
    \weigth^{(l)} \sim U\left[ -\frac{\sqrt{6}}{\sqrt{\hiddensize{l} + \hiddensize{l+1}}}, \frac{\sqrt{6}}{\sqrt{\hiddensize{l} + \hiddensize{l+1}}} \right]\, .
\end{equation}

For a complex variable with no correlation between real and imaginary parts, the variance is defined as:
\begin{equation}
    \mathrm{Var}\left[\weigth^{(l)}\right] = \mathrm{Var}\left[\Re\left(\weigth^{(l)}\right)\right] + \mathrm{Var}\left[\Im\left(\weigth^{(l)}\right)\right]\, ,
\end{equation}
it is therefore logical to choose both variances $\mathrm{Var}\left[\Re\left(\weigth^{(l)}\right)\right]$ and $\mathrm{Var}\left[\Im\left(\weigth^{(l)}\right)\right]$ to be equal:
\begin{equation}
    \mathrm{Var}\left[\Re\left(\weigth^{(l)}\right)\right] = \mathrm{Var}\left[\Im\left(\weigth^{(l)}\right)\right] = \frac{1}{\hiddensize{l} + \hiddensize{l+1}}\, .
\end{equation}

With this definition, the complex variable could be initialized as:
\begin{equation}
    \label{eq:glorot-complex-cartesian-init}
    \Re\left(\weigth^{(l)}\right) = \Im\left(\weigth^{(l)}\right) \sim U\left[ -\frac{\sqrt{3}}{\sqrt{\hiddensize{l} + \hiddensize{l+1}}}, \frac{\sqrt{3}}{\sqrt{\hiddensize{l} + \hiddensize{l+1}}} \right]\, .
\end{equation}

By comparing \eqref{eq:glorot-real-init} with \eqref{eq:glorot-complex-cartesian-init} it is concluded that to correctly implement a Glorot initialization, one should divide the real and imaginary part of the complex weight by $\sqrt{2}$. 

It is also possible to define the initialization technique from a polar perspective. The variance definition is
\begin{equation}
    \label{eq:variance-definition}
    \mathrm{Var}\left[\weigth^{(l)}\right] = E\left[\left| \weigth^{(l)} - E\left[ \weigth^{(l)} \right]^2 \right|\right] =  E\left[\left| \weigth^{(l)}\right|^2\right] + \left|E\left[ \weigth^{(l)} \right]\right|^2\, .
\end{equation}

By choosing the phase to be a uniform distribution between $0$ and $2\pi$ and knowing the absolute is always positive, then $E\left[ \weigth^{(l)} \right] = 0$ and Equation \ref{eq:variance-definition} can be simplified to:
\begin{equation}
    \mathrm{Var}\left[\weigth^{(l)}\right] =  E\left[\left| \weigth^{(l)}\right|^2\right]\, .
\end{equation}

It will therefore suffice to choose any random initialization, such as, for example, a Rayleigh distribution \cite{rayleigh1880xii}, for $\left| \weigth^{(l)}\right| = \rho \in \mathbb{R}_0^+$ so that
\begin{equation}
    E\left[ \rho^2 \right] = \frac{2}{\hiddensize{l} + \hiddensize{l+1}}  \, .  
\end{equation}

\subsection{Impact of complex-initialization equation application}

A simulation was done for a complex multi-layer network with four hidden layers of size 128, 64, 32 and 16, respectively, with a logistic sigmoid activation function to test the impact of this constant division by $\sqrt{2}$ on a signal classification task. One-hundred and fifty epochs were done with one thousand runs of each model to obtain statistical results.

The task consisted in classifying different signals used in radar applications. Temporal and time-frequency representations of each signal are shown in Figures \ref{fig:temporal} and \ref{fig:spectrogram} respectively. The generated signals are

\begin{itemize}
    \item Sweep or chirp signal. These are signals whose frequency changes over time. These types of signals are commonly applied to radar. The chirp-generated signals were of two types, either linear chirp, where the frequency changed linearly over time or S-shaped, whose frequency variation gets faster at both the beginning and the end, forming an S-shaped spectrum as can be seen in Figure \ref{fig:spectrogram}.
    \item \acrfull{psk} modulated signals, a digital modulation process that conveys data by changing the phase of a constant frequency reference signal. These signals were BPSK (2-phase states) and QPSK (4-phase states)
    \item \acrfull{qam} signals, which are a combination of amplitude and phase modulation. These signals were 16QAM (4 phase and amplitude states) and 64QAM (8 phase and amplitude states).
\end{itemize}

A noise signal (null), without any signal of interest, was also used, making a total of 7 different classes.

\begin{figure}
    \centering
    \includegraphics[width=\textwidth]{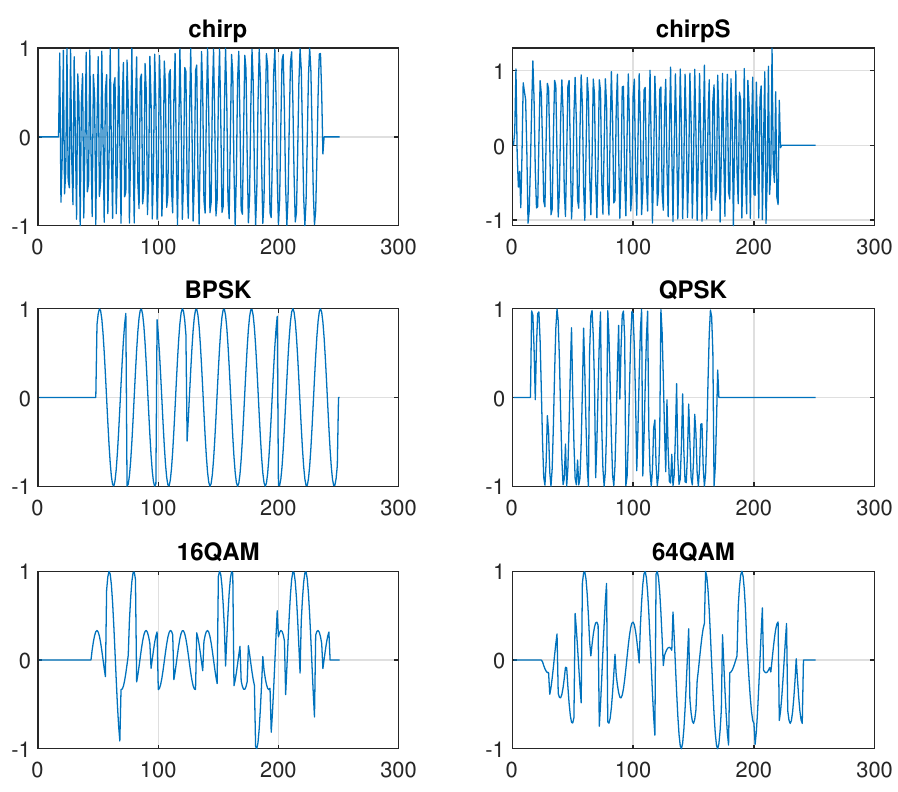}
    \caption{Temporal amplitude examples of used signals \cite{gilles2018techreport}.}
    \label{fig:temporal}
\end{figure}

\begin{figure}
    \centering
    \includegraphics[width=\textwidth]{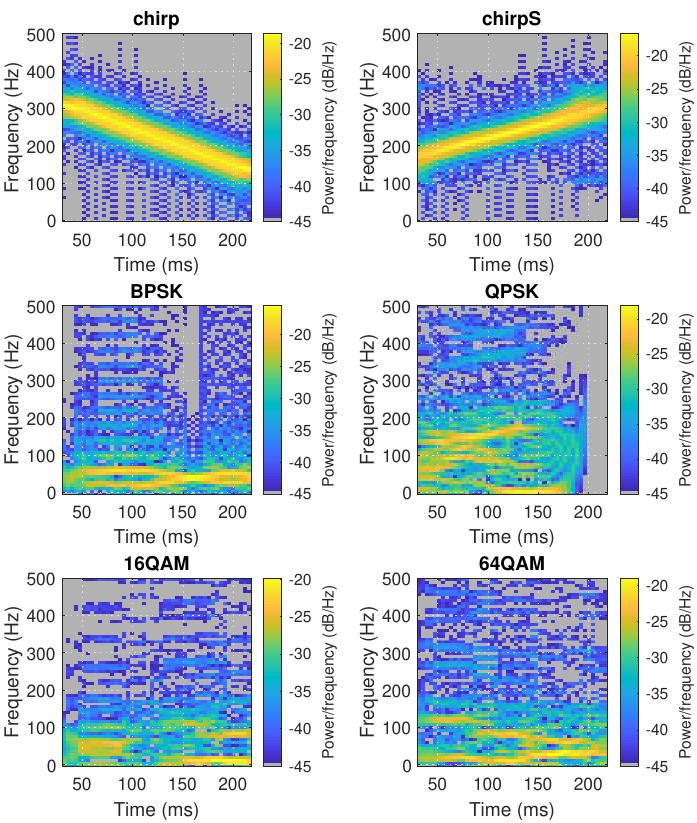}
    \caption{Spectrogram examples of used signals \cite{gilles2018techreport}.}
    \label{fig:spectrogram}
\end{figure}

Instead of using measured signals, and with the goal of facilitating the studies, the signals were randomly generated, which also allowed having the ground truth. These generated signals had the following properties:

\begin{itemize}
    \item 256 samples per signal
    \item Peak-to-peak amplitude of 1
    \item Chirp signals with frequencies from 0.05 to 0.45 times the sample frequency
    \item Number of moments between 8 and 64 for the codes BPSK, QPSK, 16QAM and 64QAM
\end{itemize}

Thermal noise was added to each signal and was transformed to the complex domain using the Hilbert Transform, a popular transformation in signal processing applications \cite{hahnhilbert}, which provides an analytic mapping of a real-valued function to the complex plane.

The Hilbert transpose has its origins in 1902 when the English mathematician Godfrey Harold Hardy (1877 – 1947) introduced a transformation that consisted in the convolution of a real function $f(s)$  \cite{hardy1902theory, hardy1909theory}, with the Cauchy kernel $\displaystyle 1/\pi(t-s)$ which, being an improper integral, must be defined in terms of its principal value (p.v.) \cite{king1997encyclopedia},
\begin{align}
    \label{eq:hilbert-transform}
    \mathcal{H}(f)(t) &= \frac{1}{\pi}\text{p.v.}\int_{-\infty}^{+\infty} \frac{f(s)}{t-s}ds
    &= \frac{1}{\pi}\lim_{\varepsilon \to 0}\int_{\varepsilon}^{+\infty} \frac{f(t-s) - f(t+s)}{s}ds \, .
\end{align}

One of the most important properties of this transformation is that its repeated application allows for the recovery of the original function, with only a change of sign, that is,
\begin{equation}
    g(t) = \mathcal{H}(f)(t) \Leftrightarrow f(t) = -\mathcal{H}(g)(t) \, .
\end{equation}

The functions $f$ and $g$ that satisfy this relation are called Hilbert transform pairs, in honor of David Hilbert, who first studied them in 1904 \cite{hilbert1912basics}. In fact, it is for this reason that in 1924 \cite{hardy1928notes, hardy1928notesb}, Hardy graciously proposed calling transformation \eqref{eq:hilbert-transform} as Hilbert Transform.

Some examples of Hilbert transform pairs are shown in Table \ref{tab:ht-examples}.

\begin{table}[ht]
    \centering
    \begin{tabular}{c c}
        $f(t)$ & $g(t) = \mathcal{H}(f)(t)$ \\
        \midrule
        $\sin(t)$ & $\cos(t)$ \\
        $1 / \left(t^2 + 1 \right)$ & $t / \left(t^2 + 1 \right)$ \\
        $\sin(t) / t$ & $\left[ 1 - \cos(t) \right] / t$ \\
        $\delta(t)$ & $1 / \pi t$ \\
        \bottomrule
    \end{tabular}
    \caption{Examples of Hilbert transform pairs}
    \label{tab:ht-examples}
\end{table}

Considering the definition of the Fourier transform of an absolutely integrable real function $f(s)$,
\begin{equation}
    \mathcal{F}(f)(t) \, .
\end{equation}
It can be shown that \cite{king1997encyclopedia}
\begin{equation}
    \mathcal{F}\left( \mathcal{H}(f)(t) \right) = - \im \,\text{sgn}(t)\mathcal{F}(f)(t) \, .
\end{equation}
This relation provides an effective way to evaluate the Hilbert transform
\begin{equation}
    \mathcal{H}(f) = - \im \,\mathcal{F}^{-1} \left[ \text{sgn} \mathcal{F}(f)(t) \right] \, ,
\end{equation}
avoiding the issue of dealing with the singular structure of the Cauchy kernel.

One of the most important properties of the Hilbert transform, at least in reference to this thesis, is that the real and imaginary parts of a function $h(z)$ that is analytic in the upper half of the complex plane are Hilbert transform pairs. That is to say that
\begin{equation}
    \Im(h) = \mathcal{H}(\Re(h)) \, .
\end{equation}

In this way, the Hilbert transform provides a simple method of performing the analytic continuation to the complex plane of a real function $f(x)$ defined on the real axis, defining $h(z) = f(z) + \im \,g(z)$ with $g(z) = \mathcal{H}(f)$. This property of the Hilbert transform was independently discovered by Ralph Kronig \cite{kronig1926theory} (1904 – 1995), and Hans Kramers \cite{kramers1927diffusion} (1894 - 1952) in 1926 in relation to the response function of physical systems, known as the Kramers-Kronig relation. At the same time, it began to be used in circuit analysis \cite{carson1926electric} in relation to the real and imaginary parts of the complex impedance. Through the work of pioneers such as the Nobel prize winner Dennis Gabor \cite{gabor1946theory} (1900 – 1979), its application in modern signal processing is wide and varied \cite{hahnhilbert}.

The real and imaginary weights where initialized as described in \eqref{eq:glorot-real-init} (The definition for real-valued weights, which is equivalent to multiplying the limits of Equation \ref{eq:glorot-complex-cartesian-init} by $\sqrt{2}$) and \eqref{eq:glorot-complex-cartesian-init} (the original case for complex-valued weights) to compare them. An initialization that divided the limits of Equation \ref{eq:glorot-complex-cartesian-init} by two was also used to assert that smaller values will not produce a superior result either. 

\begin{figure}[ht]
	\centering
	% This file was created by tikzplotlib v0.9.1.
\begin{tikzpicture}

\definecolor{color0}{rgb}{0.194607843137255,0.453431372549019,0.632843137254902}
\definecolor{color1}{rgb}{0.881862745098039,0.505392156862745,0.173039215686275}
\definecolor{color2}{rgb}{0.229411764705882,0.570588235294118,0.229411764705882}

\begin{axis}[
tick align=outside,
tick pos=left,
x grid style={white!69.0196078431373!black},
xmin=-0.5, xmax=2.5,
xtick style={color=black},
xtick={0,1,2},
xticklabels={$\cdot \sqrt{2}$,original, $/2$},
y grid style={white!69.0196078431373!black},
ylabel={test accuracy},
ymin=0.486162946428571, ymax=0.560265625,
ytick style={color=black},
ymajorgrids,
ytick={0.48,0.49,0.5,0.51,0.52,0.53,0.54,0.55,0.56,0.57},
yticklabels={0.48,0.49,0.50,0.51,0.52,0.53,0.54,0.55,0.56,0.57}
]
\path [draw=color0, fill=color0, opacity=0.3, semithick]
(axis cs:-0.4,0.5228125)
--(axis cs:0.4,0.5228125)
--(axis cs:0.4,0.527633928571429)
--(axis cs:-0.4,0.527633928571429)
--(axis cs:-0.4,0.5228125)
--cycle;
\path [draw=color1, fill=color1, opacity=0.3, semithick]
(axis cs:0.6,0.546216517857143)
--(axis cs:1.4,0.546216517857143)
--(axis cs:1.4,0.5519140625)
--(axis cs:0.6,0.5519140625)
--(axis cs:0.6,0.546216517857143)
--cycle;
\path [draw=color2, fill=color2, opacity=0.3, semithick]
(axis cs:1.6,0.512857142857143)
--(axis cs:2.4,0.512857142857143)
--(axis cs:2.4,0.525150669642857)
--(axis cs:1.6,0.525150669642857)
--(axis cs:1.6,0.512857142857143)
--cycle;
\addplot [semithick, color0]
table {%
0 0.5228125
0 0.518973214285714
};
\addplot [semithick, color0]
table {%
0 0.527633928571429
0 0.531919642857143
};
\addplot [semithick, color0]
table {%
-0.2 0.518973214285714
0.2 0.518973214285714
};
\addplot [semithick, color0]
table {%
-0.2 0.531919642857143
0.2 0.531919642857143
};
\addplot [semithick, color1]
table {%
1 0.546216517857143
1 0.541071428571429
};
\addplot [semithick, color1]
table {%
1 0.5519140625
1 0.556897321428571
};
\addplot [semithick, color1]
table {%
0.8 0.541071428571429
1.2 0.541071428571429
};
\addplot [semithick, color1]
table {%
0.8 0.556897321428571
1.2 0.556897321428571
};
\addplot [semithick, color2]
table {%
2 0.512857142857143
2 0.494888392857143
};
\addplot [semithick, color2]
table {%
2 0.525150669642857
2 0.53375
};
\addplot [semithick, color2]
table {%
1.8 0.494888392857143
2.2 0.494888392857143
};
\addplot [semithick, color2]
table {%
1.8 0.53375
2.2 0.53375
};
\addplot [color2, mark=diamond*, mark size=2.5, mark options={solid}, only marks]
table {%
2 0.48953125
2 0.490022321428571
2 0.492767857142857
};
\addplot [semithick, color0]
table {%
-0.4 0.524921875
0.4 0.524921875
};
\addplot [semithick, color1]
table {%
0.6 0.549341517857143
1.4 0.549341517857143
};
\addplot [semithick, color2]
table {%
1.6 0.522957589285714
2.4 0.522957589285714
};
\end{axis}

\end{tikzpicture}
	\caption{Comparison of Glorot Uniform initialization scaled by different values.}
	\label{fig:scaled-glorot-comparison}
\end{figure}
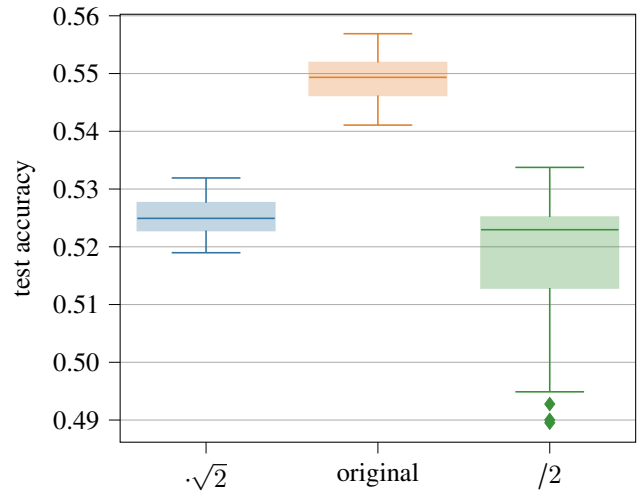

The results shown in Figure \ref{fig:scaled-glorot-comparison} prove the importance of the correct adaptation of Glorot initialization to complex numbers and how failing to do so will impact its performance negatively.

\subsection{Experiment on different trade-offs}

Complex numbers enable choosing different trade-offs than the one chosen by \cite{glorot2010understanding} (Equation \ref{eq:glorot-compromise}), for example, the following trade-off can also be chosen:
\begin{equation}
\begin{aligned}
    \label{eq:proposed-compromise}
    \mathrm{Var}\left[\Re\left(\weigth^{(l)}\right)\right] = \frac{1}{2 \hiddensize{l}} \, ,\\ \mathrm{Var}\left[\Im\left(\weigth^{(l)}\right)\right] = \frac{1}{2 \hiddensize{l+1}} \; ,
\end{aligned}
\end{equation}
between other options.

In a similar manner, as we did with Glorot (Xavier) initialization, the He weight initialization described in \cite{he2015delving} can be deduced for complex numbers. 
the same dataset as the one used in the experiment of Figure \ref{fig:scaled-glorot-comparison} was done using Glorot Uniform ($GU$), Glorot Normal ($GN$), He Normal ($HN$), He Uniform ($HU$), and Glorot Uniform using the trade-off defined in \eqref{eq:proposed-compromise} ($GU_C$). 

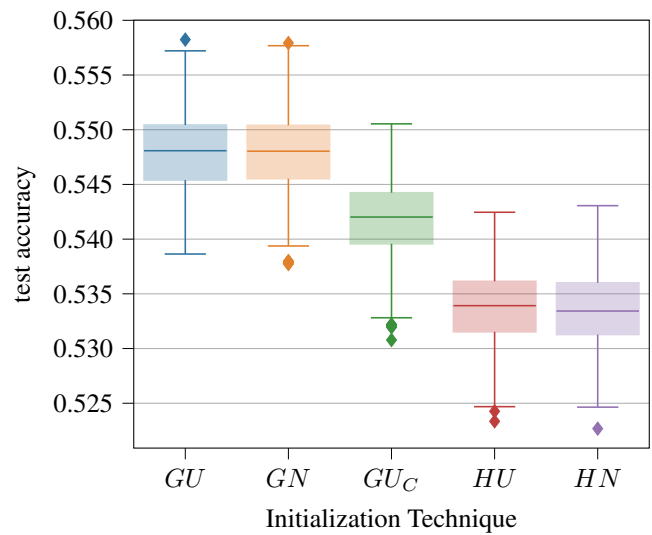
\begin{figure}[ht]
	\centering
	% This file was created by tikzplotlib v0.9.1.
\begin{tikzpicture}

\definecolor{color0}{rgb}{0.194607843137255,0.453431372549019,0.632843137254902}
\definecolor{color1}{rgb}{0.881862745098039,0.505392156862745,0.173039215686275}
\definecolor{color2}{rgb}{0.229411764705882,0.570588235294118,0.229411764705882}
\definecolor{color3}{rgb}{0.75343137254902,0.238725490196078,0.241666666666667}
\definecolor{color4}{rgb}{0.578431372549019,0.446078431372549,0.699019607843137}

\begin{axis}[
tick align=outside,
tick pos=left,
x grid style={white!69.0196078431373!black},
xlabel={Initialization Technique},
ymajorgrids,
xmin=-0.5, xmax=4.5,
xtick style={color=black},
xtick={0,1,2,3,4},
xticklabels={$GU$,$GN$,$GU_C$,$HU$,$HN$},
y grid style={white!69.0196078431373!black},
ylabel={test accuracy},
ymin=0.520900669642857, ymax=0.560014508928571,
ytick style={color=black},
ytick={0.52,0.525,0.53,0.535,0.54,0.545,0.55,0.555,0.56,0.565},
yticklabels={0.520,0.525,0.530,0.535,0.540,0.545,0.550,0.555,0.560,0.565}
]
\path [draw=color0, fill=color0, opacity=0.3, semithick]
(axis cs:-0.4,0.545401785714286)
--(axis cs:0.4,0.545401785714286)
--(axis cs:0.4,0.5504296875)
--(axis cs:-0.4,0.5504296875)
--(axis cs:-0.4,0.545401785714286)
--cycle;
\path [draw=color1, fill=color1, opacity=0.3, semithick]
(axis cs:0.6,0.545518973214286)
--(axis cs:1.4,0.545518973214286)
--(axis cs:1.4,0.550401785714286)
--(axis cs:0.6,0.550401785714286)
--(axis cs:0.6,0.545518973214286)
--cycle;
\path [draw=color2, fill=color2, opacity=0.3, semithick]
(axis cs:1.6,0.5395703125)
--(axis cs:2.4,0.5395703125)
--(axis cs:2.4,0.544241071428571)
--(axis cs:1.6,0.544241071428571)
--(axis cs:1.6,0.5395703125)
--cycle;
\path [draw=color3, fill=color3, opacity=0.3, semithick]
(axis cs:2.6,0.531529017857143)
--(axis cs:3.4,0.531529017857143)
--(axis cs:3.4,0.536143973214286)
--(axis cs:2.6,0.536143973214286)
--(axis cs:2.6,0.531529017857143)
--cycle;
\path [draw=color4, fill=color4, opacity=0.3, semithick]
(axis cs:3.6,0.531277901785714)
--(axis cs:4.4,0.531277901785714)
--(axis cs:4.4,0.536004464285714)
--(axis cs:3.6,0.536004464285714)
--(axis cs:3.6,0.531277901785714)
--cycle;
\addplot [semithick, color0]
table {%
0 0.545401785714286
0 0.538638392857143
};
\addplot [semithick, color0]
table {%
0 0.5504296875
0 0.557209821428571
};
\addplot [semithick, color0]
table {%
-0.2 0.538638392857143
0.2 0.538638392857143
};
\addplot [semithick, color0]
table {%
-0.2 0.557209821428571
0.2 0.557209821428571
};
\addplot [color0, mark=diamond*, mark size=2.5, mark options={solid}, only marks]
table {%
0 0.558236607142857
};
\addplot [semithick, color1]
table {%
1 0.545518973214286
1 0.539375
};
\addplot [semithick, color1]
table {%
1 0.550401785714286
1 0.557678571428571
};
\addplot [semithick, color1]
table {%
0.8 0.539375
1.2 0.539375
};
\addplot [semithick, color1]
table {%
0.8 0.557678571428571
1.2 0.557678571428571
};
\addplot [color1, mark=diamond*, mark size=2.5, mark options={solid}, only marks]
table {%
1 0.537790178571429
1 0.537901785714286
1 0.537723214285714
1 0.537991071428571
1 0.537946428571429
1 0.557924107142857
};
\addplot [semithick, color2]
table {%
2 0.5395703125
2 0.5328125
};
\addplot [semithick, color2]
table {%
2 0.544241071428571
2 0.550535714285714
};
\addplot [semithick, color2]
table {%
1.8 0.5328125
2.2 0.5328125
};
\addplot [semithick, color2]
table {%
1.8 0.550535714285714
2.2 0.550535714285714
};
\addplot [color2, mark=diamond*, mark size=2.5, mark options={solid}, only marks]
table {%
2 0.532053571428571
2 0.53078125
2 0.5321875
2 0.532165178571429
2 0.5321875
2 0.531941964285714
};
\addplot [semithick, color3]
table {%
3 0.531529017857143
3 0.5246875
};
\addplot [semithick, color3]
table {%
3 0.536143973214286
3 0.542455357142857
};
\addplot [semithick, color3]
table {%
2.8 0.5246875
3.2 0.5246875
};
\addplot [semithick, color3]
table {%
2.8 0.542455357142857
3.2 0.542455357142857
};
\addplot [color3, mark=diamond*, mark size=2.5, mark options={solid}, only marks]
table {%
3 0.523348214285714
3 0.524263392857143
};
\addplot [semithick, color4]
table {%
4 0.531277901785714
4 0.524642857142857
};
\addplot [semithick, color4]
table {%
4 0.536004464285714
4 0.543058035714286
};
\addplot [semithick, color4]
table {%
3.8 0.524642857142857
4.2 0.524642857142857
};
\addplot [semithick, color4]
table {%
3.8 0.543058035714286
4.2 0.543058035714286
};
\addplot [color4, mark=diamond*, mark size=2.5, mark options={solid}, only marks]
table {%
4 0.522678571428571
};
\addplot [semithick, color0]
table {%
-0.4 0.548080357142857
0.4 0.548080357142857
};
\addplot [semithick, color1]
table {%
0.6 0.548035714285714
1.4 0.548035714285714
};
\addplot [semithick, color2]
table {%
1.6 0.542020089285714
2.4 0.542020089285714
};
\addplot [semithick, color3]
table {%
2.6 0.533917410714286
3.4 0.533917410714286
};
\addplot [semithick, color4]
table {%
3.6 0.533426339285714
4.4 0.533426339285714
};
\end{axis}

\end{tikzpicture}
	\caption{Initialization Technique Comparison}
	\label{fig:initialization-technique-comparison}
\end{figure}

The results of Figure \ref{fig:initialization-technique-comparison} show that the difference between using a normal or uniform distribution is negligible but that Glorot clearly outperforms He initialization. This is to be expected when using sigmoid activation function because He initialization was designed for \acrshort{relu} activation functions.
On the other hand, the trade-off proposed in \eqref{eq:proposed-compromise} actually achieves lower performances than the one proposed in \cite{glorot2010understanding} for what it was not implemented in the cvnn toolbox, however, it is possible this results is only specific to this dataset and for what further research could be done with this trade-off variant. All the other initialization techniques are documented in \href{https://complex-valued-neural-networks.readthedocs.io/en/latest/initializers.html}{\path{complex-valued-neural-networks.rtfd.io/en/latest/initializers.html}} and implemented as previously described. They can be used in standalone mode as follows

\begin{lstlisting}[language=python]
import cvnn
initializer = cvnn.initializers.GlorotUniform()
values = initializer(shape=(2, 2))                  
# Returns a complex Glorot Uniform tensor of shape (2, 2)
\end{lstlisting}

or inside a layer using an initializer object like

\begin{lstlisting}[language=python]
import cvnn
initializer = cvnn.initializers.ComplexGlorotUniform()
layer = cvnn.layers.Dense(input_size=23, output_size=45, weight_initializer=initializer)
\end{lstlisting}

or as a string listed within \href{https://github.com/NEGU93/cvnn/blob/master/cvnn/initializers.py#L274}{\lstinline{init_dispatcher}} like

\begin{lstlisting}[language=python]
import cvnn

layer = cvnn.layers.Dense(input_size=23, output_size=45, weight_initializer="ComplexGlorotUniform")
\end{lstlisting}

with \lstinline{init_dispatcher} being

\begin{lstlisting}[language=python]
init_dispatcher = {
    "ComplexGlorotUniform": ComplexGlorotUniform,
    "ComplexGlorotNormal": ComplexGlorotNormal,
    "ComplexHeUniform": ComplexHeUniform,
    "ComplexHeNormal": ComplexHeNormal
}
\end{lstlisting}

\section{Performance on real-valued data}

Using the same signals of previous Sections, some experiments were perform comparing \acrfull{cv-mlp} against a \acrfull{rv-mlp}. 
16000 Chirp signals were created, 8000 linear and 8000 S-shaped. $80\%$ was used for training, $10\%$ for validation and the remaining $10\%$ for testing.
Two models (one \acrshort{cv-mlp} and one \acrshort{rv-mlp}) were designed and dimensioned as shown in Table \ref{tab:mlp-dimension}.

\begin{table}[ht]
    \centering
    \begin{tabular}{l c c}
         & \acrshort{cv-mlp} & \acrshort{rv-mlp} \\
         \midrule
         input size & 256 & 512 \\
         hidden layers activation & Type-A Selu & Selu \\
         $1^{\text{st}}$ hidden layer size & 25 & 50 \\
         $2^{\text{nd}}$ hidden layer size & 10 & 20 \\
         output activation & modulo \textit{softmax} & modulo \textit{softmax} \\
         output size & 7 & 7  \\  
         \bottomrule
    \end{tabular}
    \caption{Design of \acrshort{mlp} models}
    \label{tab:mlp-dimension}
\end{table}

We used \acrshort{sgd} as weight optimization and $50\%$ dropout for both models.
We performed 2000 iterations (1000 for each model) with 2000 epochs each.

Figure \ref{fig:mean-signal-loss} shows the mean loss per epoch of both training and validation set for \acrshort{cv-mlp} (Figure \ref{fig:cv-mlp-loss}) and \acrshort{rv-mlp} (Figure \ref{fig:rv-mlp-loss}). The Figures show that \acrshort{cv-mlp} presents less overfitting than the real-valued model.

\begin{figure}[ht]
    \centering
    \begin{subfigure}[b]{0.49\textwidth}
         \centering
         \includegraphics[width=\textwidth]{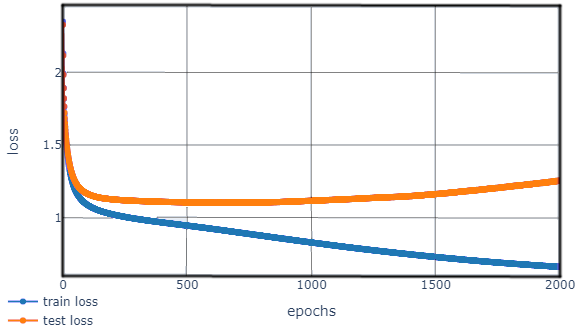}
         \caption{\acrshort{cv-mlp}}
         \label{fig:cv-mlp-loss}
    \end{subfigure}
    \hfill
    \begin{subfigure}[b]{0.49\textwidth}
         \centering
         \includegraphics[width=\textwidth]{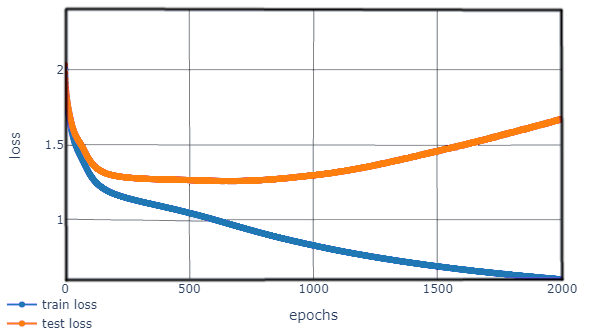}
         \caption{\acrshort{rv-mlp}}
         \label{fig:rv-mlp-loss}
    \end{subfigure}
    \caption{Mean loss evolution per epoch.}
    \label{fig:mean-signal-loss}
\end{figure}

A histogram of both models' accuracy and loss values on the test set was plotted and can be seen in Figure \ref{fig:1dt-hist}. It is clear that \acrshort{cv-mlp} outperforms \acrshort{rv-mlp} classification accuracy with around $4\%$ higher accuracy. Regarding the loss, \acrshort{rv-mlp} obtained higher variance.

\begin{figure}[ht]
    \centering
    \begin{subfigure}[b]{0.49\textwidth}
         \centering
         \includegraphics[width=\textwidth]{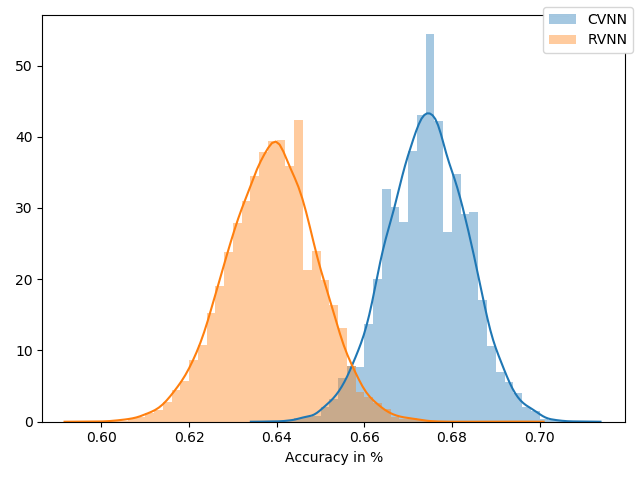}
         \caption{Accuracy}
         \label{fig:1dt-acc}
    \end{subfigure}
    \hfill
    \begin{subfigure}[b]{0.49\textwidth}
         \centering
         \includegraphics[width=\textwidth]{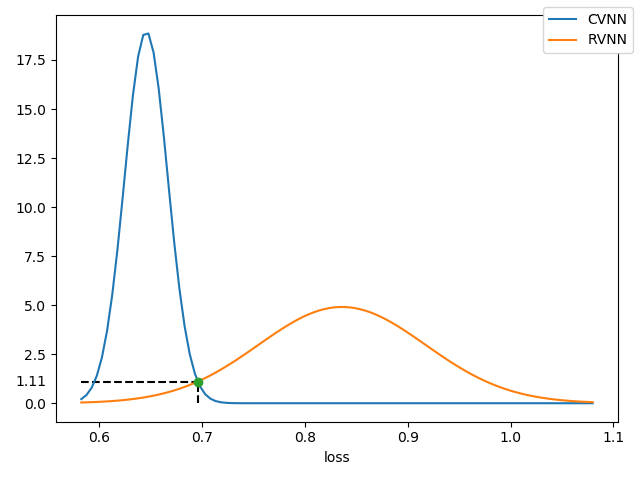}
         \caption{Loss}
         \label{fig:1dt-loss}
    \end{subfigure}
    \caption{Test set histogram metrics for binary classification on Chirp signals.}
    \label{fig:1dt-hist}
\end{figure}

Finally, the simulations were performed for all seven signals obtaining similar results as before. This time, accuracy results have a higher difference with \acrshort{cvnn} results not intersecting with the \acrshort{rvnn} results. Again, \acrshort{rv-mlp} had higher loss variance. 

\begin{figure}[ht]
    \centering
    \begin{subfigure}[b]{0.49\textwidth}
         \centering
         \includegraphics[width=\textwidth]{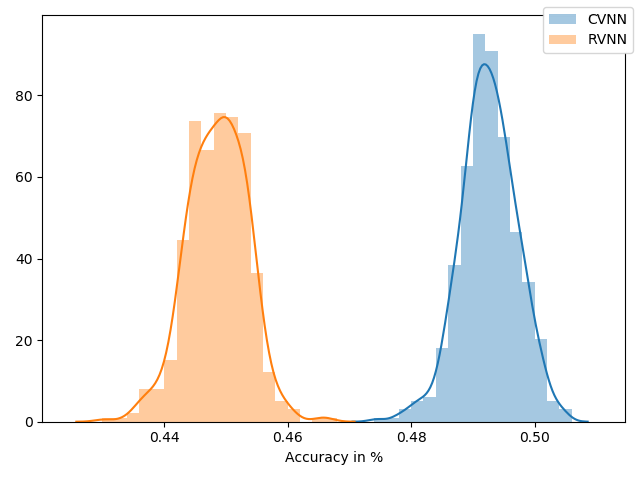}
         \caption{Accuracy}
         \label{fig:1dc-acc}
    \end{subfigure}
    \hfill
    \begin{subfigure}[b]{0.49\textwidth}
         \centering
         \includegraphics[width=\textwidth]{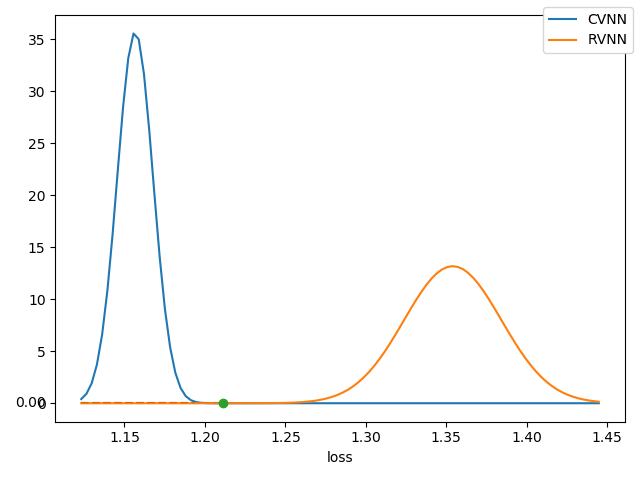}
         \caption{Loss}
         \label{fig:1dc-loss}
    \end{subfigure}
    \caption{Test set histogram metrics for multi-class classification for all 7 signals.}
    \label{fig:1dc-hist}
\end{figure}

\subsection{Discussion}

It is important to note that these networks were not optimized. The main issue is the \textit{softmax} activation function used on the absolute value of the complex output. Although it is not generally used like this, in \acrshort{cvnn}, it might be acceptable. However, for a \acrshort{rvnn}, this is unconventional and penalizes their performance greatly. Furthermore, both models are not equivalent, as described in Reference \cite{barrachina2021about}, resulting in \acrshort{rvnn} having a higher capacity, which may increase their performance but also may result in more overfitting.

For all these reasons, the simulations must be revised. However, if the general conclusions stand, this might indicate that \acrshort{cvnn} can outperform \acrshort{rvnn} even for real-valued applications when using an appropriate transformation such as the Hilbert transform.

\section{Conclusion}

In this paper, described in detail each adaptation from conventional real-valued neural networks to the complex domain in order to be able to implement \acrlong{cvnn}s. Each aspect was revised, and the appropriate mathematics was developed. With this work it should be possible to understand and implement from a basic \acrlong{cv-mlp} to a \acrlong{cv-cnn} and even \acrlong{cv-fcnn} or \acrfull{cv-unet}.

We also detail the implementation of the published library, with examples of how to use the code and with reference to the documentation to be used if needed. We showed that the library had great success in the community through its increasing popularity.

We performed simulations that verified the adaptation of the initialization technique and showed that correctly implementing this initialization is crucial for obtaining a good performance. 

Finally, we show that \acrshort{cvnn}s might be of interest even for real data by using the Hilbert transformation contrary to the work of \cite{monning2018evaluation}. The classification improved around $4\%$ when using a complex network over a real one. However, these last results should be revised as the models were not equivalent, and \acrshort{rvnn} might have been overly penalized.

\appendix

% \section{Mathematical Background} \label{chap:math-background}

\section{Complex Identities}

For further reading and properties of complex numbers, see Reference \cite{HAYKIN2005}.

\begin{definition}[Hermitian transpose]
    Given $\matr{A} \in \mathbb{C}^{n \times m}, m,n \in \mathbb{N}^+$. The \textit{Hermitian transpose} of a matrix is defined as:
    \begin{equation}
        \matr{A}^H = \conjugate{\matr{A}^T} \, .
    \end{equation}
\end{definition}

The upper line on $\conjugate{A}$ refers to the conjugate value of a complex number, that is, a number with an equally real part and an imaginary part equal in magnitude but opposite in sign.

\begin{definition}[differential rule]
\label{def:differential-rule}
\begin{equation}
    \partial f = \frac{\partial f}{\partial z}\partial z + \frac{\partial f}{\partial \conjugate{z}}\partial \conjugate{z}\, .
\end{equation}
\end{definition}

\begin{definition}[conjugation rule]
    \label{def:conjugation-rule}
    \begin{equation}
        \begin{aligned}
        \conjugate{\left( \frac{\partial f}{\partial z}\right)} & = \frac{\partial \conjugate{f} }{\partial \conjugate{z}}\, , \\
        \conjugate{\left( \frac{\partial f}{\partial \conjugate{z}}\right)} & = \conjugate{\frac{\partial  f}{\partial z}}\, .
        \end{aligned}
    \end{equation}
    When $f : \mathbb{C} \longrightarrow \mathbb{R}$ the expression can be simplified as:
    \begin{equation}
        \begin{aligned}
        \conjugate{\left( \frac{\partial f}{\partial z}\right)} & = \frac{\partial f}{\partial \conjugate{z}} \\
        \conjugate{\left( \frac{\partial f}{\partial \conjugate{z}}\right)} & = \frac{\partial f}{\partial z} \, .
        \end{aligned}
    \end{equation}
\end{definition}

\begin{theorem}
    Given $f(z): \mathbb{C} \longrightarrow \mathbb{C}, z = x + i \,y \Rightarrow \exists u,v : \mathbb{R}^2 \longrightarrow \mathbb{R}$ such that $f(z) = u(x, y) + i \,v(x, y) $ 
    \label{the:complex-to-real-mapping}
\end{theorem}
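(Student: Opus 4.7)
The statement is essentially a decomposition result: any complex-valued function of a complex variable can be split into two real-valued functions of two real variables, corresponding to the real and imaginary parts of the output.

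My plan is to proceed by direct construction. First, I would recall that every complex number $w \in \mathbb{C}$ admits a unique decomposition $w = \Re(w) + \im\,\Im(w)$ with $\Re(w), \Im(w) \in \mathbb{R}$, and likewise every $z \in \mathbb{C}$ is uniquely written as $z = x + \im\,y$ with $x,y \in \mathbb{R}$. This uniqueness is what lets me define the components $u$ and $v$ unambiguously.

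Next, I would define the two candidate functions explicitly:
\begin{equation}
    u(x,y) \triangleq \Re\bigl(f(x + \im\,y)\bigr), \qquad v(x,y) \triangleq \Im\bigl(f(x + \im\,y)\bigr).
\end{equation}
By construction $u, v$ map $\mathbb{R}^2$ into $\mathbb{R}$, so they satisfy the required signature. Then, by the uniqueness of the real/imaginary decomposition of $f(z)$, we have
\begin{equation}
    f(z) = \Re(f(z)) + \im\,\Im(f(z)) = u(x,y) + \im\,v(x,y),
\end{equation}
which is exactly the claimed identity.

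There is no real obstacle here: the proof is almost tautological, relying only on the canonical isomorphism $\mathbb{C} \cong \mathbb{R}^2$ as a set. The only thing that merits a brief mention is the well-definedness of $u$ and $v$, which follows from the uniqueness of the decomposition $z = x + \im\,y$ (so that $u,v$ are genuine functions of the pair $(x,y)$ rather than multi-valued expressions).
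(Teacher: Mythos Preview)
Your argument is correct and is exactly the natural construction: take $u = \Re \circ f$ and $v = \Im \circ f$ viewed as functions of $(x,y)$, and appeal to the uniqueness of the decomposition $w = \Re(w) + \im\,\Im(w)$ to verify the identity and well-definedness. There is nothing to add mathematically.

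As for comparison with the paper: the paper does not actually supply a proof of this statement. It is listed as a theorem in the appendix on complex identities and then immediately used (for instance, in the proof of Theorem~\ref{the:complex-chain-rule-real-im-part}), but no \texttt{proof} environment follows it---the authors evidently regard it as self-evident. Your write-up therefore fills a small gap rather than duplicating or diverging from anything in the source.
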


\subsection{Complex differentiation rules}
Given $f,g : \mathbb{C} \longrightarrow \mathbb{C}$ and $c \in \mathbb{C}$:
\begin{equation}
\begin{aligned}
    (f+g)'(z_0) & = f'(z_0) + g'(z_0)\, , \\
    (f\,g)'(z_0) &= f'(z_0)\,g(z_0) + f(z_0)\,g'(z_0) \, ,\\
    \left( \frac{f}{g} \right)'(z_0) &= \frac{f'(z_0)\,g(z_0) - f(z_0)\,g'(z_0)}{g^2(z_0)}, & g(z_0) \neq 0 \, ,\\
    (c\,f)'(z_0) & = c \,f'(z_0) \, .
\end{aligned}
\end{equation}

\subsection{Properties of the conjugate}

Given $z,w \in \mathbb{C}$:
\begin{equation}
\begin{aligned}
    \conjugate{z \pm w} &= \conjugate{z} \pm \conjugate{w} \, ,\\
    \conjugate{z\,w} &= \conjugate{z} \,\conjugate{w} \, ,\\
    \conjugate{\left( \frac{z}{w} \right)} &= \frac{\conjugate{z}}{\conjugate{w}} \, ,\\
    \conjugate{z^n} &= \conjugate{z}^n, \forall n \in \mathbb{C} \, ,\\
    |z|^2 &= z \,\conjugate{z} \, ,\\
    \conjugate{z} \, z &= z \, \conjugate{z} \, ,\\
    \conjugate{\conjugate{z}} &= z \text{       (involution)}\, ,\\
    \conjugate{z} &= z \Leftrightarrow z \in \mathbb{R} \, , \\
    z^{-1} &= \frac{\conjugate{z}}{|z|^2}, \forall z \neq 0 \, .
\end{aligned}
\end{equation}

%------------------------------------------------------------------
\subsection{Holomorphic Function} \label{chap:holomorphic-function}

An \textit{holomorphic} function is a complex-valued function of one or more complex variables that is, at every point of its domain, complex differentiable in a neighborhood of the point. This condition implies an \textit{holomorphic} function is Class $C^\infty$ (analytic).

\begin{definition}{}
    Given a complex function $f: \mathbb{C} \longrightarrow \mathbb{C}$ at a point $z_0$ of an open subset $\Omega \subset \mathbb{C}$ is \textit{complex-differentiable} if exists a limit such as:
    \begin{equation}
        f'(z_0) = \lim_{z \to z_0} \frac{f(z)-f(z_0)}{z-z_0}\, .
    \end{equation}
\end{definition}

As stated before, if a function is complex-differentiable at all points of $\Omega$ it is called \textit{holomorphic} \cite{monning2018evaluation}.
The relationship between real differentiability and complex differentiability is stated on the \textit{Cauchy-Riemann equations}

\begin{theorem}[Cauchy-Riemann equations]
     Given $f(x+i\,y) = u(x,y)+i\,v(x,y)$ where $u,v: \mathbb{R}^2 \longrightarrow \mathbb{R} $ real-differentiable functions, $f$ is complex-differentiable if satisfies:
    \begin{equation}
        \begin{aligned}
            \frac{\partial u}{\partial x} &= \frac{\partial v}{\partial y} \, ,\\
            - \frac{\partial u}{\partial y} &= \frac{\partial v}{\partial x} \, .
        \end{aligned}
    \end{equation}
    \label{the:cauchy-riemann}
\end{theorem}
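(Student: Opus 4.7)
The plan is to exploit the fact that complex differentiability requires the limit $\lim_{z \to z_0} (f(z) - f(z_0))/(z-z_0)$ to exist and, crucially, to be \emph{independent of the direction of approach} in the complex plane. This directional rigidity is precisely what the Cauchy--Riemann equations encode in terms of the real partial derivatives of $u$ and $v$.

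For the necessity direction, I would write $z - z_0 = h = h_1 + \im\,h_2$ with $h_1, h_2 \in \mathbb{R}$, and compute the complex derivative along two canonical paths. First, the horizontal approach $h_2 = 0$, $h_1 \to 0$ yields
\begin{equation}
    f'(z_0) = \lim_{h_1 \to 0} \frac{u(x_0+h_1,y_0) - u(x_0,y_0)}{h_1} + \im\, \frac{v(x_0+h_1,y_0) - v(x_0,y_0)}{h_1} = \frac{\partial u}{\partial x} + \im\,\frac{\partial v}{\partial x} \, .
\end{equation}
Second, the vertical approach $h_1 = 0$, $h_2 \to 0$ yields, using $1/\im = -\im$,
\begin{equation}
    f'(z_0) = \lim_{h_2 \to 0} \frac{1}{\im\,h_2}\bigl[u(x_0,y_0+h_2) - u(x_0,y_0) + \im\,(v(x_0,y_0+h_2) - v(x_0,y_0))\bigr] = \frac{\partial v}{\partial y} - \im\,\frac{\partial u}{\partial y} \, .
\end{equation}
Equating real and imaginary parts of these two expressions for the single quantity $f'(z_0)$ produces exactly $\partial u/\partial x = \partial v/\partial y$ and $\partial v/\partial x = -\partial u/\partial y$.

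For the sufficiency direction (which the ambiguous wording ``if satisfies'' suggests should also be covered), I would use the stated real-differentiability of $u$ and $v$ to write, near $z_0$,
\begin{equation}
    f(z_0+h) - f(z_0) = \left(\tfrac{\partial u}{\partial x} h_1 + \tfrac{\partial u}{\partial y} h_2\right) + \im\left(\tfrac{\partial v}{\partial x} h_1 + \tfrac{\partial v}{\partial y} h_2\right) + o(|h|) \, .
\end{equation}
Substituting the Cauchy--Riemann relations to eliminate $\partial v/\partial y$ and $\partial v/\partial x$, the linear part factors as $\bigl(\tfrac{\partial u}{\partial x} - \im\,\tfrac{\partial u}{\partial y}\bigr)(h_1 + \im h_2)$, which then divided by $h$ yields a finite limit. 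I expect the main obstacle to be this sufficiency direction: the necessity argument only probes two directions, while sufficiency requires the full two-dimensional real-differentiability assumption to control the error term $o(|h|)$ uniformly in the direction of approach. Merely having the partial derivatives exist would not be enough, so the hypothesis that $u,v$ are real-differentiable (not just partially differentiable) is essential and must be invoked explicitly.
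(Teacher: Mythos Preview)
Your necessity argument is essentially identical to the paper's: the paper also computes $f'(z)$ along the horizontal approach $\Delta x \to 0$ and the vertical approach $i\Delta y \to 0$, obtains $\frac{\partial u}{\partial x} + i\,\frac{\partial v}{\partial x} = \frac{\partial v}{\partial y} - i\,\frac{\partial u}{\partial y}$, and reads off the equations by matching real and imaginary parts.

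Where you differ is in going further. Despite the theorem being phrased as a sufficiency statement (``$f$ is complex-differentiable \emph{if} \ldots''), the paper's proof only establishes necessity and stops there. You correctly note the ambiguous wording and supply the converse as well, invoking the real-differentiability hypothesis to control the $o(|h|)$ remainder and factor the linear part as $(\partial u/\partial x - i\,\partial u/\partial y)(h_1 + i h_2)$. This is a genuine addition: your proof actually matches the stated direction of the theorem, while the paper's does not. Your remark that mere existence of partials would not suffice is also apt and is a subtlety the paper does not address.
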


\begin{proof}
    Given $f: \mathbb{C} \longrightarrow \mathbb{C}$: 
    \begin{equation} 
        f(x+iy) = u(x,y)+i\,v(x,y) \, ,
        \label{eq:f-defined-with-u-and-v}
    \end{equation}
    with $u,v: \mathbb{R} \longrightarrow \mathbb{R} $ real-differentiable functions. 
    
    For $f$ to be complex-differentiable, then:
    \begin{equation}
        f'(z) = \lim_{\Delta x \to 0} \frac{f(z+\Delta x)-f(z)}{\Delta x} = \lim_{i\Delta y \to 0} \frac{f(z+i\,\Delta y)-f(z)}{i\,\Delta y} \, .
        \label{eq:complex-differentiable-condition}
    \end{equation}
    
    Replacing \eqref{eq:complex-differentiable-condition} with \eqref{eq:f-defined-with-u-and-v} and doing some algebra:
     \begin{equation}
        f'(z) = \frac{\partial u}{\partial x} + i \,\frac{\partial v}{ \partial x} = \frac{\partial v}{ \partial y} - i \,\frac{\partial u}{\partial y} \, .
    \end{equation}
    
    By comparing real and imaginary parts from the latest function, the \\ Cauchy-Riemann equations are evident.
\end{proof}

%------------------------------------------------------------------------
\subsection{Chain Rule}

\begin{theorem}[real multivariate chain rule with complex variable] \label{the:real-multivare-chain-rule-complex-var}
    Given $f: \mathbb{R}^2 \longrightarrow \mathbb{R}, z \in \mathbb{C}$ and $x(z), y(z): \mathbb{C} \longrightarrow \mathbb{R}$ 
    \begin{equation}
        \frac{\partial f}{\partial z} = \frac{\partial f}{\partial x} \frac{\partial x}{\partial z} + \frac{\partial f}{\partial y} \frac{\partial y}{\partial z} \, .
    \end{equation}
    
    This chain rule is analogous to that of the multivariate chain rule with real values.
    The need for stating this case arises from a demonstration that will be done for Theorem \ref{the:complex-chain-rule}. 
\end{theorem}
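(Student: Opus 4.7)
The plan is to reduce the claim to the ordinary real multivariate chain rule by unfolding the Wirtinger derivative on each side in terms of the real and imaginary parts of $z$.

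To avoid a notational collision with the symbols $x, y$ appearing in the Wirtinger definition (Equation~\eqref{eq:wirtinger-calculus}), I would first introduce auxiliary names $a, b \in \mathbb{R}$ with $z = a + \im\, b$. Since $x(z)$ and $y(z)$ are real-valued functions of $z$, they are ordinary real functions of the pair $(a,b)$, and so is the composition $f(x(z), y(z))$. Applying the Wirtinger definition to the left-hand side gives
\[ \frac{\partial f}{\partial z} \;=\; \frac{1}{2}\!\left( \frac{\partial f}{\partial a} - \im\, \frac{\partial f}{\partial b}\right). \]

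Next, I would expand $\partial f/\partial a$ and $\partial f/\partial b$ by the classical real multivariate chain rule (both $a$ and $b$ are real variables, so no complex analysis is needed at this step), writing each as $(\partial f/\partial x)(\partial x/\partial \cdot) + (\partial f/\partial y)(\partial y/\partial \cdot)$. Substituting these two expansions into the previous display and collecting terms in front of $\partial f/\partial x$ and $\partial f/\partial y$, each coefficient takes the form $\tfrac{1}{2}(\partial/\partial a - \im\, \partial/\partial b)$ applied to $x$ (resp.\ $y$); by one more application of the Wirtinger definition, these coefficients are exactly $\partial x/\partial z$ and $\partial y/\partial z$, and the identity drops out.

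The main obstacle here is not computational but notational: the letters $x, y$ play two distinct roles in the statement, namely the real/imaginary parts of the argument in the Wirtinger formula and the coordinate inputs of $f$. Carrying the auxiliary real/imaginary variables $(a,b)$ through the argument is what keeps the composition unambiguous. Once that bookkeeping is in place, the proof is just linearity of partial derivatives together with the ordinary real-valued chain rule, which is precisely the ``analogous'' flavor asserted in the theorem.
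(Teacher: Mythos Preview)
Your argument is correct: unfolding the Wirtinger operator in the real/imaginary coordinates of $z$, applying the ordinary real multivariate chain rule to the real-valued composite, and then refolding the two pieces back into Wirtinger form is exactly the right computation, and your care about the notational clash between the $(x,y)$ inputs of $f$ and the $(x,y)$ in the Wirtinger definition is well placed.

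As for the comparison: the paper does not actually give a proof of this theorem. It simply records the identity and remarks that it ``is analogous to that of the multivariate chain rule with real values,'' deferring it as a known fact to be used in the proof of the next theorem (the complex chain rule over real and imaginary parts). In that subsequent proof the paper carries out essentially the same unfold--real-chain-rule--refold manoeuvre you describe, but for a complex-valued $f$, splitting $f=u+\im v$ and invoking the present theorem on $u$ and $v$ separately. So your proposal supplies the detail the paper omits, and it does so in precisely the spirit of the paper's surrounding arguments; there is no genuinely different route here, only a difference in how much is written out.
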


\begin{theorem}[complex chain rule over real and imaginary part] \label{the:complex-chain-rule-real-im-part}
    Given $f: \mathbb{C} \longrightarrow \mathbb{C}, z \in \mathbb{C}$ and $x(z), y(z): \mathbb{C} \longrightarrow \mathbb{R}$ 
    \begin{equation}
        \frac{\partial f}{\partial z} = \frac{\partial f}{\partial x} \frac{\partial x}{\partial z} + \frac{\partial f}{\partial y} \frac{\partial y}{\partial z} \, .
    \end{equation}
\end{theorem}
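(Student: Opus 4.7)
The plan is to reduce the claim to the already-stated real multivariate chain rule with complex variable (Theorem \ref{the:real-multivare-chain-rule-complex-var}) by splitting $f$ into its real and imaginary parts. By Theorem \ref{the:complex-to-real-mapping}, since $f:\mathbb{C}\to\mathbb{C}$ we can write $f = u + \im\, v$ where $u, v:\mathbb{C}\to\mathbb{R}$.

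First, I would use linearity of the Wirtinger $\mathbb{R}$-derivative (which is immediate from the defining expression in Equation \eqref{eq:wirtinger-calculus}, since it is a linear combination of the real partial derivatives $\partial/\partial x$ and $\partial/\partial y$ that themselves are linear in $f$) to write
\begin{equation}
    \frac{\partial f}{\partial z} = \frac{\partial u}{\partial z} + \im\, \frac{\partial v}{\partial z}.
\end{equation}
Next I would apply Theorem \ref{the:real-multivare-chain-rule-complex-var} separately to the real-valued $u$ and $v$, each seen as a function of the real variables $x,y$ which in turn depend on the complex variable $z$. This yields
\begin{equation}
    \frac{\partial u}{\partial z} = \frac{\partial u}{\partial x}\frac{\partial x}{\partial z} + \frac{\partial u}{\partial y}\frac{\partial y}{\partial z}, \qquad \frac{\partial v}{\partial z} = \frac{\partial v}{\partial x}\frac{\partial x}{\partial z} + \frac{\partial v}{\partial y}\frac{\partial y}{\partial z}.
\end{equation}

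Finally I would regroup the terms by the common factors $\partial x/\partial z$ and $\partial y/\partial z$:
\begin{equation}
    \frac{\partial f}{\partial z} = \left(\frac{\partial u}{\partial x} + \im\,\frac{\partial v}{\partial x}\right)\frac{\partial x}{\partial z} + \left(\frac{\partial u}{\partial y} + \im\,\frac{\partial v}{\partial y}\right)\frac{\partial y}{\partial z},
\end{equation}
and identify each parenthesised block as $\partial f/\partial x$ and $\partial f/\partial y$ respectively (using linearity of the real partial once more), which gives exactly the claimed formula.

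The only genuine subtlety, and what I would treat as the main obstacle to state cleanly, is that the symbols $\partial/\partial z$ here are Wirtinger derivatives, not classical complex derivatives, so the chain rule does not come for free from holomorphy. That is why the reduction to the real-valued Theorem \ref{the:real-multivare-chain-rule-complex-var}, whose proof rests only on ordinary real calculus applied to the components of $z=x+\im y$ via the Wirtinger definition, is the natural route. Once that reduction is in place, everything else is a bookkeeping step about complex linearity.
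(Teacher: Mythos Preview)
Your proposal is correct and follows essentially the same approach as the paper: decompose $f=u+\im v$ via Theorem~\ref{the:complex-to-real-mapping}, apply Theorem~\ref{the:real-multivare-chain-rule-complex-var} to $u$ and $v$ separately, then regroup by $\partial x/\partial z$ and $\partial y/\partial z$ to recover $\partial f/\partial x$ and $\partial f/\partial y$. Your added remark about the Wirtinger derivative not relying on holomorphy is a useful clarification but does not change the argument.
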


\begin{proof}
    By using theorem \ref{the:complex-to-real-mapping}, we can write $f = u + i\,v$ so that:
    \begin{equation}
        \frac{\partial f}{\partial z} = \frac{\partial u}{\partial z} + \im \,\frac{\partial v}{\partial z}\, .
    \end{equation}
    Using theorem \ref{the:real-multivare-chain-rule-complex-var} we can apply the chain rule with $u$ and $v$:
    \begin{equation}
        \frac{\partial f}{\partial z} = \left[ \frac{\partial u}{\partial x} \frac{\partial x}{\partial z} + \frac{\partial u}{\partial y} \frac{\partial y}{\partial z} \right] + \im \left[ \frac{\partial v}{\partial x} \frac{\partial x}{\partial z} + \frac{\partial v}{\partial y} \frac{\partial y}{\partial z} \right]\, .
    \end{equation}
    Rearranging the terms leads to:
    \begin{equation}
    \begin{aligned}
        \frac{\partial f}{\partial z} & = \left[ \frac{\partial u}{\partial x} \frac{\partial x}{\partial z} + \im\, \frac{\partial v}{\partial x} \frac{\partial x}{\partial z} \right] 
        +  \left[ \frac{\partial u}{\partial y} \frac{\partial y}{\partial z} + \im\, \frac{\partial v}{\partial y} \frac{\partial y}{\partial z} \right] \, ,\\
        & = \left[ \frac{\partial u}{\partial x} + \im\, \frac{\partial v}{\partial x} \right]  \frac{\partial x}{\partial z} 
        +  \left[ \frac{\partial u}{\partial y}  + \im\, \frac{\partial v}{\partial y} \right] \frac{\partial y}{\partial z} \, ,\\
        & = \left[ \frac{\partial u + \im \,\partial v}{\partial x} \right]  \frac{\partial x}{\partial z} 
        +  \left[ \frac{\partial u + \im \,\partial v}{\partial y} \right] \frac{\partial y}{\partial z} \, ,\\
        & = \left[ \frac{\partial (u + \im \, v)}{\partial x} \right]  \frac{\partial x}{\partial z} 
        +  \left[ \frac{\partial (u + \im\, v)}{\partial y} \right] \frac{\partial y}{\partial z} \, ,\\
        & = \frac{\partial f}{\partial x} \frac{\partial x}{\partial z} + \frac{\partial f}{\partial y} \frac{\partial y}{\partial z}\, .
    \end{aligned}
    \end{equation}
\end{proof}

\begin{corollary}\label{cor:complex-chain-rule-real-variable}
   See that the proof of Theorem \ref{the:real-multivare-chain-rule-complex-var} is indistinct weather $z \in \mathbb{C}$ or $z \in \mathbb{R}$, so that both \eqref{the:real-multivare-chain-rule-complex-var} and \eqref{the:complex-chain-rule-real-im-part} are also valid for $z \in \mathbb{R}$
\end{corollary}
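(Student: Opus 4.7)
The plan is to observe that the corollary is essentially a meta-statement about the proofs of the two theorems rather than a new claim requiring fresh content. First I would re-examine the proof of Theorem \ref{the:real-multivare-chain-rule-complex-var} and verify, line by line, that every step uses only the differentiability of $x(z)$ and $y(z)$ together with the linearity of partial differentiation; at no point is the complex structure of $z$ (such as the decomposition $z = x + \im y$, Wirtinger calculus, or the Cauchy--Riemann equations) actually invoked. Since $\mathbb{R} \subset \mathbb{C}$, restricting $z$ to the real line gives maps $x(z), y(z) : \mathbb{R} \longrightarrow \mathbb{R}$ and the partial derivative $\partial/\partial z$ reduces to the ordinary real derivative, so the identity $\partial f/\partial z = (\partial f/\partial x)(\partial x/\partial z) + (\partial f/\partial y)(\partial y/\partial z)$ persists verbatim.

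Next I would do the same inspection for Theorem \ref{the:complex-chain-rule-real-im-part}. Its proof decomposes $f = u + \im v$ via Theorem \ref{the:complex-to-real-mapping}, writes $\partial f/\partial z = \partial u/\partial z + \im\, \partial v/\partial z$, then applies Theorem \ref{the:real-multivare-chain-rule-complex-var} to each real component and regroups terms. Each of these moves is purely algebraic and indifferent to whether $z$ is real or complex. So once Theorem \ref{the:real-multivare-chain-rule-complex-var} is known to extend to real $z$, the same proof chain immediately extends Theorem \ref{the:complex-chain-rule-real-im-part} to real $z$ as well.

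The only subtle point, and the step I would be most careful about, is making explicit the interpretation of $\partial f/\partial z$ when $z \in \mathbb{R}$. For $f : \mathbb{C} \longrightarrow \mathbb{C}$ restricted to $z \in \mathbb{R}$, one should read $\partial f/\partial z$ as the ordinary derivative $df/dz$ of a complex-valued function of a real variable, which coincides with the complex partial derivative along the real axis. Once this identification is stated, no further calculation is needed: the corollary follows because the algebraic manipulations in both proofs never distinguish between the two cases. Thus the proof is essentially a two-sentence remark pointing back at the existing derivations, and I would present it as such rather than redo the chain rule algebra.
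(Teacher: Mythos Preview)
Your proposal is correct and matches the paper's approach exactly: the paper offers no separate proof for this corollary at all, treating the statement itself as the self-evident observation that the derivations of the two preceding theorems never exploit the complex nature of $z$. Your write-up is simply a more explicit unpacking of that same meta-remark, so there is nothing to add or correct.
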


\begin{definition}[complex chain rule]
Given $h,g: \mathbb{C} \longrightarrow \mathbb{C}, z \in \mathbb{C}$. 
The chain rule in complex numbers is now given by:
\begin{equation}
\begin{aligned}
    \frac{\partial h(g)}{\partial z} & = \frac{\partial h}{\partial g} \frac{\partial g}{\partial z} + \frac{\partial h}{\partial \conjugate{g}} \frac{\partial \conjugate{g}}{\partial z} \\
    \frac{\partial h(g)}{\partial \conjugate{z}} & = \frac{\partial h}{\partial g} \frac{\partial g}{\partial \conjugate{z}} + \frac{\partial h}{\partial \conjugate{g}} \frac{\partial \conjugate{g}}{\partial \conjugate{z}}\, .
    \end{aligned}
    \label{the:complex-chain-rule}
\end{equation}
\end{definition}

\begin{proof}
As we make no assumption of $h$ or $g$ being \textit{holomorphic}, we will use Wirtinger calculus \eqref{eq:wirtinger-calculus} (having in mind that \textit{holomorphic} functions are special cases of the later):
    \begin{equation}
        \frac{\partial \left( f \circ g \right)}{\partial z}
             = \frac{1}{2} \left( \frac{\partial \left( f \circ g \right)}{\partial z_{\Re{}}} - \im\, \frac{\partial \left( f \circ g \right)}{\partial z_{\Im{}}} \right) \, .
    \end{equation}
    Using \eqref{cor:complex-chain-rule-real-variable}:
    \begin{align}
        \frac{\partial \left( f \circ g \right)}{\partial z}
            & = \frac{1}{2} \left(  \left(  \frac{\partial f}{\partial g_{\Re{}}} \frac{\partial g_{\Re{}}}{\partial z_{\Re{}}} 
                +
                \frac{\partial f}{\partial g_{\Im{}}} \frac{\partial g_{\Im{}}}{\partial z_{\Re{}}} \right)
            - \im \,  \left( \frac{\partial f}{\partial g_{\Re{}}} \frac{\partial g_{\Re{}}}{\partial z_{\Im{}}} 
                +
                \frac{\partial f}{\partial g_{\Im{}}} \frac{\partial g_{\Im{}}}{\partial z_{\Im{}}}  \right) \right) \, , \\
            & \begin{aligned}
                & = \frac{1}{4} % \left(  
                \left(  \frac{\partial f}{\partial g_{\Re{}}} \frac{\partial (g + \conjugate{g})}{\partial z_{\Re{}}} 
                    - \im \, 
                    \frac{\partial f}{\partial g_{\Im{}}} \frac{\partial (g - \conjugate{g})}{\partial z_{\Re{}}}  \right) \\
                &\qquad \qquad - \im \, \frac{1}{4} 
                \left( \frac{\partial f}{\partial g_{\Re{}}} \frac{\partial (g + \conjugate{g})}{\partial z_{\Im{}}} 
                    - \im \,
                    \frac{\partial f}{\partial g_{\Im{}}} \frac{\partial (g - \conjugate{g})}{\partial z_{\Im{}}}  \right)
                % \right) 
                \end{aligned} \\
            & \begin{aligned}
                & = \frac{1}{4}
                \frac{\partial f}{\partial g_{\Re{}}} \left( \frac{\partial (g + \conjugate{g})}{\partial z_{\Re{}}} - \im \,  \frac{\partial (g + \conjugate{g})}{\partial z_{\Im{}}} \right) \\
                & \qquad \qquad - \frac{1}{4}
                \frac{\partial f}{\partial g_{\Im{}}} \left( \im\, \frac{\partial (g - \conjugate{g})}{\partial z_{\Re{}}} +  \frac{\partial (g - \conjugate{g})}{\partial z_{\Im{}}} \right)  \, .
            \end{aligned}
    \end{align}
    Using Wirtinger calculus definition again (Equation \ref{eq:wirtinger-calculus}):
    \begin{equation}
        \frac{\partial \left( f \circ g \right)}{\partial z}
             = \frac{1}{2} \left(
            \frac{\partial f}{\partial g_{\Re{}}} \frac{\partial (g + \conjugate{g})}{\partial z}
            - i
            \frac{\partial f}{\partial g_{\Im{}}} \frac{\partial (g - \conjugate{g})}{\partial z}
            \right) \, .
    \end{equation}
    
    Using \eqref{cor:partial-f-g-real}:
    \begin{equation}
    \begin{aligned}
        \frac{\partial \left( f \circ g \right)}{\partial z}
            & = \frac{1}{2} \left(
            \left( \frac{\partial f}{\partial g} + \frac{\partial f}{\partial \conjugate{g}} \right) \frac{\partial (g + \conjugate{g})}{\partial z}
            +
            \left( \frac{\partial f}{\partial g} - \frac{\partial f}{\partial \conjugate{g}} \right) \frac{\partial (g - \conjugate{g})}{\partial z}
            \right) \, , \\
            & = \frac{1}{2} \left(
            \frac{\partial f}{\partial g} \left( \frac{\partial (g + \conjugate{g})}{\partial z} + \frac{\partial (g - \conjugate{g})}{\partial z} \right) +
            \frac{\partial f}{\partial \conjugate{g}} \left( \frac{\partial (g + \conjugate{g})}{\partial z} - \frac{\partial (g - \conjugate{g})}{\partial z} \right)
            \right) \, , \\
            & = \frac{1}{2} \left(
            \frac{\partial f}{\partial g} \left( \frac{\partial (g + \conjugate{g}) + \partial (g - \conjugate{g})}{\partial z} \right) +
            \frac{\partial f}{\partial \conjugate{g}} \left( \frac{\partial (g + \conjugate{g}) - \partial (g - \conjugate{g})}{\partial z} \right)
            \right) \, ,\\
            & = \frac{1}{2} \left(
            \frac{\partial f}{\partial g} \left( \frac{\partial (g + \conjugate{g} + g - \conjugate{g})}{\partial z} \right) +
            \frac{\partial f}{\partial \conjugate{g}} \left( \frac{\partial (g + \conjugate{g} - g + \conjugate{g})}{\partial z} \right)
            \right) \, ,\\
            & = \left(
            \frac{\partial f}{\partial g}  \frac{\partial g}{\partial z}  +
            \frac{\partial f}{\partial \conjugate{g}} \frac{\partial \conjugate{g}}{\partial z} 
            \right) \, .
    \end{aligned}
    \end{equation}
    % TODO: http://www.ekinakyurek.com/complex-derivatives-wirtinger/
\end{proof}

%--------------------------------------------------------------------------

\section{Real Valued Backpropagation} \label{sec:real-valued-backprop}

To learn how to optimize the weight of $\omega^{(l)}_{ij}$, it is necessary to find the partial derivative of the loss function for a given weight. We will use the chain rule as follows:
\begin{equation}
    \frac{\partial \loss}{\partial \omega^{(l)}_{ij}} = \sum_{n=1}^{N_L} \frac{\partial e_n}{\partial X_n^{(L)}} \frac{\partial X_n^{(L)}}{\partial V_n^{(L)}} \frac{\partial V_n^{(L)}}{\partial \omega^{(l)}_{ij}} \, .
    \label{eq:partial-loss-function-chain-rule-definition}
\end{equation}

Given these three terms inside the addition, we could find the value we are looking for. The first two partial derivatives are trivial as $\partial e_n(d_n, y_n) / \partial y_n$ exists and is no zero and $\partial X_n^{(L)} / \partial V_n^{(L)}$ is the derivate of $\activation$.

With regard to $ \partial V_n^{(L)} / \partial \omega^{(l)}_{ij} $, when the layer is the same for both values ($l = L$), the definition is trivial and the following result is obtained:
\begin{equation}
\begin{aligned}
    \frac{\partial V_i^{(l)}}{\partial \omega^{(l)}_{ij}} 
       & = \frac{\partial \left( \displaystyle \sum_{\mathrm{j}} \omega^{(l)}_{i\mathrm{j}} X_\mathrm{j}^{(l-1)} \right)}{\partial \omega^{(l)}_{ij}} 
       = \sum_{\mathrm{j}} \frac{\partial \left( \omega^{(l)}_{i\mathrm{j}} X_\mathrm{j}^{(l-1)} \right)}{\partial \omega^{(l)}_{ij}} \, ,\\
       & = \begin{cases}
            0 & \mathrm{j} \neq j \\
            X_j^{(l-1)} & \mathrm{j} = j
       \end{cases} \, ,\\
       & =  X_j^{(l-1)} \, .
\end{aligned}
\label{eq:partial-Vn-w-with-h-l-0}
\end{equation}

Note the subtle difference between $\mathrm{j}$ and $j$. We will now define the cases where the weight and $V_n$ are not from the same layer.

For given $h,l \in [0, L]$, with $h \leq l - 2$ we can define the derivative as follows:
\begin{equation}
\begin{aligned}
    \frac{\partial V_n^{(l)}}{\partial \omega^{(h)}_{jk}} 
    & = \frac{\partial \left( \displaystyle \sum_{i} \omega_{ni}^{(l)} X_i^{(l-1)} \right)}{\partial \omega^{(h)}_{jk}} \, ,\\
    & = \sum_{i}^{N_{l-1}} \omega_{ni}^{(l)} \frac{\partial X_i^{(l-1)}}{\partial \omega^{(h)}_{jk}} \, ,\\
    & = \sum_{i}^{N_{l-1}} \omega_{ni}^{(l)} \frac{\partial X_i^{(l-1)}}{\partial V_i^{(l-1)}} \frac{\partial V_i^{(l-1)}}{\partial \omega^{(h)}_{jk}} \, .
\end{aligned}
\label{eq:partial-Vn-w-with-h-l-2}
\end{equation}

Using Equations \ref{eq:partial-Vn-w-with-h-l-0} and \ref{eq:partial-Vn-w-with-h-l-2}, we have all cases except for $h = l - 1$ which will be:
\begin{equation}
\begin{aligned}
    \frac{\partial V_n^{(l)}}{\partial \omega^{(l-1)}_{jk}}
    & = \frac{\partial \left( \displaystyle \sum_{\mathrm{j}} \omega_{n\mathrm{j}}^{(l)} X_\mathrm{j}^{(l-1)} \right)}{\partial \omega^{(l-1)}_{jk}} \, ,\\
    & = \sum_{\mathrm{j}}^{N_{l-1}} \omega_{n\mathrm{j}}^{(l)} \frac{\partial X_\mathrm{j}^{(l-1)}}{\partial \omega^{(l-1)}_{jk}} \, ,\\
    & = \omega_{nj}^{(l)} \frac{\partial X_j^{(l-1)}}{\partial V_j^{(l-1)}} \frac{\partial V_j^{(l-1)}}{\partial \omega^{(l-1)}_{jk}}\, .
\end{aligned}
\label{eq:partial-Vn-w-with-h-l-1}
\end{equation}

Using the result from \eqref{eq:partial-Vn-w-with-h-l-0} we can get a final result for \eqref{eq:partial-Vn-w-with-h-l-1}. To sum up, the derivative can be written as follows:
\begin{equation}
    \frac{\partial V_n^{(l)}}{\partial \omega^{(h)}_{jk}} =
    \begin{cases}
        X_j^{(l-1)} & h = l \, ,\\
        \omega_{nj}^{(l)} \displaystyle \frac{\partial X_j^{(l-1)}}{\partial V_j^{(l-1)}} \displaystyle \frac{\partial V_j^{(l-1)}}{\partial \omega^{(l-1)}_{jk}} & h = l - 1 \, ,\\
        \displaystyle \sum_{i}^{N_{l-1}} \omega_{ni}^{(l)} \frac{\partial X_i^{(l-1)}}{\partial V_i^{(l-1)}} \displaystyle \frac{\partial V_i^{(l-1)}}{\partial \omega^{(h)}_{jk}} & h \leq l - 2 \, .
    \end{cases}
    \label{eq:partial-Vn-w-generic}
\end{equation}

Equation \ref{eq:partial-loss-function-chain-rule-definition} can then be solved applying \eqref{eq:partial-Vn-w-generic} iteratively to reduce the exponent $L$ to the desired value $l$. Note that $l \leq L$ and $L > 0$.

\subsection{Benvenuto and Piazza definition}

In Reference \cite{BENVENUTO_PIAZZA_BACKPROPAGATION}, another recursive definition for the backpropagation algorithm is defined:
\begin{equation}
    e_n^{(l)} = 
    \begin{cases}
        e_n & l = L \, ,\\
        \displaystyle \sum_{q=1}^{N_{l+1}} \omega_{qn}^{(l+1)} \delta_q^{(l+1)} & l < L \, ,
    \end{cases}
\end{equation}
with $\delta_n^{(l)} = e_n^{(l)}\activation'(V_n^{(l)})$. Then the derivation is defined recursively as:
\begin{equation}
    \frac{\partial \loss}{\partial \omega^{(l)}_{ij}} = \sum_n^{N_L} \delta_n^{(l)} X_{m}^{(l-1)} \, .
    \label{eq:benvenuto-recursive-def-delta-w}
\end{equation}

It can be proven that \eqref{eq:partial-loss-function-chain-rule-definition} and \eqref{eq:benvenuto-recursive-def-delta-w} are equivalent.

\section{Automatic Differentiation} \label{chap:autodif}

This topic is also very well covered in Appendix D of \cite{geron2019hands}. The appendix also covers manual differentiation, symbolic differentiation, and numerical differentiation. In this Section, we will jump directly to forward-mode \acrfull{autodiff} and reverse-mode \acrshort{autodiff}.

There many approaches to explain \acrfull{autodiff} \cite{hoffmann2016hitchhiker}. Most cases tend to assume the derivative at a given point exists which is a logical assumption having in mind the algorithm is computing the derivative itself. However, we have seen that for our case we can soften this definition and only ask for the \textit{Wirtinger calculus} to exist. Furthermore, the requirement that the derivative at a point exists is a very strong condition that is to be avoided to compute complex number backpropagation. Even in the real domain, there are examples like \acrfull{relu} that are widely used for deep neural networks that have no derivative at $x = 0$, and yet the backpropagation is applied without a problem.

\begin{itemize}
    \item \cite{rall1986arithmetic} explains \acrshort{autodiff} it in a very clear and concise manner by defining the \textit{dual number} (to be explained later in this Chapter) arithmetic directly but assumes that the derivative in the point exists.
    \item \cite{hall2003calculating} is one of the few that actually talks about complex number \acrshort{autodiff} but assumes that Cauchy-Riemann equations are valid.
    \item \cite{pearlmutter2007lazy} presents \textit{Taylor series} as the main idea behind forward-mode \acrshort{autodiff}. However, \textit{Taylor series} assumes that the function is infinitely derivable at the desired point.
    \item \cite{rall1983differentiation} assumes the function is differentiable.
\end{itemize}

\subsection{Forward-mode automatic differentiation} \label{sec:forward-mode-autodiff}

In this Section, we will demonstrate the theory behind forward-mode \acrshort{autodiff} in a general way, and we will not ask for $f$ to be infinitely derivable at a point furthermore, we will not even require it to have a first derivative. We will later extend this definition to the complex domain. To the author's knowledge, this demonstration is not present in any other Reference, although the high quantity of bibliography on this subject may suggest otherwise. After defining forward-mode \acrshort{autodiff} we will proceed to explain reverse-mode \acrshort{autodiff}.

The definition of the derivative of function $f$ is given by:
\begin{equation}
    \label{eq:derivate-definition}
    f'(x) = \lim_{h \to 0} \frac{f(x+h) - f(x)}{h} \, .
\end{equation}

We will generalize this equation for only a one-sided limit. The choice of right- or left-sided limit will be indifferent to the demonstration:
\begin{equation}
    \label{eq:soften-derivate-definition}
    Df(x) = \lim_{h \to 0^\pm} \frac{f(x+h) - f(x)}{h} \, ,
\end{equation}
where $Df$ stands for this "soften" derivative definition and $\pm$ stands for either left ($-$) or right ($+$) sided limit.

In Equation \ref{eq:soften-derivate-definition}, we have "soften" the condition needed for the derivative. In cases where the derivative of $f$ exists, we will not need to worry because \eqref{eq:soften-derivate-definition} will converge to \eqref{eq:derivate-definition} hence being equivalent. However, in cases where the derivative does not exist, because the left side limit does not converge to the left side limit (like \acrshort{relu} at $x=0$), this definition will render a result.

From the above definition, we have:
\begin{equation}
\begin{aligned}
    Df(x) \, \lim_{h \to 0^\pm} h &= \lim_{h \to 0^\pm} f(x+h) - f(x) \\
    \lim_{h \to 0^\pm} f(x+h) &= f(x) + Df(x) \, \lim_{h \to 0^\pm} h \, .
\end{aligned}
\end{equation}

We can now define $\displaystyle\lim_{h \to 0} = \epsilon$. In this case, $\epsilon$ will be an infinitesimally small number. The sign of $\epsilon$ will depend on the side of the limit chosen, for example, if $\epsilon = \displaystyle\lim_{h^- \to 0}$ then $\epsilon$ can be $-0.00..001$ whereas if $\epsilon = \displaystyle\lim_{h^+ \to 0}$, $\epsilon$ can be $0.00..02$. Note that the last digit of $\epsilon$ can be anything; it can even be more than one digit as long as it is preceded by an "infinite" number of zeros.

With this definition, the above equation can now be written as:
\begin{equation}
    \label{eq:forward-equation-decimal}
    f(x+\epsilon) = f(x) + Df(x)\, \epsilon \, .
\end{equation}

Given a number $x = a + b\epsilon$, forward-mode automatic differentiation writes this number a tuple of two numbers in the form of $x = (a,b)$ called \textit{dual numbers}. \textit{Dual numbers} are represented in memory as a pair of floats. An arithmetic for this newly defined \textit{dual numbers} are described in detail in \cite{rall1986arithmetic}. We can think of dual numbers as a transformation $T_\epsilon[x]: \mathbb{R} \to \mathbb{R}^2 / T_\epsilon[a+b\,\epsilon] = (a, b)$ \cite{kedem1980automatic}. The real number system maps isomorphically into this new space by the mapping $x \mapsto (x,0), x \in \mathbb{R}$. To use the same notation as \cite{rall1986arithmetic}, we will call the \textit{dual number} space $\mathbb{D}$, although the transform definition explained here is slightly different and so, space $\mathbb{D}$ is not equivalent as space $\mathbb{D}$ from \cite{rall1986arithmetic}.
% TODO: explain the difference. My case: x = (x, 0) but on the paper it is x = (x, 1)
Operations in this space can be easily defined, for example:
\begin{align}
    \lambda(a, b) &= (\lambda \,a, \lambda \,b) \, ,\\
    (a, b) + (c, d) &= (a + b, c + d)      \, , \\
    (a, b) \, (c, d) &= (a\,c, (a\,d + b\,c) + b\,d\,\epsilon) = (a\,c, a\,d + b\,c)\, .
    \label{eq:mult-with-dual-numbers}
\end{align}

See that in Equation \ref{eq:mult-with-dual-numbers}, we have approximated $(a\,c, (a\,d + b\,c) + b\,d\,\epsilon) = (a\,c, a\,d + b\,c)$. This is evident because the second term of the dual number is stored in memory as a float. Being $b\,d$ a product of two bounded scalar values and being $\epsilon$, by definition, an infinitesimal number, the value yielded by the term $b\,d\,\epsilon$ will tend to zero and fall under the machine epsilon (machine precision). The existence of the additive inverse (or negative) and the identity element for addition can be easily defined. So is the case for multiplication. The multiplication and addition are commutative and associative. Basically, the space $\mathbb{D}$ is well defined \cite{rall1986arithmetic}.

The choice of $\epsilon$ will only affect how the transformation $T_\epsilon$ is applied but will not affect this demonstration nor the basic operations of the space $\mathbb{D}$. 

If we rewrite Equation \ref{eq:forward-equation-decimal} in \textit{dual number} notation we have:
\begin{equation}
    \label{eq:forward-equation-dual-number}
    f\left((a,b)\right) = \left(f(a), b \, Df(a)[\epsilon]\right) \, .
\end{equation}
Equation \ref{eq:forward-equation-dual-number} means that if we find a way to compute the function we want to derivative using dual number notation, the result will yield a dual number whose first number is the result of the function at point $a$ and the second number will be the derivative (provided $b = 1$) at that same point. Note that the above equation has widened the "soft" derivative definition to $Df(x)[\epsilon]$ which means the derivative of $f$ at point $x$ on the $\epsilon$ direction, in this case, either left or right-sided limit.

The strength of forward-mode \acrlong{autodiff} is that this result is exact \cite{hall2003calculating, hoffmann2016hitchhiker}. This is, the only inaccuracies which occur are those which appear due to rounding errors in floating-point arithmetic or due to imprecise evaluations of elementary functions. If we had infinite float number precision and we could define the exact value of $f$ on the \textit{dual number} base, we will have the exact value of the derivative.

Another virtue of forward-mode \acrshort{autodiff} is that $f$ can be any coded function whose symbolic equation is unknown. Forward-mode \acrshort{autodiff} can then compute any number of nested functions as long as the basic operations are well defined in \textit{dual number} form.
Then if $f$ does many calls to these basic operations inside loops or any conditional call (making it difficult to derive the symbolic equation), it will still be possible for forward-mode \acrshort{autodiff} to compute its derivative without any difficulty.

\subsubsection{Examples}

With the goal of helping the reader to better understand forward-mode \acrshort{autodiff}, we will see the example used in \cite{geron2019hands}, we define $f(x, y) = x^2y+y+2$. For it we compute $f(x+\epsilon,y) = 42 + 24\,\epsilon$ following the logic of Figure \ref{fig:forward-mode-autodiff}. Therefore, we conclude that $\displaystyle\frac{\partial f}{\partial x}(3,4) = 24$.

\begin{figure}[ht]
\centering
    \includegraphics[width=0.7\linewidth]{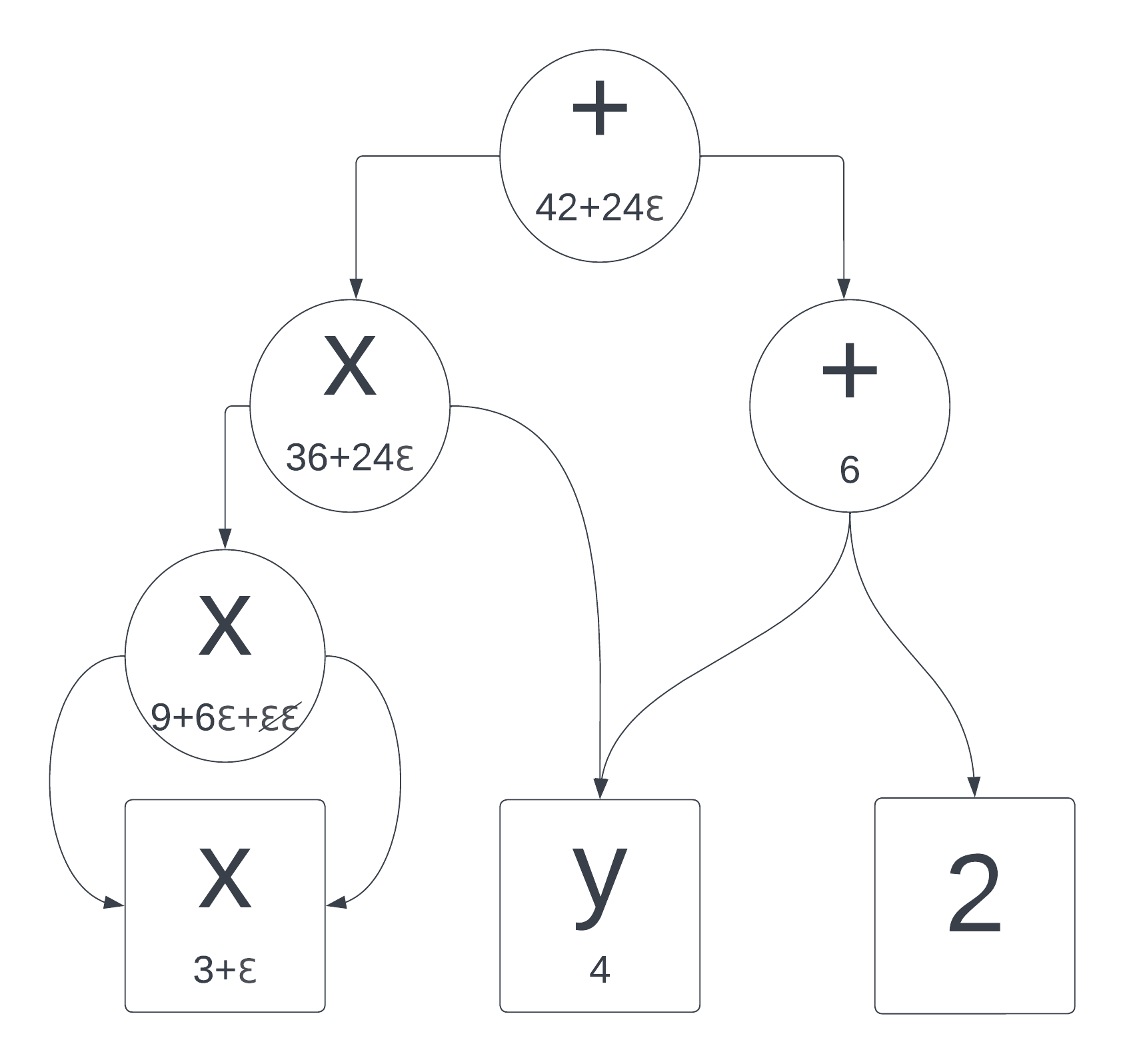}
    \caption{Forward-mode autodiff block diagram example}
    \label{fig:forward-mode-autodiff}
\end{figure}

\textbf{\acrfull{relu}}

In another example, a very well-known function called \acrfull{relu}, is one of the most used activation functions in machine learning models. We can write \acrshort{relu} as $f(x) = \max(0, x)$ for what it's derivative will be defined as:
\begin{equation}
    f'(x) = 
    \begin{cases}
        1 & x > 0 \\
        0 & x < 0 \, .
    \end{cases} 
\end{equation}

This is the example where:
\begin{equation}
    \label{eq:continuity-of-derivate}
    \lim_{h \to 0^+} \frac{f(x+h) - f(x)}{h} = 1 \neq \lim_{h \to 0^-} \frac{f(x+h) - f(x)}{h} = 0 \, .
\end{equation}

Therefore, its derivative is not defined at $x=0$. The theory of why this case doesn't pose any problem for reaching an optimal point when doing backpropagation is outside the scope of this text. However, we will see what the forward-mode \acrlong{autodiff} gives as a result. For it, we should first define, as usual, $f$ in the \textit{dual number} space:
\begin{equation}
    f\left((a,b)\right) = 
    \begin{cases}
        (a, b) & a > 0 \\
        (0, b) & a = 0 \\
        (0, 0) & a < 0 \, .
    \end{cases} 
\end{equation}

This definition is logical if we have chosen $\displaystyle\lim_{h^+ \to 0} = \epsilon$ and must be changed if the left-sided limit is chosen. We therefore compute the value of $f(0,1)$ to see the value of the derivative at point $x = 0$:
\begin{equation}
\begin{aligned}
    \label{eq:forward-mode-autodiff-relu}
    f(x + \epsilon) &= \max(0, 0.00...01) = 0.00...01 = (0, 1) = (f(0), f'(0))\, , \\
    f\left((0,1)\right) &= (0, 1)\, .
\end{aligned}
\end{equation}

We can see in Equation \ref{eq:forward-mode-autodiff-relu} that the forward-mode \acrshort{autodiff} algorithm will yield the result of $Df(0)[\epsilon] = 1$ which is an acceptable result for this case.

\subsubsection{Complex forward-mode automatic differentiation}

With the generalization to the complex case, $\epsilon$ now becomes complex, in the real case, we arbitrarily chose either the left or right-sided limit. Now, limitless directions are possible (phase), affecting, of course, the result of the derivative. Table \ref{table:directional-derivatives} shows the derivative that is calculated when different values of $\epsilon$ are chosen.

\begin{table}[ht]
\centering
\begin{tabular}{||c | c||} 
 \hline
 Definition & One possibility for $\epsilon$  \\ [0.5ex] 
 \hline\hline
 $\displaystyle\lim_{x \to 0} \frac{f(z+x)-f(z)}{x}$ & $0.00...01$  \\ 
 \hline
 $\displaystyle\lim_{y \to 0} \frac{f(z+\im \,y)-f(z)}{y}$ & $0.00...01 \,\im$  \\
 \hline
 $\displaystyle\lim_{x \to 0, y \to 0} \frac{f(z+x-\im \,y)-f(z)}{x+y}$ & $ 0.00...01\, (1-\im)$  \\
 \hline
 $\displaystyle\lim_{x \to 0, y \to 0} \frac{f(z+x+\im \,y)-f(z)}{x+y}$ & $0.00...01 \, (1+\im)$  \\ [1ex] 
 \hline
\end{tabular}
\caption{Directional derivatives with respect to $\epsilon$.}
\label{table:directional-derivatives}
\end{table}

The Table can be generalized to the following equation:
\begin{equation}
    \nabla_{\epsilon} f = \lim_{h \to 0} \frac{f(z+h\,\epsilon)-f(z)}{h\,\epsilon} \, .
\end{equation}

It is important to note that the module of $\epsilon$ is unimportant as long as it is small enough so that the analysis made in the last Section (Section \ref{sec:forward-mode-autodiff}) stands.

For functions where the complex derivative exists, all possibilities will converge to the same value.

The first and second terms on Table \ref{table:directional-derivatives} are used in the computation of \textit{Wirtinger Calculus}, and it is not necessarily equal to the third term of the Table.

% \textit{Wirtinger Calculus} is just computing the derivate of a complex function on a fixed direction. As a conclusion, by choosing $\Vec{\epsilon} = (1 - j)/2$ we will get the result of \textit{Wirtinger Calculus}.
% We could extend this definition even further. If we define $\epsilon = (1 + j) / 2$ and we set $b=2$, then because of \ref{eq:complex-gradient-def} we will actually be calculating the gradient definition using \textit{Wirtinger Calculus} such that $f(z_0+b*\epsilon) = (f(z_0), \nabla_z f)$.

\subsection{Reverse-mode automatic differentiation} \label{sec:reverse-autodiff}

Forward-mode \acrlong{autodiff} has many useful properties. First of all, as we have seen, the condition for the derivative to exist is not necessarily required for the method to yield a result that can be helpful when dealing with, in our case, non-holomorphic functions. Also, it allows finding a derivative value for any coded function, even containing loops or conditionals, as long as the primitives are defined. It is also natural for the algorithm to deal with the chain rule.

However, by taking a look at the example of Figure \ref{fig:forward-mode-autodiff}, if we now want to compute $\displaystyle\frac{\partial f}{\partial y}(3,4)$ we will need to compute $f(x,y+\epsilon)$, meaning that in order to know both $\displaystyle\frac{\partial f}{\partial x}(3,4)$ and  $\displaystyle\frac{\partial f}{\partial y}(3,4)$ we will need to compute the algorithm twice. This can be exponentially costly for neural networks where there are sometimes even millions of trainable parameters from which we need to compute the partial derivative. Here is where reverse-mode automatic differentiation comes to the rescue  enabling us to compute all partial derivatives at once.

\subsubsection{Examples}

In here we will show how reverse-mode \acrshort{autodiff} can compute the previous both $\displaystyle\frac{\partial f}{\partial x}(3,4)$ and  $\displaystyle\frac{\partial f}{\partial y}(3,4)$ of the previous example for forward-mode \acrshort{autodiff} running the code once. 

The first step is to compute $f(3, 4)$, whose intermediate values are shown on the bottom right of each node. Each node was labeled as $n_i$ for clarity with $i \in [1, 7]$. The output of $f(3, 4) = n_7 = 42$ as expected.

\begin{figure}[ht]
\centering
    \includegraphics[width=0.7\linewidth]{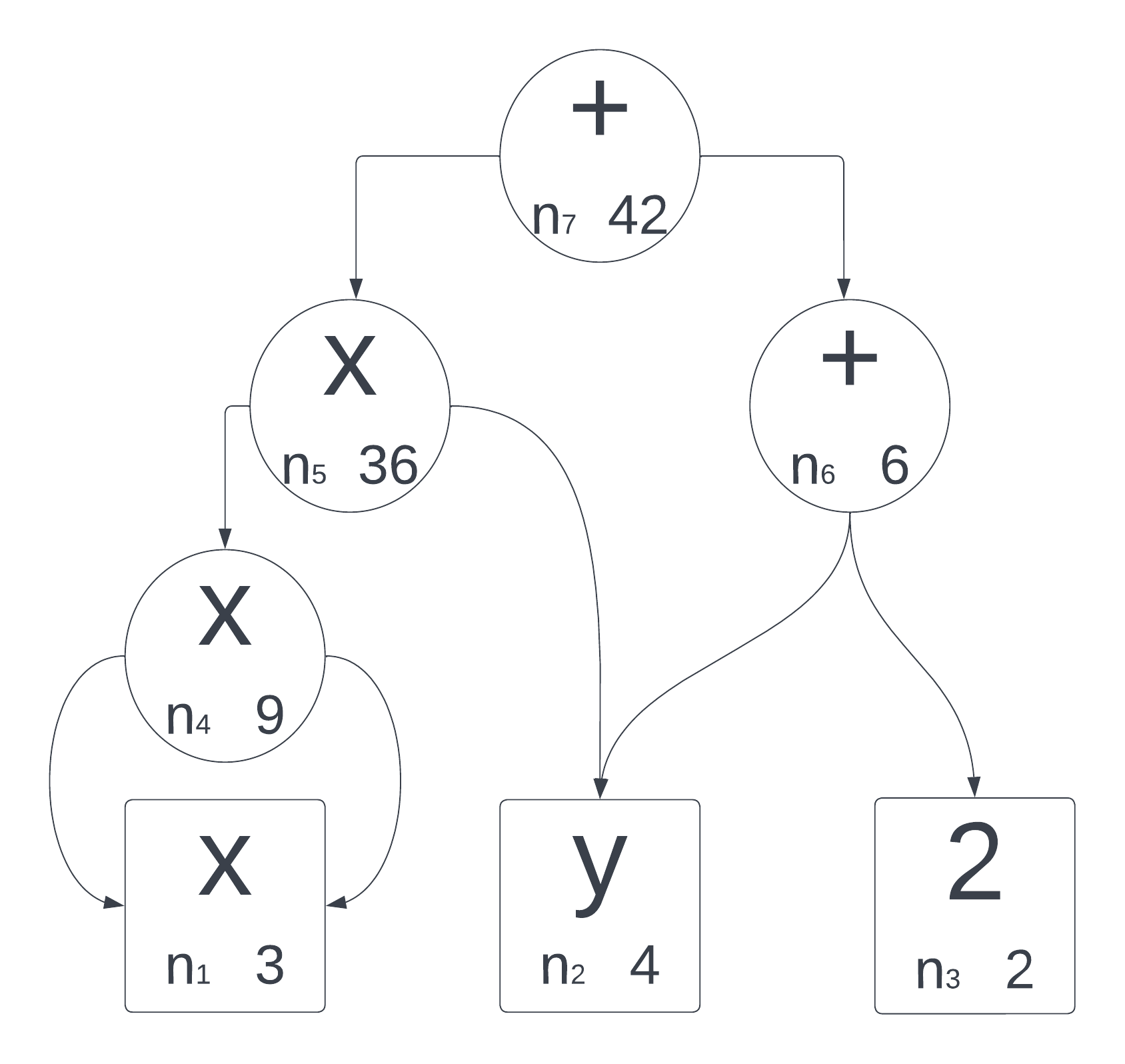}
    \caption{Reverse-mode autodiff block diagram example}
    \label{fig:reverse-mode-autodiff}
\end{figure}

Now all the partial derivatives $\displaystyle\frac{\partial f}{\partial n_i}$ are computed starting with $n_7$. Since $n_7$ is the output node, $\displaystyle\frac{\partial f}{\partial n_7} = 1$. The chain rule is then used to compute the rest of the nodes by going down the graph. For example, to compute $\displaystyle\frac{\partial f}{\partial n_5}$ we use
\begin{equation}
    \frac{\partial f}{\partial n_5} = \frac{\partial f}{\partial n_7} \frac{\partial n_7}{\partial n_5} \, ,
\end{equation}

and as we previously calculated $\displaystyle\frac{\partial f}{\partial n_7}$, we just need to compute the second term. This methodology will be repeated until all nodes' partial derivatives are computed. Here is the full list of the partial derivatives:
\begin{itemize}
    \item $\displaystyle\frac{\partial f}{\partial n_7} = 1$,
    \item $\displaystyle\frac{\partial f}{\partial n_6} = \frac{\partial f}{\partial n_7} \frac{\partial n_7}{\partial n_6} = 1$,
    \item $\displaystyle\frac{\partial f}{\partial n_5} = \frac{\partial f}{\partial n_7} \frac{\partial n_7}{\partial n_5} = 1$,
    \item $\displaystyle\frac{\partial f}{\partial n_4} = \frac{\partial f}{\partial n_5} \frac{\partial n_5}{\partial n_4} = n_2 = 4$,
    \item $\displaystyle\frac{\partial f}{\partial y} = \frac{\partial f}{\partial n_2} = 
    \frac{\partial f}{\partial n_5} \frac{\partial n_5}{\partial n_2} + 
    \frac{\partial f}{\partial n_6} \frac{\partial n_6}{\partial n_2} = n_4 + 1 = 10$
    \item $\displaystyle\frac{\partial f}{\partial x} = \frac{\partial f}{\partial n_1} = 
    \frac{\partial f}{\partial n_5} \frac{\partial n_5}{\partial n_1} + 
    \frac{\partial f}{\partial n_5} \frac{\partial n_5}{\partial n_1} = n_1 \cdot n_4 + n_1 \cdot n_4 = 3 \cdot 4 + 3 \cdot 4 = 24$.
\end{itemize}

This code has the advantage that all partial derivatives are computed at once which allows computing the values for a very high number of trainable parameters with less computational cost.

\subsubsection{Complex reverse-mode automatic differentiation}

In the following complex example, we will compute the reverse automatic differentiation on a complex multiplication operation $f = |(a+\im\,b)(c+\im\,d)| = a\,c - b\,d + a\,d + b\,c$, we know by definition that the derivative, using \textit{Wirtinger Calculus}, is:
\begin{align}
    \label{eq:partials-wirtinger-def-reverse}
    \frac{\partial f}{\partial (a + \im\,b)} &= \frac{\partial f}{\partial a} + \im\, \frac{\partial f}{\partial b} = (c + d) + \im\, (c - d) \, ,\\
    \frac{\partial f}{\partial (c + \im\,d)} &= \frac{\partial f}{\partial c} + \im\, \frac{\partial f}{\partial d} = (a + b) + \im\, (a - b) \, .
\end{align}

The following Figure \ref{fig:complex-reverse-mode-autodiff} shows the block diagram of $f$.

\begin{figure}[ht]
\centering
    \includegraphics[width=0.7\linewidth]{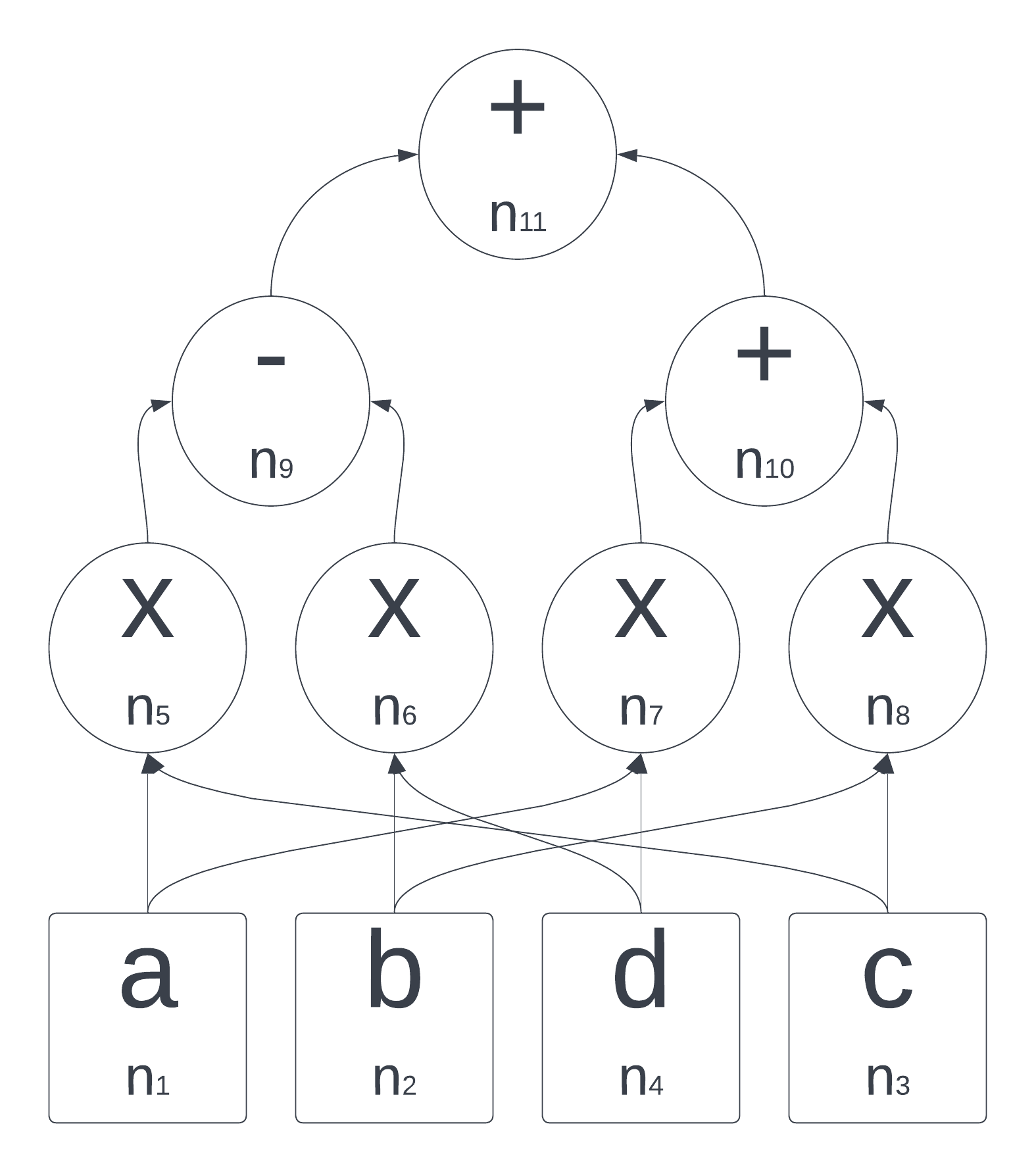}
    \caption{Complex reverse-mode autodiff block diagram example}
    \label{fig:complex-reverse-mode-autodiff}
\end{figure}

Using the block diagram of Figure \ref{fig:complex-reverse-mode-autodiff} we can compute the nodes as follows:
\begin{itemize}
    \item $n_1 = a$,
    \item $n_2 = b$,
    \item $n_3 = c$,
    \item $n_4 = d$,
    \item $n_5 = n_1 \cdot n_3 = a \cdot c$,
    \item $n_6 = n_2 \cdot n_4 = b \cdot d$,
    \item $n_7 = n_1 \cdot n_4 = a \cdot d$,
    \item $n_8 = n_2 \cdot n_3 = b \cdot c$,
    \item $n_9 = n_5 - n_6 = a\,c - b\,d$,
    \item $n_{10} = n_7 + n_8 = a\,d + b\,c$,
    \item $f = n_{11} = n_{10} + n_9 = a\,c - b\,d + a\,d + b\,c$.
\end{itemize}

We now perform the reverse-mode backpropagation starting from the last node ($n_{11}$) and go back using the previously computed values to extract the partial derivative of $f = n_{11}$ with respect to every node.
\begin{itemize}
    \item $\displaystyle\frac{\partial f}{\partial n_{11}} = \frac{\partial n_{11}}{\partial n_{11}} = 1$,
    \item $\displaystyle\frac{\partial n_{11}}{\partial n_{10}} = 1$,
    \item $\displaystyle\frac{\partial n_{11}}{\partial n_{9}} = 1$,
    \item $\displaystyle\frac{\partial n_{11}}{\partial n_{8}} = \frac{\partial n_{11}}{\partial n_{10}} \frac{\partial n_{10}}{\partial n_{8}} = 1$,
    \item $\displaystyle\frac{\partial n_{11}}{\partial n_{7}} = \frac{\partial n_{11}}{\partial n_{10}} \frac{\partial n_{10}}{\partial n_{7}} = 1$,
    \item $\displaystyle\frac{\partial n_{11}}{\partial n_{6}} = \frac{\partial n_{11}}{\partial n_{9}} \frac{\partial n_{9}}{\partial n_{6}} = -1$,
    \item $\displaystyle\frac{\partial n_{11}}{\partial n_{5}} = \frac{\partial n_{11}}{\partial n_{9}} \frac{\partial n_{9}}{\partial n_{5}} = 1$,
    \item $\displaystyle\frac{\partial f}{\partial d} = \frac{\partial n_{11}}{\partial n_{4}} = 
    \frac{\partial n_{11}}{\partial n_{7}}\frac{\partial n_{7}}{\partial n_{4}} + 
    \frac{\partial n_{11}}{\partial n_{6}}\frac{\partial n_{6}}{\partial n_{4}} = 
    n_1 - n_2 = a - b$,
    \item $\displaystyle\frac{\partial f}{\partial c} = \frac{\partial n_{11}}{\partial n_{3}} =
    \frac{\partial n_{11}}{\partial n_{8}}\frac{\partial n_{8}}{\partial n_{3}} + 
    \frac{\partial n_{11}}{\partial n_{5}}\frac{\partial n_{5}}{\partial n_{3}} =
    n_2 + n_1 = b + a$,
    \item $\displaystyle\frac{\partial f}{\partial b} = \frac{\partial n_{11}}{\partial n_{2}} = 
    \frac{\partial n_{11}}{\partial n_{6}}\frac{\partial n_{6}}{\partial n_{2}} +
    \frac{\partial n_{11}}{\partial n_{8}}\frac{\partial n_{8}}{\partial n_{2}} =
    n_3 - n_4 = c - d$,
    \item $\displaystyle\frac{\partial f}{\partial a} = \frac{\partial n_{11}}{\partial n_{1}} = 
    \frac{\partial n_{11}}{\partial n_{5}}\frac{\partial n_{5}}{\partial n_{1}} +
    \frac{\partial n_{7}}{\partial n_{1}}\frac{\partial n_{7}}{\partial n_{1}} =
    n_3 + n_4 = c + d$.
\end{itemize}

Now, if we consider $\partial f / \partial (a + \im\,b) = \partial f / \partial a + \im\, \partial f / \partial b$ and replace the values we obtained in the previous list, we get the same result as in Equation \ref{eq:partials-wirtinger-def-reverse}.

%Bibliography
\bibliographystyle{unsrt}  
\bibliography{references}

\end{document}